\renewcommand{\epsilon}{\varepsilon}
\renewcommand{\phi}{\varphi}
\theoremstyle{definition}
\newtheorem{definition}{Definition}
\theoremstyle{definition}
\theoremstyle{definition}
\theoremstyle{plain}
\newtheorem{theorem}{Theorem}
\theoremstyle{plain}
\newtheorem{assumption}{Assumption}
\theoremstyle{plain}
\newtheorem{proposition}{Proposition}
\theoremstyle{plain}
\newtheorem{lemma}{Lemma}
\theoremstyle{plain}
\theoremstyle{definition}
\newcommand{\R}{\mathbb{R}}
\newcommand{\N}{\mathbb{N}}
\newcommand{\E}{\mathbb{E}}
\newcommand{\h}{\mathcal{H}}
\newcommand{\scal}[1]{\left\langle #1\right\rangle}
\newcommand{\norm}[1]{\left\| #1 \right\| }
\newcommand{\I}{\mathrm{I}}
\newcommand{\Tr}{\operatorname{Tr}}
\newcommand{\Tx}{\hat{\Sigma}}
\newcommand{\Txl}{\hat{\Sigma}_{\lambda}}
\newcommand{\y}{\textbf{y}}
\newcommand{\x}{\operatorname{X}}
\newcommand{\X}{\mathcal{X}}
\DeclareMathOperator*{\argmin}{arg\,min}
\title{\textbf{Implicit Regularization of Accelerated Methods in Hilbert Spaces}}
\author[1,5]{\small{\large Nicolò Pagliana} \thanks{pagliana@dima.unige.it}}
\author[2,3,4,5]{{\large Lorenzo Rosasco}\thanks{lrosasco@mit.edu}}
\affil[1]{Università degli Studi di Genova - DIMA}
\affil[2]{Università degli Studi di Genova - DIBRIS}
\affil[3]{Massachusetts Institute of Technology - MIT}
\affil[4]{Istituto Italiano di Tecnologia - IIT}
\affil[5]{Machine Learning Genoa Center - MaLGa}
\date{}
\begin{document}
\maketitle

\begin{abstract}
	We study learning properties of accelerated gradient descent methods for linear least-squares in Hilbert spaces. We analyze the implicit regularization properties of Nesterov acceleration and a variant of heavy-ball in terms of corresponding learning error bounds.
	Our results show that acceleration can provides faster bias decay  than gradient descent, but also suffers of a more unstable behavior. As a result acceleration cannot be in general expected to improve learning accuracy with respect to gradient descent, but rather to achieve the same accuracy with reduced computations. Our theoretical results are validated by numerical simulations. Our analysis is based on studying suitable polynomials induced by the accelerated dynamics and combining spectral techniques with concentration inequalities.
\end{abstract}

\section{Introduction}
The focus on optimization is a major trend in modern machine learning, where efficiency  is mandatory in large scale problems \cite{bottou2008tradeoffs}.  Among other solutions,  first order methods have emerged as methods of choice. While these techniques are known to have potentially slow convergence guarantees,  they also have low iteration costs,  ideal in large scale problems. Consequently the question of accelerating first order methods while keeping their small iteration costs have received much attention, see e.g.  \cite{zhang2016understanding}.  Since machine learning solutions are typically derived minimizing an empirical objective (the training error), most theoretical studies have focused on the error estimated for  this latter quantity.
However, it has recently become clear that optimization can play a key role
from a statistical point of view when the goal is to minimize the expected  (test) error. 
On the one hand,  iterative optimization  implicitly bias the search for a solution, e.g. converging to suitable minimal norm solutions \cite{soudry2018implicit}.
On the other hand,  the number of iterations   parameterize  paths of solutions of different  complexity \cite{yao2007early}. 

The idea that optimization can implicitly perform regularization has a long history. In the context of linear inverse problems, it  is known as \textit{iterative regularization} \cite{engl1996regularization}. It is also an old  trick for  training neural networks where it is called early stopping \cite{lecun2012efficient}. The question of understanding the generalization properties of  deep learning applications has recently sparked a lot of attention on this approach,  which has  be referred to as implicit regularization, see e.g. \cite{gunasekar2018implicit}. Establishing the regularization properties of iterative optimization requires the study of the corresponding expected  error by combining  optimization and statistical tools. First results in this sense focused on linear least squares with gradient descent and go back to \cite{buhlmann2003boosting,yao2007early}, see also \cite{raskutti2014early} and references there in for improvements. Subsequent works have started   considering other loss functions \cite{lin2016generalization}, multi-linear models \cite{gunasekar2018implicit} and other optimization methods, e.g. stochastic approaches \cite{rosasco2015learning,lin2016optimal,hardt2015train}. 

In this paper, we consider the implicit regularization properties  of acceleration. We focus on linear least squares in Hilbert space, because this setting allows to derive sharp results and working in infinite dimension magnify the role of regularization. Unlike in finite dimension learning bounds 
are possible only if some form of regularization is considered. In particular, we consider two of the most popular  accelerated gradient approaches, based on Nesterov acceleration \cite{nesterov1983method} and (a variant of)
the heavy-ball method \cite{polyak1987introduction}. 
Both methods achieve acceleration by exploiting a so called momentum term, 
which uses not only the previous, but  the previous two iterations at each step. Considering a suitable bias-variance decomposition,  our results show that accelerated methods have a behavior qualitatively different from basic gradient descent. While the bias decays faster with the number of iterations, the variance increases faster too. The two effect balance out, showing that accelerated methods 
achieve the same optimal statistical accuracy of gradient descent but they can indeed do this with less computations. Our analysis takes advantage of the linear structures induced by least squares to exploit tools from spectral theory. Indeed, the characterization of both  convergence and stability rely on the study  of suitable spectral polynomials defined by the iterates. While the idea that accelerated methods can be more unstable, this has been pointed out in \cite{devolder2014first} in a pure optimization context. Our results quantify this effect from a statistical point of view. Close to our results is the study in \cite{chen2018stability}, where a stability approach is considered to analyze gradient methods for different loss functions \cite{bousquet2002stability}.

%
%
%
%

\section{Learning with (accelerated) gradient methods}\label{setting}

Let the input space $\X$ be a separable Hilbert space (with scalar product $\scal{\cdot,\cdot}$ and induced norm $\norm{\cdot}$) and the output space be  $\R$ \footnote{As shown in Appendix this choice allows to  recover nonparametric kernel learning  as a special case.}.
Let $\rho$ be a unknown probability measure on the input-output space $\X\times\R$, $\rho_\X$ the induced marginal probability on $\X$, and $\rho(\cdot|x)$ the conditional probability measure on $\R$ given  $x\in\X$.
We make the following standard assumption:  there exist  $\kappa>0$ such that 
\begin{equation}\label{assump_1}
\scal{x,x'}\leq \kappa^2 \qquad 
\forall x,x'\in\X,\;\rho_\X\text{-almost surely}.
\end{equation}
The goal of least-squares linear regression is to solve the \textit{expected risk minimization} problem
\begin{equation}\label{probl}
\inf_{w\in\X} \, \mathcal{E}(w),
\quad
\mathcal{E}(w)=\int_{\X\times\R} (\scal{w,x}-y)^2 \,d\rho(x,y),
\end{equation}
where  $\rho$ is known only through the $n$ i.i.d. samples $(x_1,y_1),\dots,(x_n,y_n)$.
In the following, we measure the quality of an approximate solution $\hat{w}$ with the excess risk 
$$
\mathcal{E}(\hat{w})-\inf_\X \mathcal{E}\;.
$$

The search of a solution is often based on  replacing~\eqref{probl}  with the empirical risk minimization (ERM)
\begin{equation}\label{erm}
\min_{w\in\X} 
\hat{\mathcal{E}}(w), \qquad
\hat{\mathcal{E}}(w)=\frac{1}{n}\sum_{i=1}^n
(\scal{w,x_i}-y_i)^2\;.
\end{equation}
For least squares an ERM solution can be computed in closed form using a direct solver. However, for large problems, iterative solvers are preferable and we  next describe the approaches we consider.

First, it is useful to rewrite the ERM with  vectors notation.  Let $\textbf{y}\in \R^n$ with $(\textbf{y})_i=y_i$ and  $\x:\X\to\R^n$ s.t. $(\x w)_i=\scal{w,x_i}$ for $i=1\dots,n$.  Here the norm $\norm{\cdot}_n$  is  norm in $\R^n$ multiplied by $1/\sqrt{n}$. Let  $\x^*:\R^n\to\X$ be the adjoint of $\x$ defined by  $\x^*\y=\frac{1}{n} \sum_{i=1}^n x_iy_i$. Then,  ERM becomes 
\begin{equation}\label{ermls}
\min_{w\in \X} \hat{\mathcal{E}}(w)=\norm{\x w-\y}_n^2\;.
\end{equation}

\subsection{Gradient descent and accelerated methods}
\label{algorithms}

Gradient descent serves as a reference approach throughout the paper. For problem~\eqref{ermls} it  becomes 
\begin{equation} \label{gradient_descent}
\hat{w}_{t+1}=\;
\hat{w}_t-\alpha
\x^*\left(\x\hat{w}_t-\y\right)
\end{equation}
with initial point $\hat{w_0}=0$ and the step-size $\alpha$ that satisfy $\alpha<\frac{1}{\kappa^2}$
\footnote{
	The step-size $\alpha$ is the step-size at the $t$-th iteration and satisfies the condition 
	$
	0<\alpha\norm{\x}_{op}^2<1\;,
	$
	where $\norm{\cdot}_{op}$ denotes the operatorial norm.
	Since the operator $\x$ is bounded by $\kappa$ (which means $\norm{\x}_{op}\leq\kappa$) it is sufficient to assume $\alpha<\frac{1}{\kappa^2}$.}
The progress made by gradient descent at each iteration can be slow 
and the idea behind acceleration is to use the information of the previous directions in order to improves the convergence rate of the algorithm. 

\subsubsection*{Heavy-ball}
Heavy-ball is a  popular accelerated method that  adds the \textit{momentum}  $\hat{w}_t-\hat{w}_{t-1}$ at each iteration
\begin{equation}\label{H-B_algo}
\hat{w}_{t+1}=\hat{w}_t-\alpha \x^*\left(\x\hat{w}_t-\y\right)
+\beta (\hat{w}_{t}-\hat{w}_{t-1})
\end{equation}
with $\alpha,\,\beta\geq0$,
the case $\beta=0$  reduces to gradient descent. In the  quadratic case we consider it is also called Chebyshev iterative method. The optimization properties of heavy-ball have been  studied  extensively \cite{polyak1987introduction,zavriev1993heavy}. 
Here, we consider the following variant. Let $\nu>1$, consider the varying parameter heavy-ball replacing $\alpha,\beta$ in \eqref{H-B_algo} with $\alpha_{t+1},\beta_{t+1}$ defined as:
{\small
$$
\alpha_t=\frac{4}{\kappa^2}
\frac{(2t+2\nu-1)(t+\nu-1)}
{(t+2\nu-1)(2t+4\nu-1)}\qquad
\beta_t=
\frac{(t-1)(2t-3)(2t+2\nu-1)}
{(t+2\nu-1)(2t+4\nu-1)(2t+2\nu-3)}
\;,
$$}
for $t>0$ and with initialization $\hat{w}_{-1}=\hat{w}_0=0,\;\alpha_1=\frac{1}{\kappa^2}\frac{4\nu+2}{4\nu+1},\;\beta_1=0$.
With this choice and considering the least-squares problem this algorithm is known as $\nu-$method in the inverse problem literature (see e.g. \cite{engl1996regularization}).
This seemingly complex  parameters' choice allows to relates the approach to suitable orthogonal polynomials recursion as we discuss later. 
%
\subsubsection*{Nesterov acceleration}
The second form of gradient acceleration we consider is the popular Nesterov acceleration \cite{nesterov1983method}. In our setting, it corresponds to the  iteration 
\begin{align}\label{nesterov_algo}
\hat{w}_{t+1}=\;\hat{v}_t-\alpha \x^*\left(\x\hat{v}_t-\y\right) ,\qquad
\hat{v}_t=\;\hat{w}_t+\beta_t\left(\hat{w}_t-\hat{w}_{t-1}\right)
\end{align}
with the two initial points $\hat{w}_{-1}=\hat{w}_0=0$, and the sequence $\beta_t$   chosen as
\begin{equation}\label{beta_t}
\beta_t=\frac{t-1}{t+\beta}\;, \quad \beta\geq1\;.
\end{equation}
Differently from heavy-ball,  Nesterov acceleration uses the momentum term also   in the evaluation of the gradient. Also in this case optimization results are well known \cite{attouch2019rate,su2014differential}.


Here, as above, optimization results refer to solving ERM~\eqref{erm},~\eqref{ermls}, whereas in the following we study to which extent the above iterations can used to minimize the expected error~\eqref{probl}. In the next section, we discuss a spectral approach which will be  instrumental towards  this goal.

\section{Spectral filtering for accelerated methods}

Least squares allows to consider spectral approaches to study the properties
of  gradient methods for learning.  We illustrate these ideas for gradient descent before considering accelerated methods.

\subsection*{Gradient descent as spectral filtering}
Note that by a simple (and classical) induction argument,  gradient descent  can be written as 
$$
\hat{w}_t=\alpha \sum_{j=0}^{t-1} (\I-\alpha\hat{\Sigma})^j \x^*\y
\;.
$$
Equivalently using spectral calculus 
$$
\hat{w}_t=g_t(\hat{\Sigma})\x^*\y\;,
\qquad
\text{with}
\quad
\hat{\Sigma}=\x^*\x,
$$
where  $g_t$ are the polynomials 
$
g_t(\sigma)=\alpha \sum_{j=0}^{t-1} (\I-\alpha\sigma)^j
$
for all
$\sigma\in(0,\kappa^2]$ and $t\in\N$. Note that, the polynomials $g_t$ are bounded by $\alpha t$.
A first observation is that   $g_t(\sigma) \sigma$ converges to $1$ as  $t\to\infty$,  since  $g_t(\sigma)$ converges to $\frac{1}{\sigma}$.
A second observation  is that the \textit{residual polynomials} $r_t(\sigma)=1-\sigma g_t(\sigma)$, which are all bounded by $1$,  control ERM convergence since,
{\small
$$
\norm{\x\hat{w}_t-\y}_n=
\norm{\x g_t(\hat{\Sigma})\x^*\y-\y}_n=
\norm{g_t(\hat{\Sigma})\hat{\Sigma}\y-\y}_n=
\norm{r_t(\hat{\Sigma})\y}_n
\leq
\norm{r_t(\hat{\Sigma})}_{op}\norm{\y}_n\;.
$$}
In particular, if $\y$ is in the range of $\hat{\Sigma}^r$ for some $r>0$ (source condition on $\y$)  improved  convergence rates can be derived noting that  by an easy calculation 
$$
|r_t(\sigma)\sigma^q|\leq
\left( \frac{q}{\alpha}\right)^q 
\left(\frac{1}{t} \right)^q.
$$
As we show in Section~\ref{optimization to learning},  considering the polynomials $g_t$ and $r_t$  allows to study not only ERM but also  expected risk minimization~\eqref{probl}, by relating gradient methods to their infinite sample limit. Further, we show how similar reasoning hold for accelerated methods. In order to do so, 
it useful to first define the characterizing properties of $g_t$ and $r_t$.


%

\subsection{Spectral filtering}

The following definition  abstracts the key properties of the function  $g_t$ and $r_t$ often called spectral filtering function \cite{baldassarre2012multi}. Following the classical definition we replace $t$ with a generic parameter $\lambda$. 
\begin{definition}\label{reg_functions}\mbox{}\\
The family $\{g_\lambda\}_{\lambda\in(0,1]}$ is called \textit{spectral filtering function} if the following conditions hold:
	\begin{description}
		\item[(i)] There exist a constant $E<+\infty$ such that,  for any $\lambda\in(0,1]$
		\begin{equation}\label{reg_functions_1}
		\sup_{\sigma\in(0,\kappa^2]} |g_\lambda(\sigma)|\leq
		\frac{E}{\lambda}\;.
		\end{equation}
		\item[(ii)] Let 
		$
		r_\lambda(\sigma)=1-\sigma\,g_\lambda(\sigma)
		$
		there exist a constant $F_0$ such that, for any $\lambda\in(0,1]$
		\begin{equation}\label{reg_functions_2}
		\sup_{\sigma\in(0,\kappa^2]} |r_\lambda(\sigma)|\leq
		F_0\;.
		\end{equation}
	\end{description} 
\end{definition}

\begin{definition}(Qualification)\mbox{}\\\label{def_qualification}
	The qualification of the spectral filtering function $\{g_\lambda\}_\lambda$ is the maximum parameter $q$ such that for any $\lambda\in(0,1]$
	there exist a constant $F_q$ such that
	\begin{equation}\label{qualification}
	\sup_{\sigma\in(0,\kappa^2]} |r_\lambda(\sigma)\sigma^q|\leq F_q \lambda^q\;.
	\end{equation}
	Moreover we say that a filtering function has qualification $\infty$ if \eqref{qualification} holds for every $q>0$.
\end{definition}
Methods with finite qualification might have slow convergence rates in certain regimes. 
The smallest the qualification the worse the rates can be. 

The discussion in the previous section shows that gradient descent defines a spectral filtering function where $\lambda=1/t$. More precisely,  the following  holds.
\begin{proposition}\label{prop_grad_desc}
	Assume $\lambda=\frac{1}{t}\;$ for $t\in\N$, then
	the polynomials $g_t$ related to the gradient descent iterates, defined in \eqref{gradient_descent}, are a filtering function with parameters $E=\alpha$ and $F_0=1$.
	Moreover it has qualification $\infty$ with parameters $F_q=(q/\alpha)^q$.
\end{proposition}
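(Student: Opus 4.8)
The plan is to use the explicit geometric-series form of the gradient-descent polynomials, for which all three assertions reduce to elementary one-dimensional estimates. Recall from the discussion preceding the proposition that, by the classical induction argument, $\hat w_t = g_t(\hat{\Sigma})\x^*\y$ with $g_t(\sigma)=\alpha\sum_{j=0}^{t-1}(1-\alpha\sigma)^j$. First I would record the one structural fact that is actually used: the step-size condition $\alpha<1/\kappa^2$ forces $0\le 1-\alpha\sigma<1$ for every $\sigma\in(0,\kappa^2]$. Summing the geometric series then gives the closed forms
\begin{equation*}
g_t(\sigma)=\frac{1-(1-\alpha\sigma)^t}{\sigma},\qquad r_t(\sigma)=1-\sigma\, g_t(\sigma)=(1-\alpha\sigma)^t .
\end{equation*}

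For condition (i) of Definition~\ref{reg_functions}, I would bound each of the $t$ summands $(1-\alpha\sigma)^j$ by $1$, so that $0\le g_t(\sigma)=\alpha\sum_{j=0}^{t-1}(1-\alpha\sigma)^j\le \alpha t = E/\lambda$ with $\lambda=1/t$ and $E=\alpha$. For condition (ii), the closed form $r_t(\sigma)=(1-\alpha\sigma)^t$ together with $0\le 1-\alpha\sigma<1$ immediately gives $|r_t(\sigma)|\le 1$, i.e. $F_0=1$.

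It remains to verify the qualification claim~\eqref{qualification}, namely $\sup_{\sigma\in(0,\kappa^2]} (1-\alpha\sigma)^t\sigma^q\le (q/\alpha)^q (1/t)^q$ for every $q>0$. Using the elementary inequality $1-u\le e^{-u}$ with $u=\alpha\sigma$, it suffices to maximize the map $\sigma\mapsto e^{-\alpha t\sigma}\sigma^q$ over $\sigma>0$. Differentiating, the unique maximizer is $\sigma^\star=q/(\alpha t)$, with value $e^{-q}\,(q/(\alpha t))^q\le (q/\alpha)^q(1/t)^q$. Hence \eqref{qualification} holds with $F_q=(q/\alpha)^q$ for every $q>0$, so the qualification is $\infty$.

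The computations are all routine; the only point requiring a little care is the qualification step, specifically that one is allowed to take the supremum over all $\sigma>0$ rather than just over $(0,\kappa^2]$, so that the interior critical point $\sigma^\star$ is admissible. This enlargement is harmless since it can only increase the quantity being bounded; alternatively, if $\sigma^\star>\kappa^2$ the constrained maximum is attained at the endpoint $\sigma=\kappa^2$ and is strictly smaller, so the stated bound still holds in that case as well.
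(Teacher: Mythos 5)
Your proof is correct and follows essentially the same route as the paper: bound $g_t$ by $\alpha t$ termwise, note $r_t(\sigma)=(1-\alpha\sigma)^t\le 1$, and control $r_t(\sigma)\sigma^q$ by an elementary one-dimensional maximization. The only cosmetic difference is that you pass through $1-u\le e^{-u}$ and maximize $e^{-\alpha t\sigma}\sigma^q$, whereas the paper locates the critical point of $(1-\alpha\sigma)^t\sigma^q$ directly at $\sigma=\frac{1}{\alpha}\frac{q}{t+q}$; both yield $F_q=(q/\alpha)^q$, and your remark about the supremum over $(0,\kappa^2]$ versus $\sigma>0$ is a harmless (and correct) extra precaution.
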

The above result is classical and we report a proof in the appendix for completeness. 
Next, we discuss analogous results for accelerate methods and then compare the different spectral filtering functions. 

\subsection{Spectral filtering for accelerated methods}

For the  heavy-ball~\eqref{H-B_algo} the following result holds
 \begin{proposition} \label{prop_nu_meth}
	Assume $\kappa\leq1$, let $\nu>0$ and $\lambda=\frac{1}{t^2}$ for $t\in\N$, then the polynomials $g_t$ related to heavy-ball method \eqref{H-B_algo} are a filtering function with parameters $E=2$ and $F_0=1$. Moreover there exist a positive constant $c_\nu<+\infty$ such that the $\nu$-method has qualification $\nu$.
\end{proposition}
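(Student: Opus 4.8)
The plan is to reduce everything, exactly as for gradient descent, to uniform estimates on the residual polynomials $r_t(\sigma)=1-\sigma g_t(\sigma)$. First I would record, by a straightforward induction on the heavy-ball recursion~\eqref{H-B_algo} with the stated $\alpha_t,\beta_t$, that $\hat{w}_t=g_t(\hat{\Sigma})\x^*\y$ where
\[
g_{t+1}(\sigma)=(1+\beta_{t+1}-\alpha_{t+1}\sigma)\,g_t(\sigma)-\beta_{t+1}\,g_{t-1}(\sigma)+\alpha_{t+1},\qquad g_0\equiv0,\ \ g_1\equiv\alpha_1,
\]
and then substitute $\sigma g_t=1-r_t$: the inhomogeneous term $\alpha_{t+1}$ cancels and one is left with the homogeneous three-term recurrence
\[
r_{t+1}(\sigma)=(1+\beta_{t+1}-\alpha_{t+1}\sigma)\,r_t(\sigma)-\beta_{t+1}\,r_{t-1}(\sigma),\qquad r_0\equiv1,\ \ r_1(\sigma)=1-\alpha_1\sigma .
\]
In particular $r_t(0)=1$ for all $t$, so $g_t=(1-r_t)/\sigma$ is a genuine polynomial, and the whole statement reduces to bounding $|r_t|$ and $|\sigma^q r_t|$ on $(0,\kappa^2]$.

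The second step is the identification that makes the ostensibly baroque choice of $\alpha_t,\beta_t$ worthwhile. After the affine substitution $x=1-2\sigma/\kappa^2$, which maps $(0,\kappa^2]$ onto $[-1,1)$, the recurrence above is — once normalised by $r_t(0)=1$ — exactly the three-term recurrence of the Jacobi polynomials $P_t^{(\mu_\nu,\beta_\nu)}$ for parameters $\mu_\nu,\beta_\nu$ depending only on $\nu$; equivalently, $r_t$ is the unique polynomial of degree $\le t$ with $r_t(0)=1$ orthogonal to all lower-degree polynomials for the weight $\sigma^{2\nu}$ on $(0,\kappa^2]$. This is the classical description of the $\nu$-method in the inverse-problems literature, and for the coefficient matching (including the $t=1$ initialisation) and the resulting closed form $r_t(\sigma)=P_t^{(\mu_\nu,\beta_\nu)}(1-2\sigma/\kappa^2)\,/\,P_t^{(\mu_\nu,\beta_\nu)}(1)$ I would invoke \cite{engl1996regularization}.

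From there the three conclusions follow from standard Jacobi-polynomial estimates. The bound $F_0=1$ is immediate: the maximum of $|P_t^{(\mu_\nu,\beta_\nu)}|$ on $[-1,1]$ sits at the endpoint $x=1$, so $|r_t(\sigma)|\le1$ on $(0,\kappa^2]$. The bound $E=2$ is the $t^2$-acceleration: from $1-r_t(\sigma)=-\int_0^\sigma r_t'(u)\,du$ one gets $|g_t(\sigma)|\le\sup_{[0,\kappa^2]}|r_t'|$, and bounding $r_t'$ directly from the recurrence (its extreme value being $-r_t'(0)=g_t(0)$) with the explicit $\alpha_t,\beta_t$ and $\kappa\le1$ gives $\sup_\sigma|g_t(\sigma)|\le2t^2$; with $\lambda=1/t^2$ this is precisely conditions (i)--(ii) of Definition~\ref{reg_functions}. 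For the qualification one needs $\sup_{\sigma\in(0,\kappa^2]}|\sigma^q r_t(\sigma)|\le c_\nu\,t^{-2q}=c_\nu\lambda^q$ for $q\le\nu$: split $(0,\kappa^2]$ at $\sigma\asymp t^{-2}$, use $|r_t|\le1$ and $\sigma^q\lesssim t^{-2q}$ on the small piece, and on the remainder feed the pointwise asymptotics of $P_t^{(\mu_\nu,\beta_\nu)}(\cos\theta)$ in the regime $\theta\gtrsim1/t$ into the closed form of $r_t$, the endpoint growth $P_t^{(\mu_\nu,\beta_\nu)}(1)\asymp t^{\mu_\nu}$ producing exactly the decay $t^{-2q}$ up to $q=\nu$.

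The routine parts are Step 1 and the $E$, $F_0$ bounds. The crux is the qualification estimate — obtaining the \emph{sharp} decay $\sigma^\nu r_t(\sigma)=O(t^{-2\nu})$ uniformly right up to the hard endpoint $\sigma=0$ (that is, $x=1$), where $r_t\to1$ while the weight $\sigma^\nu\to0$, which forces one to control $P_t^{(\mu_\nu,\beta_\nu)}$ precisely in the transition regime $\theta\asymp1/t$ without shedding polynomial or logarithmic factors. I expect to take those asymptotics — and the verification that $\nu$ is \emph{exactly} the qualification, the estimate failing for $q>\nu$ because of the weight $\sigma^{2\nu}$ — from the inverse-problems literature rather than reprove them here.
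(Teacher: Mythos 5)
Your proposal follows essentially the same route as the paper: reduce everything to the residual polynomials $r_t$, identify them (via \cite{engl1996regularization}) as normalized shifted Jacobi polynomials so that $|r_t|\le 1$ and $|\sigma^\nu r_t(\sigma)|\le c_\nu t^{-2\nu}$ are imported from the inverse-problems literature, and deduce $|g_t|\le 2t^2$ from $|r_t|\le 1$ through a bound on $r_t'$ via the mean value theorem. The one divergence is in that last step: the paper obtains $\sup|r_t'|\le 2t^2\sup|r_t|$ directly from Markov's inequality for polynomials of degree $t$ on $[0,1]$, which needs nothing beyond $|r_t|\le1$, whereas your claim that the extremum of $r_t'$ sits at $\sigma=0$ (so that $\sup|r_t'|=g_t(0)$) is an additional, true but unproved, property of the Jacobi family that you would still have to justify.
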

The proof of the above proposition follows combining several intermediate results from  \cite{engl1996regularization}. The key idea is to show that the residual polynomials 
defined by  heavy-ball iteration  form a sequence of orthogonal polynomials with respect to the weight function
$$
\omega_\nu(\sigma)=
\frac{\sigma^{2\nu}}{\sigma^{\frac{1}{2}}\left( 1-\sigma\right)^{\frac{1}{2}}}\;,
$$  
which is a so called shifted Jacobi weight.  Results from orthogonal polynomials can then be used to characterize the corresponding spectral filtering function. \\
The following proposition considers Nesterov acceleration. 
\begin{proposition} \label{prop_nesterov_filter}
	Assume $\lambda=1/t^2$, then the polynomials $g_t$ related to Nesterov iterates \eqref{nesterov_algo} are a filtering function with constants $E=2\alpha$ and $F_0=1$. Moreover the qualification of this method is at least $1/2$ with constants 
	$F_q=\left( \frac{\beta^{2}}{\alpha}\right)^q $.
\end{proposition}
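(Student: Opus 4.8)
The plan is to pass to the spectral representation $\hat{w}_t=g_t(\hat{\Sigma})\x^*\y$, $\hat{\Sigma}=\x^*\x$, and to read off from \eqref{nesterov_algo} the recursion obeyed by the scalar polynomials $g_t$ and by their residuals $r_t(\sigma)=1-\sigma g_t(\sigma)$. Substituting $\hat{w}_t=g_t(\hat{\Sigma})\x^*\y$ and $\hat{v}_t=(1+\beta_t)\hat{w}_t-\beta_t\hat{w}_{t-1}$ into \eqref{nesterov_algo} and comparing coefficients as operators on the spectrum of $\hat{\Sigma}$ gives, for $t\ge 1$ and $\sigma\in(0,\kappa^2]$,
\[
g_{t+1}(\sigma)=(1-\alpha\sigma)\big[(1+\beta_t)\,g_t(\sigma)-\beta_t\,g_{t-1}(\sigma)\big]+\alpha,\qquad g_0\equiv 0,\ g_1\equiv\alpha,
\]
and a direct manipulation turns this into the cleaner three–term recursion for the residuals
\[
r_{t+1}(\sigma)=(1-\alpha\sigma)\big[(1+\beta_t)\,r_t(\sigma)-\beta_t\,r_{t-1}(\sigma)\big],\qquad r_0\equiv 1,\ r_1(\sigma)=1-\alpha\sigma .
\]
Since $0<\alpha<\kappa^{-2}$, on $(0,\kappa^2]$ we have $u:=1-\alpha\sigma\in(0,1)$, which is used throughout; writing $\delta_t=r_t-r_{t-1}$, the residual recursion reads $\delta_{t+1}=-\alpha\sigma\,r_t+u\beta_t\,\delta_t$, the discrete analogue of the vanishing–friction oscillator $\ddot r+\frac{\beta+1}{t}\dot r+\sigma r=0$ attached to Nesterov's scheme.

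The crux of the proof is the pair of uniform bounds $\sup_{\sigma}|r_t(\sigma)|\le 1$ and $\sup_{\sigma}\sqrt{\sigma}\,|r_t(\sigma)|\le \beta/(\sqrt{\alpha}\,t)$. I would prove both at once by a discrete Lyapunov (estimate–sequence) argument, the polynomial counterpart of the energy $\tfrac12\dot r^2+\tfrac12\sigma r^2$ that the continuous oscillator dissipates: one constructs weights $a_t,b_t>0$ (carrying the $t^2$ scaling of the Su–Boyd–Candès energy \cite{su2014differential}) so that $V_t:=a_t\,\delta_t^2+b_t\,\sigma\,r_t^2$, possibly with a cross term, is non-increasing in $t$; telescoping from $V_1$ then gives $\sigma\,r_t(\sigma)^2\le V_1/b_t$, which simultaneously yields the $t^{-2}$ decay with the constant $\beta^2/\alpha$ and, on letting $\sigma\to 0$ (where $r_t\to 1$ while $V_1\le b_1\sigma$), the bound $|r_t|\le 1$. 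The tight regime is $\sigma\asymp t^{-2}$, where $r_t(\sigma)$ is close to $1$ and the energy inequality carries no slack; calibrating $a_t,b_t$ so that $V_t$ is genuinely non-increasing there (with the time-varying damping $\beta_t=(t-1)/(t+\beta)$) is the step I expect to be the main obstacle.

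Everything else is bookkeeping on the two recursions. For condition (i), rearrange the $g_t$–recursion with $e_t:=g_t-g_{t-1}$ to get $e_{t+1}=u\beta_t\,e_t+\alpha\,r_t$ with $e_1=\alpha$; since $u\beta_t\in[0,1)$ and $|r_t|\le 1$, a one–line induction gives $|e_t|\le\alpha t$, hence $\sup_\sigma|g_t(\sigma)|\le\sum_{s=1}^t|e_s|\le\alpha\,t(t+1)/2\le\alpha t^2$, which with $\lambda=1/t^2$ is $\le 2\alpha/\lambda$, so \eqref{reg_functions_1} holds with $E=2\alpha$. Condition (ii) with $F_0=1$ is precisely the first bound of Step 1. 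For the qualification, fix $0<q\le 1/2$ and interpolate the two bounds of Step 1:
\[
|r_t(\sigma)\,\sigma^{q}|=\big(\sqrt{\sigma}\,|r_t(\sigma)|\big)^{2q}\,|r_t(\sigma)|^{1-2q}\le\Big(\frac{\beta}{\sqrt{\alpha}\,t}\Big)^{2q}=\Big(\frac{\beta^{2}}{\alpha}\Big)^{q}\Big(\frac{1}{t^{2}}\Big)^{q}=\Big(\frac{\beta^{2}}{\alpha}\Big)^{q}\lambda^{q},
\]
which is \eqref{qualification} with $F_q=(\beta^2/\alpha)^q$; hence the qualification is at least $1/2$, as claimed.
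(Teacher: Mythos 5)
Your overall strategy is the right one and matches the paper's in outline (spectral recursions for $g_t$ and $r_t$, an energy/Lyapunov argument for the residuals, interpolation between the $q=0$ and $q=1/2$ bounds), and your bookkeeping steps are sound: the difference recursion $e_{t+1}=u\beta_t e_t+\alpha r_t$ giving $|g_t|\le \alpha t^2$ is a clean alternative to the paper's route via Markov's inequality for polynomials (Lemma \ref{lemma_engl}), and the interpolation $|r_t(\sigma)\sigma^q|\le(\sqrt{\sigma}|r_t|)^{2q}|r_t|^{1-2q}$ is exactly what the paper does. However, there is a genuine gap: the two central estimates $\sup_\sigma|r_t(\sigma)|\le1$ and $\sqrt{\sigma}\,|r_t(\sigma)|\le \beta/(\sqrt{\alpha}\,t)$, on which everything else (including your bound on $g_t$) depends, are not proved. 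You state that you would construct weights $a_t,b_t$ making $V_t=a_t\delta_t^2+b_t\sigma r_t^2$ non-increasing and you explicitly flag the calibration of these weights against the time-varying damping as ``the main obstacle''; but that calibration is the entire content of the proposition. The paper resolves it by reparametrizing $\beta_t=\frac{\theta_t}{\theta_{t-1}}(1-\theta_{t-1})$ with $\theta_t=\beta/(t+\beta)$, introducing the auxiliary polynomial $R_t=r_{t-1}+\theta_{t-1}^{-1}(r_t-r_{t-1})$, and verifying the exact identity $\frac{\alpha\sigma}{\theta_t^2}r_{t+1}^2+(1-\alpha\sigma)R_{t+1}^2=(1-\alpha\sigma)\bigl[\frac{(1-\theta_t)^2\alpha\sigma}{\theta_t^2}r_t^2+(1-\alpha\sigma)R_t^2\bigr]$ together with $(1-\theta_t)^2/\theta_t^2\le 1/\theta_{t-1}^2$; none of this is routine, and without it the claimed constants $F_0=1$ and $F_q=(\beta^2/\alpha)^q$ are unsubstantiated.

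Moreover, the specific mechanism you propose for extracting $|r_t|\le1$ is internally inconsistent. If $V_t$ were non-increasing with $V_1\le b_1\sigma$ and $b_t\asymp t^2$ (as the $t^{-2}$ decay requires), then $\sigma r_t^2\le V_1/b_t$ would give $r_t^2\le b_1/b_t\to0$ uniformly in $\sigma$, contradicting $r_t(0)=1$. In the paper's argument the initial energy is $O(1)$, not $O(\sigma)$, precisely because of the second component $(1-\alpha\sigma)R_1^2$; the bound $|r_t|\le1$ does not come from the energy inequality at $\sigma\to0$ at all, but from the separate consequence $|R_t|\le1$ combined with the fact that $r_{t+1}$ is $(1-\alpha\sigma)$ times a convex combination of $r_t$ and $R_t$, proved by induction. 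You would need to add this structural step (or an equivalent one) for the proof to close.
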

Filtering properties of the Nesterov iteration \eqref{nesterov_algo} have been studied recently in the context of inverse problems \cite{neubauer2017nesterov}. In the appendix \ref{appendix_nesterov} we provide a simplified proof based on studying the properties of suitable discrete dynamical systems defined by the Nesterov iteration~\eqref{nesterov_algo}. 

\subsection{Comparing the different filter functions}

We summarize the  properties of the spectral filtering function of the various methods for  $\kappa=1$.

\begin{center}
\resizebox{0.7\columnwidth}{!}{%
	\begin{tabular}{c|c|c|c|c}
	\textbf{Method} & $\mathbf{E}$ & $\mathbf{F_0}$ & $\mathbf{F_q}$& \textbf{Qualification} \\ 
	\hline 
	Gradien descent & 1 & 1 & $q^q$& $\infty$ \\ 
	\hline 
	Heavy-ball & 2 & 1 & $c_\nu\;\;\;(q=\nu) $& $\nu$ \\ 
	\hline 
	Nesterov & 2 & 1 &$\beta^{2q}$ & $\geq 1/2$ \\ 
	\hline 
\end{tabular} }
\end{center}
The main observation is that the properties of the spectral filtering functions corresponding to the different iterations depend on $\lambda=1/t$ for gradient descent, but on $\lambda= 1/t^2$ for the accelerated methods. As we see in the next section this leads to substantially different learning properties. 
Further we can see that  gradient descent is the only algorithm with qualification $\infty$, even if the parameter $F_q=q^q$ can be very large. The accelerated methods seem to have smaller qualification. In particular, the  heavy-ball method can attain a  high qualification, depending on $\nu$, but  the constant $c_\nu$ is unknown and could be large. For  Nesterov accelerated method, the qualification is at least $1/2$ and it's an open  question whether this bound is tight or  higher qualification can be attained.

In the next section,  we  show how the  properties of the spectral filtering functions can be exploited to study the excess risk of the corresponding iterations.

\section{Learning properties for accelerated methods}\label{optimization to learning}
We first consider a basic scenario and then a more refined analysis leading to a more general setting and potentially faster learning rates.

\subsection{Attainable case}
Consider the following basic assumption. 
\begin{assumption}\label{assumptio_basic}
	Assume there exist $M>0$ such that $|y|<M$ $\rho$-almost surely and 
 $w^*\in\X$ such that $\mathcal{E}(w^*)=\inf_\X\mathcal{E}$.
\end{assumption}
Then the following result can be derived.

\begin{theorem}\label{basic_result}
	Under Assumption  \ref{assumptio_basic}, let $\hat{w}^{GD}_t$ and $\hat{w}^{acc}_t$ be the $t$-th iterations respectively of gradient descent \eqref{gradient_descent} and an accelerated version given by \eqref{H-B_algo} or \eqref{nesterov_algo}. Assuming the sample-size $n$ to be large enough and
	let $\delta\in(0,1/2)$ then there exist two positive constant $C_1$ and $C_2$ such that with probability at least $1-\delta$
	\begin{align*}
	\mathcal{E}(\hat{w}^{GD}_t)-\inf_\h \mathcal{E}
	&\leq
	C_1
	\left(
	\frac{1}{t} + \frac{t}{n} 
	\right) \log^2 \frac{2}{\delta}
	\\
	\mathcal{E}(\hat{w}^{acc}_t)-\inf_\h \mathcal{E}
	&\leq
	C_2
	\left(
	\frac{1}{t^2} + \frac{t^2}{n} 
	\right) \log^2 \frac{2}{\delta}\;.
	\end{align*}
	where the constants $C_1$ and $C_2$ do not depend on $n,t,\delta$,  but depend on the chosen optimization method.\\
	Moreover by choosing the stopping rules $t^{GD}=O(n^{1/2})$ and $t^{acc}=O(n^{1/4})$ both algorithms have learning rate of order $\frac{1}{\sqrt{n}}$.
\end{theorem}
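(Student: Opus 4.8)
The plan is to run a bias–variance decomposition of the excess risk and to estimate each term using the spectral filtering properties established in Propositions~\ref{prop_grad_desc}, \ref{prop_nu_meth}, \ref{prop_nesterov_filter}, together with standard concentration inequalities for the empirical covariance $\hat\Sigma=\x^*\x$ and the empirical mean $\x^*\y$. Writing $\hat w_t = g_t(\hat\Sigma)\x^*\y$ for the generic method (with $\lambda = 1/t$ for gradient descent and $\lambda = 1/t^2$ for the accelerated schemes), one compares $\hat w_t$ to the population quantity $w_\lambda = g_\lambda(\Sigma)\x^* \rho$-analogue, where $\Sigma=\E[x\otimes x]$. Under Assumption~\ref{assumptio_basic} one has $\mathcal{E}(w)-\inf_\X\mathcal{E} = \norm{\Sigma^{1/2}(w-w^*)}^2$, so it suffices to bound $\norm{\Sigma^{1/2}(\hat w_t - w^*)}$. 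I would split this as a \emph{bias} term $\norm{\Sigma^{1/2}(w_\lambda - w^*)} = \norm{\Sigma^{1/2} r_\lambda(\Sigma) w^*}$ and a \emph{variance} (sample) term $\norm{\Sigma^{1/2}(\hat w_t - w_\lambda)}$.

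The first key step is to control the bias. Since $w^* = \Sigma^0 w^*$ lies trivially in the range of $\Sigma^0$, only qualification $q=1/2$ is needed here: using Definition~\ref{def_qualification} with $q=1/2$ (all three methods have qualification at least $1/2$), $\norm{\Sigma^{1/2} r_\lambda(\Sigma) w^*} \le \sup_\sigma |r_\lambda(\sigma)\sigma^{1/2}| \,\norm{w^*} \le F_{1/2}\lambda^{1/2}\norm{w^*}$, which gives $\lambda = 1/t$, i.e. $O(1/t)$, for gradient descent and $\lambda = 1/t^2$, i.e. $O(1/t^2)$, for the accelerated methods — this is exactly the first summand in each bound. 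The second key step is to control the variance term. Here I would insert the usual ``effective'' quantities: replace $\Sigma^{1/2}$ by $\hat\Sigma_\lambda^{1/2}=(\hat\Sigma+\lambda)^{1/2}$ up to a multiplicative constant controlled by the operator concentration $\norm{(\Sigma+\lambda)^{-1/2}(\Sigma - \hat\Sigma)(\Sigma+\lambda)^{-1/2}}$ (Lemma-type bounds à la \cite{rosasco2015learning}), which is $O(1)$ with high probability once $n$ is large enough relative to $1/\lambda$. Then $\norm{\hat\Sigma_\lambda^{1/2}(\hat w_t - w_\lambda)}$ is estimated by peeling off $g_\lambda(\hat\Sigma)$: using property (i), $\norm{g_\lambda(\hat\Sigma)}_{op}\le E/\lambda$ and $\norm{\hat\Sigma^{1/2}g_\lambda(\hat\Sigma)}_{op}\le \sqrt{E F_0/\lambda}$ (from $|\sigma g_\lambda^2|\le |g_\lambda|\,|r_\lambda - 1|\lesssim E F_0/\lambda$), so that the sample term is bounded by $\big(\sqrt{E/\lambda}+\text{l.o.t.}\big)\,\norm{\x^*\y - \Sigma w^*}$ up to the $\hat\Sigma_\lambda$-reweighting, and the noise term $\norm{(\Sigma+\lambda)^{-1/2}(\x^*\y-\hat\Sigma w^*)}$ is $O(\sqrt{1/n})$ with probability $1-\delta$ by a Bernstein inequality in Hilbert space (using $|y|<M$ and \eqref{assump_1}), with a $\log(2/\delta)$ factor. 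Collecting powers of $\lambda$ this yields a variance contribution $O(\lambda^{-1}/n) = O(t/n)$ for gradient descent and $O(t^2/n)$ for the accelerated methods, matching the second summand; the squared excess risk then produces the stated $\log^2(2/\delta)$.

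The last step is routine optimization of the stopping time: balancing $1/t$ against $t/n$ gives $t^{GD}\asymp n^{1/2}$ and rate $n^{-1/2}$; balancing $1/t^2$ against $t^2/n$ gives $t^{acc}\asymp n^{1/4}$ and again rate $n^{-1/2}$ (after taking the square root of the excess-risk bound, which is $O(n^{-1})$ at the optimal $t$). I expect the main obstacle to be the variance estimate — specifically, making rigorous the replacement of population operators by empirical ones uniformly over the relevant spectral range, and tracking how the larger operator-norm bound $\norm{g_\lambda(\hat\Sigma)}_{op}\le E/\lambda$ (which for accelerated methods means $E t^2$ rather than $t$) propagates through the concentration step; this is precisely where acceleration pays the price of a faster-growing variance, and where one must be careful that ``$n$ large enough'' means $n \gtrsim t^2$ (resp. $n \gtrsim t^4$) so that the effective-dimension/operator-perturbation bounds remain valid. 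The qualification bookkeeping for the bias is comparatively benign since only $q=1/2$ is invoked, well within the qualification of all three methods.
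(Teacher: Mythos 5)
Your proposal follows essentially the same route as the paper: the paper proves a general bound (Theorem~\ref{learning_bound}) by the same convergence/stability decomposition through the population regularized solution $w_\lambda=g_\lambda(\Sigma)S^*f_\h$, bounding the bias via the residual polynomials and qualification and the variance via operator concentration of $\hat\Sigma$ around $\Sigma$ plus a Hilbert-space Bernstein bound on $\x^*\y-\hat\Sigma w_\lambda$, and then specializes with $\lambda=1/t$ (resp. $1/t^2$). The only caveat is a small bookkeeping inconsistency in your variance step — you pair the prefactor $\|\hat\Sigma^{1/2}g_\lambda(\hat\Sigma)\|_{op}\lesssim\lambda^{-1/2}$ with the \emph{weighted} noise $\|\Sigma_\lambda^{-1/2}(\x^*\y-\hat\Sigma w^*)\|$ claimed to be $O(n^{-1/2})$, whereas the consistent pairings are either $\lambda^{-1/2}$ times the unweighted $O(n^{-1/2})$ noise, or the $O(1)$ factor $\|\hat\Sigma_\lambda g_\lambda(\hat\Sigma)\|_{op}$ times the weighted $O(\sqrt{\mathcal{N}(\lambda)/n})=O((n\lambda)^{-1/2})$ noise (the paper's choice) — but both yield the same $O(1/(n\lambda))$ contribution, so the conclusion stands.
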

The proof of the above results is given in the appendix and the novel part is the one concerning accelerated methods, particularly Nesterov acceleration. 
The result shows how the number of iteration controls the learning properties both for gradient descent and accelerated gradient. In this sense implicit regularization occurs in all these approaches.  For any $t$ the error is split in two contributions. 
Inspecting the proof it is easy to see that, the first term  in the bound comes from the convergence properties of the algorithm  with infinite data. Hence the optimization error translates into a bias term. The decay for accelerated method is much faster than for gradient descent. The second term  arises from comparing the empirical iterates with their infinite sample (population) limit. It is a variance term depending on the sampling in the data and hence decreases with the sample size. For all methods, this term increases with the number of iterations, indicating that the  empirical and population iterations are increasingly different. However, the behavior is markedly worse for accelerated methods. The benefit of acceleration seems to be balanced out by this more unstable behavior. In fact, the benefit of acceleration is apparent balancing the error terms to obtain a final bound. The obtained bound is the same for gradient descent and accelerated methods, and  is indeed optimal since it matches corresponding lower bounds \cite{Blanchard2018,caponnetto2007optimal}. However, the number of iterations needed by accelerated methods is the square root of those needed by gradient descent, indicating a substantial computational gain can be attained. Next we show how these results can be generalized to a more general setting, considering both weaker and stronger assumptions, corresponding to harder or easier learning problems.

%
%
%
%

\subsection{More refined result}
Theorem \ref{basic_result} is a simplified  version of the more general result that we discuss in this section. We are interested in covering also the non-attainable case, that is when there is no  $w^*\in\X$ such that $\mathcal{E}(w^*)=\inf_\X\mathcal{E}$. In order to cover this case we have to introduce several more definitions and notations. In Appendix \ref{math_setting} we give a more detailed description of the general setting.
Consider the space $L^2_{\rho_\X}$ of the square integrable functions with the norm $\norm{f}^2_{\rho_\X}=\int_\X f(x)^2\;d\rho_\X(x)$ and extend the expected risk to $L^2_{\rho_\X}$ defining
$\mathcal{E}(f)=\int_{\X\times\R} (f(x)-y)^2\;d\rho(x,y)$. Let  $\h\subseteq L^2_{\rho_\X}$ be the hypothesis space of functions such that $f(x)=\scal{w,x}$ $\rho_\X$ almost surely. Recall that,  the minimizer of the expected risk over $L^2_{\rho_\X}$ is the regression function $f_\rho=\int_\X y\,d\rho(y|x)$. The projection $f_\h$ over the closure of the hypothesis space $\h$ is defined as
$$
f_\h=\argmin_{g\in\overline{\h}}\norm{g-f_\rho}_{\rho_\X}\;.
$$ 
Let  $L:L^2_{\rho_\X}\to L^2_{\rho_\X}$ be the integral operator 
$$
Lf(x)=\int_\X f(x')\scal{x,x'}\;d\rho_\X(x') \;.
$$

The first assumption we consider concern the moments of the output variable and is more general than assuming the output variable $y$ to be bounded as  assumed before.
\begin{assumption}\label{assumptio_moment}
	There exist positive constant $Q$ and $M$ such that for all $\N\ni l\geq2$,
	\begin{equation*}
	\int_\R |y|^l \,d\rho(y|x)\leq\frac{1}{2}l!M^{l-2}Q^2 
	\qquad
	\rho_\X \text{ almost surely.}
	\end{equation*}
\end{assumption}
This assumption is  standard  and  satisfied in classification or regression with well behaved noise. Under this assumption the regression function $f_\rho$ is bounded almost surely 
\begin{equation}\label{f_rho_bounded}
|f_\rho(x)|
\leq
\int_\R |y| \,d\rho(y|x)\leq
\left(\int_\R |y|^2 \,d\rho(y|x)\right)^{1/2}\leq Q\;.
\end{equation}

The next assumptions are related to the regularity of the target function $f_\h$.
\begin{assumption}\mbox{}\\\label{assumption_f_h}
	There exist a positive constant $B$ such that the target function $f_\h$ satisfy 
	$$
	\int_\X \left( f_\h(x)-f_\rho(x)\right)^2 x\otimes x\,d\rho_\X(x)
	\preceq
	B^2 \Sigma\;.
	$$
\end{assumption}
This assumption is needed to deal with the misspecification of the model.  The last assumptions quantify the regularity of $f_\h$ and  the size (capacity) of the space $\h$.
\begin{assumption}\mbox{}\\ \label{assumption_source}
	There exist $g_0\in L^2_{\rho_\X}$ and $r>0$ such that 
	$$
	f_\h=L^r g_0\,,\quad \text{ with } \norm{g_0}_{\rho_\X}\leq R.
	$$
	Moreover we assume that there exist $\gamma\geq 1$ and a positive constant $c_\gamma$ such that the effective dimension
	$$
	\mathcal{N}(\lambda)=
	\Tr\left( L\left(L+\lambda\I \right)^{-1} \right) \leq c_\gamma \lambda^{-\frac{1}{\gamma}}\;.
	$$
\end{assumption}
The assumption on $\mathcal{N}(\lambda)$ is always true for $\gamma=1$ and $c_1=\kappa^2$ and it's satisfied when the eigenvalues $\sigma_i$ of $L$ decay as $i^{-\gamma}$.
We recall that, the space $\h$ can be  characterized in terms of the operator $L$, indeed
$$
\h=L^{1/2}\left(L^2_{\rho_\X} \right).
$$
Hence, the  non-attainable corresponds to considering  $r<1/2$. 
\begin{theorem} \label{main_result}
	Under Assumption \ref{assumptio_moment}, \ref{assumption_f_h}, \ref{assumption_source}, let $\hat{w}^{GD}_t$ and $\hat{w}^{acc}_t$ be the $t$-th iterations of gradient descent \eqref{gradient_descent} and an accelerated version given by \eqref{H-B_algo} or \eqref{nesterov_algo} respectively. Assuming the sample-size $n$ to be large enough,
	let $\delta\in(0,1/2)$ and assuming $r$ to be smaller than the qualification of the considered algorithm (and equal to $1/2$ in the case of Nesterov accelerated methods), then there exist two positive constant $C_1$ and $C_2$ such that with probability at least $1-\delta$
	\begin{align*}
	\mathcal{E}(\hat{w}^{GD}_t)-\inf_\h \mathcal{E}
	&\leq
	C_1
	\left(
	\frac{1}{t^{2r}} + \frac{t^{\frac{1}{\gamma}}}{n} 
	\right) \log^2 \frac{2}{\delta}
	\\
	\mathcal{E}(\hat{w}^{acc}_t)-\inf_\h \mathcal{E}
	&\leq
	C_2
	\left(
	\frac{1}{t^{4r}} + \frac{t^{\frac{2}{\gamma}}}{n} 
	\right) \log^2 \frac{2}{\delta}\;.
	\end{align*}
	where the constants $C_1$ and $C_2$ do not depend on $n,t,\delta$,  but depend on the chosen optimization.\\
	Choosing the stopping rules  $t^{GD}=O(n^{\frac{\gamma}{2\gamma r+1}})$ and $t^{acc}=O(n^{\frac{\gamma}{4\gamma r+2}})$ both gradient descent and accelerated methods achieve a learning rate of order $O\left(n^{\frac{-2\gamma r}{2\gamma r+1}} \right) $.	
\end{theorem}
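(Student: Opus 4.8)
The plan is to treat all three algorithms uniformly through their spectral filters $g_t$ and residuals $r_t$, using Propositions~\ref{prop_grad_desc}--\ref{prop_nesterov_filter} to supply the constants $E,F_0,F_q$, the only difference being that $\lambda=1/t$ for gradient descent and $\lambda=1/t^2$ for the accelerated iterations. First I would reduce the excess risk to a weighted norm. Let $S_\rho\colon\X\to L^2_{\rho_\X}$ be the inclusion $S_\rho w=\scal{w,\cdot}$, so that $L=S_\rho S_\rho^*$, the population covariance is $\Sigma=S_\rho^*S_\rho=\E_{\rho_\X}[x\otimes x]$, and $S_\rho^*f_\rho=S_\rho^*f_\h$ because $f_\rho-f_\h\perp\overline{\h}$. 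Since $\hat w_t=g_t(\hat\Sigma)\x^*\y$ and $S_\rho\hat w_t\in\h$, Pythagoras gives $\mathcal{E}(\hat w_t)-\inf_\h\mathcal{E}=\norm{S_\rho\hat w_t-f_\h}_{\rho_\X}^2$. Introducing the population iterate $w_t=g_t(\Sigma)S_\rho^*f_\rho$ and using the intertwining $S_\rho g_t(\Sigma)=g_t(L)S_\rho$ together with $S_\rho S_\rho^*f_\rho=L f_\h$, one gets $S_\rho w_t-f_\h=-r_t(L)f_\h$, whence the split
$$\sqrt{\mathcal{E}(\hat w_t)-\inf_\h\mathcal{E}}\;\le\;\norm{\Sigma^{1/2}(\hat w_t-w_t)}+\norm{r_t(L)f_\h}_{\rho_\X},$$
the first term a variance and the second a bias.

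The bias is immediate from the qualification. By Assumption~\ref{assumption_source}, $f_\h=L^r g_0$ with $\norm{g_0}_{\rho_\X}\le R$, so as long as $r$ does not exceed the qualification of the method, Definition~\ref{def_qualification} gives $\norm{r_t(L)f_\h}_{\rho_\X}\le F_r\lambda^r R$; squaring yields $O(t^{-2r})$ for gradient descent and $O(t^{-4r})$ for the accelerated methods. For Nesterov only qualification $\ge1/2$ is available, which is why the statement fixes $r=1/2$ there, and for heavy-ball $r$ is capped by $\nu$.

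The variance term carries the bulk of the work. Starting from the identity $\hat w_t-w_t=g_t(\hat\Sigma)(\x^*\y-\hat\Sigma w_t)-r_t(\hat\Sigma)w_t$ and rewriting $\x^*\y-\hat\Sigma w_t$ as the mean-zero term $\x^*\y-\E[\x^*\y]$ plus correction terms $(\Sigma-\hat\Sigma)w_t$ and $r_t(\Sigma)\E[\x^*\y]$ --- which through $\E[\x^*\y]=S_\rho^*f_\h=\Sigma^r S_\rho^*g_0$ and Assumption~\ref{assumption_f_h} absorb the misspecification --- I would insert $\Sigma_\lambda:=\Sigma+\lambda\I$ and $\hat\Sigma_\lambda:=\hat\Sigma+\lambda\I$ and bound factor by factor using three ingredients. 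First, the deterministic filter bound $\sup_{\sigma\in(0,\kappa^2]}(\sigma+\lambda)\,|g_t(\sigma)|\le E+1+F_0$, which holds precisely because $\lambda$ is matched to $\sup|g_t|\le E/\lambda$; this is the exact point at which the accelerated methods' faster bias decay is paid for by a much smaller effective regularization $\lambda=1/t^2$. Second, the operator concentration bound $\norm{\Sigma_\lambda^{1/2}\hat\Sigma_\lambda^{-1/2}}_{op}\le2$ with probability $\ge1-\delta$, valid once $n$ is large compared with $1/\lambda$ up to logarithmic factors (Bernstein for self-adjoint operators), which lets one interchange $\hat\Sigma$ and $\Sigma$ freely. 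Third, the noise bound $\norm{\Sigma_\lambda^{-1/2}(\x^*\y-\E[\x^*\y])}\lesssim\frac{M}{n\sqrt\lambda}+\sqrt{\frac{Q^2\,\mathcal{N}(\lambda)}{n}}$ with probability $\ge1-\delta$, obtained from a vector Bernstein inequality in which Assumption~\ref{assumptio_moment} controls the moments and the variance proxy equals $\Tr(\Sigma_\lambda^{-1}\Sigma)=\mathcal{N}(\lambda)$. Collecting the factors and using Assumption~\ref{assumption_source}, the variance is $\lesssim\mathcal{N}(\lambda)/n\le c_\gamma\lambda^{-1/\gamma}/n$ (the $(n\sqrt\lambda)^{-1}$ contribution being lower order for $n$ large enough), i.e. $O(t^{1/\gamma}/n)$ for gradient descent and $O(t^{2/\gamma}/n)$ for the accelerated methods, with the $\log^2(2/\delta)$ coming from combining the two high-probability events.

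Putting bias and variance together with $\lambda=1/t$, resp. $\lambda=1/t^2$, gives the two displayed bounds, and balancing the two terms --- $t^{2r}\asymp n\,t^{-1/\gamma}$, resp. $t^{4r}\asymp n\,t^{-2/\gamma}$ --- yields $t^{GD}=O(n^{\gamma/(2\gamma r+1)})$ and $t^{acc}=O(n^{\gamma/(4\gamma r+2)})$, both producing the common rate $O(n^{-2\gamma r/(2\gamma r+1)})$. I expect the main obstacle to be the variance analysis in the non-attainable regime $r<1/2$: there is no minimizer $w^*\in\X$ to expand around, so every estimate must be carried out in $L^2_{\rho_\X}$, tracking the operators $S_\rho$, $L$, $\Sigma$ and their intertwining, and the $\lambda$-dependence of the concentration steps must be kept explicit enough that the very small choice $\lambda=1/t^2$ remains admissible for the available sample size --- which is exactly what forces $t^{acc}$ down to roughly the square root of $t^{GD}$ and what has to be reconciled with the ``$n$ large enough'' hypothesis.
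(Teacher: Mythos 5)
Your overall route is the paper's route: reduce the excess risk to $\norm{S\hat w_t-f_\h}_{\rho_\X}^2$, split off the population regularized iterate $w_t=g_t(\Sigma)S^*f_\h$, bound the bias $\norm{r_t(L)f_\h}_{\rho_\X}$ by the qualification applied to the source condition, and bound the variance by combining $\sup_\sigma(\sigma+\lambda)|g_t(\sigma)|\lesssim 1$, the operator concentration $\norm{\Sigma_\lambda^{1/2}\hat\Sigma_\lambda^{-1/2}}_{op}\lesssim 1$ for $n\gtrsim 1/\lambda$, and a vector Bernstein bound with variance proxy $\mathcal{N}(\lambda)$, finally substituting $\lambda=1/t$ versus $\lambda=1/t^2$. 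This is exactly how the paper proceeds (via its Theorem~\ref{learning_bound}), up to the cosmetic difference that the paper centers the noise at $\x^*\y-\hat\Sigma w_\lambda$ around its mean $\Sigma w_\lambda-S^*f_\h$ rather than at $\x^*\y-\E[\x^*\y]$ with separate correction terms.

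There is one genuine gap, and it is precisely the term your sketch writes down but never bounds: $\norm{\Sigma^{1/2}r_t(\hat\Sigma)w_t}$. The source condition lives on $\Sigma$ (equivalently $L$) while the residual is evaluated at $\hat\Sigma$, so you cannot simply invoke the qualification on $r_t(\hat\Sigma)\Phi(\Sigma)$. The paper's resolution is to factor $\norm{\hat\Sigma_\lambda^{1/2}r_t(\hat\Sigma)\hat\Sigma_\lambda^{1/2}}_{op}\,\norm{\hat\Sigma_\lambda^{-1/2}\Sigma_\lambda^{1/2}}_{op}\,\norm{g_t(\Sigma)\Phi(\Sigma)}_{op}$, and the first factor requires $\sup_u|(u+\lambda)\,r_t(u)|\lesssim\lambda$, i.e. qualification at least $1$ --- not merely qualification $\geq r$. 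This is where the Nesterov restriction actually comes from: since only qualification $1/2$ is proved for Nesterov, the paper falls back on $\norm{\hat\Sigma_\lambda^{1/2}r_t(\hat\Sigma)}_{op}\norm{w_t}\lesssim\lambda^{1/2}\norm{w_t}$ and uses $r\geq 1/2$ to get $\norm{w_t}$ bounded uniformly in $\lambda$, which only recovers the rate $\Phi(\lambda)=\lambda^{1/2}$ when $r=1/2$. Your proposal attributes the $r=1/2$ condition for Nesterov to the bias term ("only qualification $\geq 1/2$ is available"), but the bias only needs qualification $\geq r$ and would be fine for any $r\leq 1/2$; the real obstruction is in this stability piece. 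Without supplying either the qualification-$\geq 1$ bound or the $\norm{w_t}$ fallback, the variance estimate as you have set it up does not close.
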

The only reason why we do not consider $r<1/2$ in the analysis of Nesterov accelerated methods is that our proof require the qualification of the method to be larger than $1$ for technical reasons. However we think that our result can be extended to that case, furthermore we think Nesterov qualification to be larger than $1$, however it is still an open problem to compute its exact qualification.
The proof of the above result is given in the appendix. The general structure of the bound is the same as in the basic setting, which is now recovered as a special case. 
However, in this more general form, the various terms in the bound depend now on the regularity assumptions on the problem. In particular, the variance depends on the effective dimension behavior, e.g.   on the eigenvalue decay, while the bias depend on the regularity assumption on $f_\h$. The general comparison between gradient descent and accelerated methods follows the same line as in the previous section. 
Faster bias decay of accelerated methods is   contrasted by a more unstable behavior. 
As before,  the benefit of accelerated methods becomes clear when deriving optimal stopping time and  corresponding learning bound: they achieve the accuracy of gradient methods but in considerable less time. While  heavy-ball  and Nesterov have again similar behaviors, here a subtle difference resides in their different qualifications, which in principle lead to different behavior for easy problems, that is for large $r$ and $\gamma$. In this regime, gradient descent could work better since it has infinite qualification. For problems in which  $r<1/2$ and  $\gamma=1$ the rates are worse than in the basic setting, hence these problems are hard.

%
%
%
%

\subsection{Related work}
In the convex optimization framework a similar phenomenon was pointed out in \cite{devolder2014first} where they introduce
the notion of inexact first-order oracle and study the behaviour of several first-order methods of smooth convex optimization with such oracle. In particular they show that the superiority of accelerated methods over standard gradient descent is no longer absolute when an inexact oracle is used. This because acceleration suffer from the accumulation of the errors committed by the inexact oracle.
A relevant result on the generalization properties of learning algorithm is \cite{bousquet2002stability} in which they introduce the notion of \textit{uniform} stability and use it to obtain generalization error bounds for regularization based learning algorithms. Recently, to show the effectiveness of commonly used optimization algorithms in many large-scale learning problems, algorithmic stability has been established for stochastic gradient methods \cite{hardt2015train}, as well as for any algorithm in situations where global minima are approximately achieved \cite{charles2017stability}. 
For Nesterov's accelerated gradient descent and heavy-ball method, \cite{chen2018stability} provide stability upper bounds for quadratic loss function in a finite dimensional setting. All these approaches focus on the definition of uniform stability given in \cite{bousquet2002stability}. Our approach to the stability of a learning algorithm is based on the study of filtering properties of accelerated methods together with concentration inequalities, we obtain upper bounds on the generalization error for quadratic loss in a infinite dimensional Hilbert space and generalize the bounds obtained in \cite{chen2018stability} by considering different regularity assumptions and by relaxing the hypothesis of the existence of a minimizer of the expected risk on the hypothesis space.

\section{Numerical simulation}\label{numerics}

In this section we show some numerical simulations to validate our results.
We want to simulate the case in which the eigenvalues $\sigma_i$ of the operator $L$ are $\sigma_i=i^{-\gamma}$ for some $\gamma\leq1$ and the non-attainable case $r<1/2$. In order to do this we observe that if we consider the kernel setting over a finite space $\mathcal{Z}=\{z_1,\dots,z_n\}$ of size $N$ with the uniform probability distribution $\rho_\mathcal{Z}$, then the space $L^2(\mathcal{Z},\rho_\mathcal{Z})$ becomes $\R^N$ with the usual scalar product multiplied by $1/N$. the operator $L$ becomes a $N\times N$ matrix which entries are $L_{i,j}=K(z_i,z_j)$ for every $i,j\in\{1,\dots,N\}$, where $K$ is the kernel, which is fixed by the choice of the matrix $L$.
We build the matrix $L=UDU^T$ with $U\in\R^{N\times N}$ orthogonal matrix and $D$ diagonal matrix with entries $D_{i,i}=i^{-\gamma}$. The source condition becomes $f_\h=L^{r} g_0$ for some $g_0\in\R^N,r>0$.
We simulate the observed output as 
$y=f_\h+ \mathcal{N}(0,\sigma)$ where $\mathcal{N}(0,\sigma)$ is the standardx normal distribution of variance $\sigma^2$.
The sampling operation can be seen as extracting $n$ indices $i_1,\dots,i_n$ and building the kernel matrix $\hat{K}_{j,k}=K(z_{i_j},z_{i_k})$ and the noisy labels $\hat{y}_j=y_{i_j}$ for every $j,k\in\{1,\dots,n\}$.
The Representer Theorem ensure that we can built our estimator $\hat{f}\in\R^N$ as 
$
\hat{f}(z)=\sum_{j=1}^n K(z,z_{i_j})c_j
$
where the vector $c$ depends on the chosen optimization algorithm and takes the form $c=g_t(\hat{K})y$.
The excess risk of the estimator $\hat{f}$ is given by $\|\hat{f}-f_\h\|^2_{L^2_\mathcal{Z}}$.\\
For every algorithm considered,
we run 50 repetitions, in which we sample the data-space and compute the error $\|\hat{f}_t-f_\h\|^2_{L^2_\mathcal{Z}}$, where $\hat{f}_t$ represents the estimator related to the $t$-th iteration of one of the considered algorithms, and in the end we compute the mean and the variance of those errors.\\
In Figure \ref{fig:simulation_1} we simulate the error of all the algorithms considered for both attainable and non-attainable case. We observe that both heavy-ball and Nesterov acceleration provides faster convergence rates with respect to gradient descent method, but the learning accuracy is not improved. We observe that the accelerated methods considered show similar behavior and that for ``easy problem'' (large $r$) that gradient descent can exploits its higher qualification and perform similarly to the accelerated methods.\\
In Figure \ref{fig:simulation_puma} we show the test error related to the real dataset \textit{pumadyn8nh} (available at \hyperlink{https://www.dcc.fc.up.pt/~ltorgo/Regression/puma.html}{https://www.dcc.fc.up.pt/~ltorgo/Regression/puma.html}). Even in this case we can observe the behaviors shown in our theoretical results.

\begin{figure}[h]
	\begin{minipage}{0.31\linewidth}
		\centering
		\includegraphics[width=.95\linewidth]{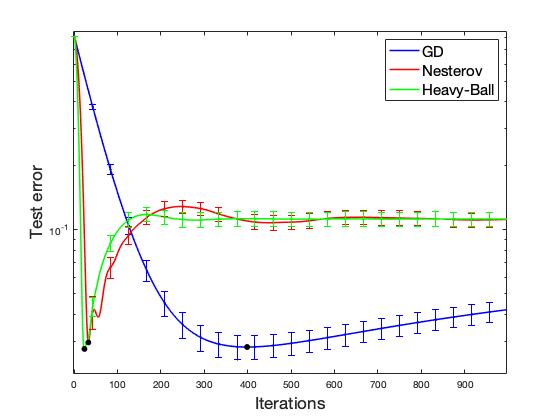}
	\end{minipage}
	\begin{minipage}{0.31\linewidth}
		\centering
		\includegraphics[width=.95\linewidth]{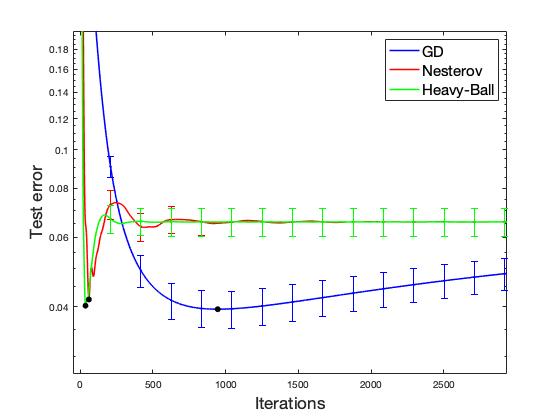}
	\end{minipage}
	\begin{minipage}{0.31\linewidth}
		\centering
		\includegraphics[width=.95\linewidth]{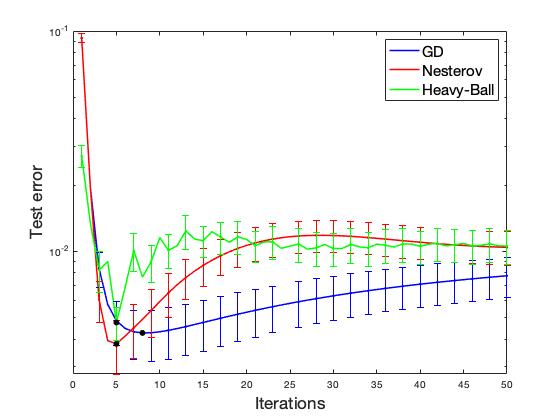}
	\end{minipage}
	\caption{Mean and variance of error $\|\hat{f}_t-f_\h\|^2_N$ for the $t$-th iteration of gradient descent (GD), Nesterov accelerated algorithm and heavy-ball ($\nu=1$). Black dots shows the absolute minimum of the curves. The parameters are chosen $N=10^4,n=10^2,\gamma=1,\sigma=0.5$. We show the attainable case ($r=1/2$) in the left, the ``hard case'' ($r=0.1<1/2$) in the center and the ``easy case''   (r=2>1/2) in the right.}
	\label{fig:simulation_1}
\end{figure}
\begin{figure}[h]
	\centering
	\begin{minipage}{0.48\linewidth}
		\centering
		\includegraphics[width=0.9\textwidth]{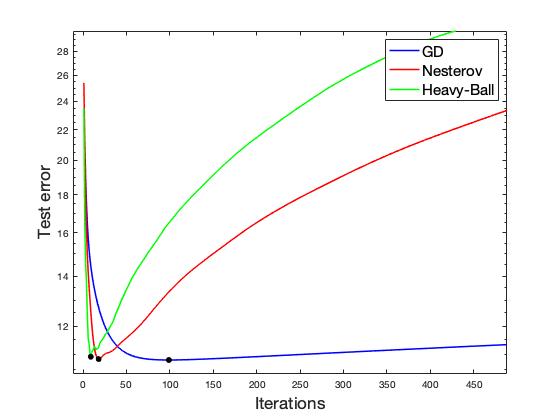}
	\end{minipage}
	\begin{minipage}{0.48\linewidth}
		\centering
		\includegraphics[width=0.9\textwidth]{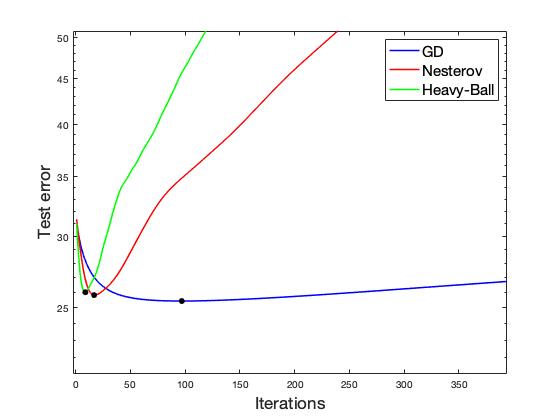}
	\end{minipage}
	\caption{Test error on the real dataset \textit{pumadyn8nh} using gradient descent (GD), Nesterov accelerated algorithm and heavy-ball. In the left we use a gaussian kernel with $\sigma=1.2$ and in the right a polynomial kernel of degree $9$.}
	\label{fig:simulation_puma}
\end{figure}

\section{Conclusion }
In this paper, we have considered the implicit regularization properties of accelerated gradient methods for least squares in Hilbert space. Using spectral calculus we have  characterized the properties of the different iterations in terms of suitable polynomials.  Using the latter,  we have derived error  bounds in terms of   suitable bias and variance terms. The main conclusion is that under the considered assumptions accelerated methods have smaller bias  but also larger variance. As a byproduct they achieve the same accuracy of vanilla gradient descent but with much fewer iterations.
Our study opens a number of potential theoretical and empirical research directions. 
From a theory point of view, it  would be  interesting to consider other learning regimes, for examples   classification problems, different loss functions or other regularity assumptions beyond classical nonparametric assumptions, e.g. misspecified models and  fast eigenvalues decays (Gaussian kernel).  From an empirical point of view it would be interesting to do a more  thorough investigation  on a larger number of  simulated and real data-sets of varying dimension. 

\section*{Acknowledgments}
This material is based upon work supported by the Center for Brains, Minds and Machines (CBMM), funded by NSF STC award CCF-1231216, and the Italian Institute of Technology. We gratefully acknowledge the support of NVIDIA Corporation for the donation of the Titan Xp GPUs and the Tesla k40 GPU used for this research.
L. R. acknowledges the financial support of the AFOSR projects FA9550-17-1-0390  and BAA-AFRL-AFOSR-2016-0007 (European Office of Aerospace Research and Development), and the EU H2020-MSCA-RISE project NoMADS - DLV-777826.
N.P. would like to thank Murata Tomoya for the useful observations.

\bibliographystyle{plain}
\bibliography{biblio}
\newpage

\section{Appendix: regularization properties for accelerated algorithms}

\subsection{Regularization properties  for gradient descent}

\textit{\large Proof of Proposition \ref{prop_grad_desc}}
\begin{proof}\mbox{}\\
	Since $\sigma\in(0,\kappa^2]$ and $\alpha$ is chosen such that $\alpha\leq\frac{1}{\kappa^2}$ it holds that $(1-\alpha\sigma)\leq1$ for every and so for the definitions of $g_t$ and $r_t$ it holds
	\begin{align*}
	&\sup_{\sigma\in(0,\kappa^2]} |g_t(\sigma)|\leq \alpha t \\
	&\sup_{\sigma\in(0,\kappa^2]} |r_t(\sigma)|\leq 1
	\end{align*}	
	hence Landweber polynomials verify  \eqref{reg_functions_1} and \eqref{reg_functions_2} with $E=\alpha$ and $F_0=1$.\\
	For what concern the qualification of this method, for every $q\geq0$ the maximum of the function $r_t(\sigma)\sigma^q$ is attained  at $\sigma=\frac{1}{\alpha}\frac{q}{t+q}$, so we get
	$$
	0\leq r_t(\sigma)\sigma^q
	\leq \left(\frac{1}{\alpha}\right)^q \left( \frac{q}{t+q}\right)^q
	\leq \left(\frac{q}{\alpha}\right)^q \left( \frac{1}{t}\right)^q\;,
	$$
	hence we prove \eqref{qualification} for every $q\geq0$ with $F_q=\left(\frac{q}{\alpha}\right)^q $ and complete the proof.\\
\end{proof}
\subsection{Regularization properties for heavy-ball}

For the sake of simplicity assume $\kappa\leq1$.
Before proceeding with the analysis of the $\nu$-method we state one lemma, which will be useful in the following.

\begin{lemma}\mbox{}\\\label{lemma_engl}
	Let $g_t$ be a family of polynomials of degree $t-1$ with $t\in\N$ and $r_t$ the associated residuals.\\
	Assume the residuals satisfy
	\begin{equation}\label{r_less_1}
	|r_t(\sigma)|\leq1 \qquad \forall\sigma\in[0,1],\,t\in\N
	\end{equation}
	then it holds that 
	$$
	|g_t(\sigma)|\leq 2t^2 \qquad \forall\sigma\in[0,1]\;.
	$$
\end{lemma}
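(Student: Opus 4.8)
## Proof plan for Lemma~\ref{lemma_engl}

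The plan is to exploit the relation $g_t(\sigma) = (1 - r_t(\sigma))/\sigma$, which holds by definition of the residual polynomial. Away from $\sigma = 0$ this immediately gives $|g_t(\sigma)| \le (1 + |r_t(\sigma)|)/\sigma \le 2/\sigma$, which is useless near the origin, so the whole content of the lemma is a uniform bound that survives as $\sigma \to 0^+$. Since $g_t$ is a polynomial of degree $t-1$, it is continuous at $0$, and the bound $|g_t(\sigma)| \le 2t^2$ should really be obtained by controlling $g_t$ on a neighbourhood of $0$ via a derivative/Bernstein-type argument and then patching with the trivial $2/\sigma$ bound on the rest of $[0,1]$.

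Concretely, I would proceed as follows. First, record that $r_t(0) = 1$ for all $t$ (this follows from $g_t$ being a polynomial with no pole at $0$ and $r_t = 1 - \sigma g_t$), hence $\sigma \mapsto 1 - r_t(\sigma)$ vanishes at $\sigma = 0$ and $g_t(\sigma) = (1 - r_t(\sigma))/\sigma$ is exactly the divided difference, so $g_t(0) = -r_t'(0)$. Next, apply the Markov brothers' inequality (or Bernstein's inequality) to the polynomial $r_t$ of degree $t$ on the interval $[0,1]$: since $|r_t| \le 1$ on $[0,1]$ by hypothesis~\eqref{r_less_1}, its derivative satisfies $\|r_t'\|_{\infty,[0,1]} \le 2 t^2$ (Markov's inequality on $[-1,1]$ gives a factor $t^2$; rescaling from $[-1,1]$ to $[0,1]$ contributes the factor $2$). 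This already yields $|g_t(0)| = |r_t'(0)| \le 2t^2$. To get the bound at every $\sigma \in (0,1]$, write, for $\sigma \le 1/t^2$,
\[
g_t(\sigma) = \frac{1 - r_t(\sigma)}{\sigma} = -\frac{1}{\sigma}\int_0^\sigma r_t'(s)\,ds,
\]
so $|g_t(\sigma)| \le \frac{1}{\sigma}\int_0^\sigma |r_t'(s)|\,ds \le \|r_t'\|_{\infty,[0,1]} \le 2t^2$; and for $\sigma > 1/t^2$ use the trivial estimate $|g_t(\sigma)| \le (1 + |r_t(\sigma)|)/\sigma \le 2/\sigma < 2 t^2$. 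Combining the two ranges gives $|g_t(\sigma)| \le 2t^2$ on all of $[0,1]$, which is the claim. (The integral representation in fact makes the case split unnecessary — $|g_t(\sigma)| \le \frac1\sigma \int_0^\sigma |r_t'| \le \|r_t'\|_\infty \le 2t^2$ works for every $\sigma \in (0,1]$ at once — but I would keep whichever phrasing matches the reference \cite{engl1996regularization}.)

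The only genuine obstacle is invoking Markov's inequality with the right constant: one must be careful that $r_t$ has degree $t$ (not $t-1$), that the interval is $[0,1]$ rather than $[-1,1]$, and that the scaling introduces exactly the extra factor of $2$ (the affine map $x \mapsto (x+1)/2$ from $[-1,1]$ to $[0,1]$ has derivative $1/2$, so $\|r_t'\|_{[0,1]} = \tfrac12 \|(\tilde r_t)'\|_{[-1,1]} \le \tfrac12 \cdot 2 t^2$ — wait, that gives $t^2$; with the cruder bound $t^2 \le$ degree$^2$ and keeping a factor $2$ for safety one lands at $2t^2$, and in any case $t^2 \le 2t^2$). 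Everything else is a one-line computation, and the lemma is essentially a packaging of a classical fact from the inverse-problems literature, so I would present it compactly and cite \cite{engl1996regularization} for the underlying polynomial estimate.
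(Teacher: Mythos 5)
Your proof is correct and follows essentially the same route as the paper: both control $g_t(\sigma)=(1-r_t(\sigma))/\sigma$ as a difference quotient of $r_t$ (the paper via the Mean Value Theorem, you via the equivalent integral form $-\frac{1}{\sigma}\int_0^\sigma r_t'$), and then invoke Markov's inequality on $[0,1]$ to get $\sup|r_t'|\leq 2t^2\sup|r_t|$. The brief wobble about the rescaling constant resolves the right way (the affine change of variable from $[-1,1]$ to $[0,1]$ does contribute the factor $2$, and in any case $t^2\leq 2t^2$), so there is no gap.
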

\begin{proof}\mbox{}\\
	Using the definition of the residual and the Mean Value Theorem there exist $\bar{\sigma}\in[0,\sigma]$ such that
	$$
	g_t(\sigma)=\frac{1-r_t(\sigma)}{\sigma}=\frac{r_t(0)-r_t(\sigma)}{\sigma}=-r'_t(\bar{\sigma})\;.
	$$
	where $r'_t$ denotes the first derivative of $r_t$.\\
	Markov's inequality for polynomials implies that
	$$
	\sup_{\sigma\in[0,1]} |r' _t(\sigma)|\leq 2t^2\sup_{\sigma\in[0,1]} |r _t(\sigma)|\;,
	$$
	hence it holds 
	$$
	g_t(\sigma)\leq \sup_{\bar{\sigma}\in[0,1]} |r'_t(\bar{\sigma})|\leq 2t^2\;.
	$$
\end{proof}

Fixed $\nu>0$ the residual polynomials $\{r_t\}_t$ associated to the $\nu$-method form a sequence of orthogonal polynomials with respect to the weight function
$$
\omega_\nu(\sigma)=
\frac{\sigma^{2\nu}}{\sigma^{\frac{1}{2}}\left( 1-\sigma\right)^{\frac{1}{2}}}\;,
$$  
which is a shifted Jacobi weight, hence the residual polynomials $\{r_t\}_t$ are normalized shifted copies of Jacobi polynomials, where the normalization is due to the constraint $r_t(0)=1$. \\
Thanks to the properties of orthogonal polynomials, they satisfy Christoffel-Darboux recurrence formula (see e.g. \cite{szeg1939orthogonal})
$$
r_{t+1}=r_{t}(\sigma)+\beta_{t+1}\left( r_t(\sigma)-r_{t-1}(\sigma)\right) -\alpha_{t+1}\sigma r_t(\sigma)\;, \qquad t\geq1
$$
and a straightforward computation shows that this recursion on our problem carries over to the iterates $\hat{w}_t$ of the associated method
$$
\hat{w}_{t+1}=\hat{w}_t-\alpha_{t+1} \x^*\left( \x\hat{w}_t-\y\right)+
\beta_{t+1}(\hat{w}_t-\hat{w}_{t-1})\;.
$$
where, for every $t>1$, the parameters $\alpha_t,\;\beta_t$ are defined by
\begin{align*}
\alpha_t=&
4\frac{(2t+2\nu-1)(t+\nu-1)}
{(t+2\nu-1)(2t+4\nu-1)}\\
\beta_t=&
\frac{(t-1)(2t-3)(2t+2\nu-1)}
{(t+2\nu-1)(2t+4\nu-1)(2t+2\nu-3)}\;,
\end{align*}
with initialization $\hat{w}_{-1}=\hat{w}_0=0,\;\alpha_1=\frac{4\nu+2}{4\nu+1},\;\beta_1=0$.\\
In particular it holds the following result from \cite{engl1996regularization}.

\begin{theorem}\mbox{}\\\label{theo_engl}
	The residual polynomials $\{r_t\}_t$ of the $\nu$-method ($\nu$ fixed) are uniformely bounded for all $t\in\N$,
	$$
	|r_t(\sigma)|\leq 1 \qquad \sigma\in[0,1] ;
	$$
	they further satisfy 
	\begin{equation}\label{qualification_nu_method}
	|\sigma^\nu r_t(\sigma)|\leq c_\nu t^{-2\nu} \qquad \forall\sigma\in[0,1]
	\end{equation}
	with appropriate constants $0<c_\nu<+\infty$.
\end{theorem}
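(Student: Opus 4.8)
The plan is to identify the residual polynomials $r_t$ with suitably normalized and shifted Jacobi polynomials and then read off both assertions from classical pointwise bounds for the latter. First I would apply the change of variable $x=1-2\sigma$, which maps $[0,1]$ onto $[-1,1]$ and transforms $\omega_\nu(\sigma)\,d\sigma$ into a constant multiple of $(1-x)^{\alpha}(1+x)^{\beta}\,dx$ with $\alpha=2\nu-\tfrac12$ and $\beta=-\tfrac12$. Since $\{r_t\}_t$ is, by construction, the sequence of orthogonal polynomials for $\omega_\nu$ normalized by $r_t(0)=1$, and orthogonal polynomials are unique up to a scalar, this forces
$$
r_t(\sigma)=\frac{P_t^{(\alpha,\beta)}(1-2\sigma)}{P_t^{(\alpha,\beta)}(1)},\qquad P_t^{(\alpha,\beta)}(1)=\binom{t+\alpha}{t}\asymp t^{\alpha}.
$$

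For the uniform bound I would invoke the classical fact (\cite[Thm.~7.32.1]{szeg1939orthogonal}) that when $\alpha\ge\beta$ and $\alpha\ge-\tfrac12$ — both of which hold here for $\nu>0$ — the maximum of $|P_t^{(\alpha,\beta)}|$ over $[-1,1]$ is attained at $x=1$ and equals $P_t^{(\alpha,\beta)}(1)$. Dividing through by $P_t^{(\alpha,\beta)}(1)$ immediately gives $|r_t(\sigma)|\le1$ for all $\sigma\in[0,1]$.

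For the qualification estimate I would split $[0,1]$ according to the size of $\theta:=\arccos(1-2\sigma)$, equivalently of $\sigma=\sin^2(\theta/2)$. On the regime $\theta\le c/t$ (so that $\sigma\lesssim t^{-2}$) I would simply combine $|r_t|\le1$ with $\sigma^\nu\le(\sin^2(c/2t))^\nu\lesssim t^{-2\nu}$. On the complementary range I would use the standard interior estimate for Jacobi polynomials (\cite{szeg1939orthogonal})
$$
|P_t^{(\alpha,\beta)}(\cos\theta)|\le C(\alpha,\beta)\,t^{-1/2}\Bigl(\sin\tfrac\theta2\Bigr)^{-\alpha-1/2}\Bigl(\cos\tfrac\theta2\Bigr)^{-\beta-1/2},\qquad \tfrac{c}{t}\le\theta\le\pi-\tfrac{c}{t};
$$
with $\beta=-\tfrac12$ the last factor is $1$ and $\sin(\theta/2)=\sqrt\sigma$, so $|P_t^{(\alpha,\beta)}(1-2\sigma)|\le C\,t^{-1/2}\sigma^{-\nu}$ — here $\tfrac{\alpha+1/2}{2}=\nu$, which is precisely why $\alpha$ was chosen to equal $2\nu-\tfrac12$. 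Dividing by $P_t^{(\alpha,\beta)}(1)\gtrsim t^{\alpha}$ and multiplying by $\sigma^\nu$, the powers of $\sigma$ cancel exactly and one is left with $\sigma^\nu|r_t(\sigma)|\lesssim t^{-1/2-\alpha}=t^{-2\nu}$. The remaining corner $\theta\ge\pi-c/t$ (i.e. $\sigma$ near $1$) is absorbed by the endpoint bound $|P_t^{(\alpha,\beta)}(\cos\theta)|\lesssim t^{\beta}=t^{-1/2}$ together with $\sigma^\nu\le1$. Collecting the three regimes yields \eqref{qualification_nu_method} with $c_\nu$ finite, depending only on $\nu$ through $C(\alpha,\beta)$, the comparison constants in $P_t^{(\alpha,\beta)}(1)\asymp t^{\alpha}$, and the cut-off constant $c$.

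The main obstacle is entirely technical: one needs the \emph{uniform} (constant-tracked, non-asymptotic) Jacobi estimates with the correct joint dependence on $t$ and $\theta$, and a careful treatment of the transition zone $\theta\asymp1/t$ where the endpoint and interior bounds must be matched. All of these ingredients are available in Szeg\H{o}'s monograph and are assembled for exactly this family of iterations in \cite{engl1996regularization}; beyond matching the Jacobi parameters to $\nu$, no new idea is required.
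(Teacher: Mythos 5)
Your proof is correct and follows exactly the route the paper relies on: the paper does not prove Theorem~\ref{theo_engl} itself but quotes it from \cite{engl1996regularization}, where it is established precisely as you describe, by identifying $r_t(\sigma)=P_t^{(2\nu-1/2,\,-1/2)}(1-2\sigma)/P_t^{(2\nu-1/2,\,-1/2)}(1)$ and invoking Szeg\H{o}'s endpoint and interior estimates. Your parameter bookkeeping checks out ($\alpha=2\nu-\tfrac12\ge\beta=-\tfrac12$ so the maximum sits at $x=1$; the $\cos(\theta/2)$ factor is trivial for $\beta=-\tfrac12$; and the powers of $\sigma$ and $t$ cancel to give $t^{-2\nu}$ in each regime), so the only remaining work is the constant-tracking in the transition zone $\theta\asymp 1/t$, which you correctly flag and which is carried out in the cited reference.
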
 

\textit{\large Proof of Proposition \ref{prop_nu_meth}}

\begin{proof}\mbox{}\\
	Theorem \ref{theo_engl} states that  \eqref{reg_functions_2} holds true with $F_0=1$ and that the qualification of the method is $\nu$. Moreover by the Lemma \ref{lemma_engl} we get that also
	that \eqref{reg_functions_1} holds with $E=1$.\\
\end{proof}

\subsection{Regularization properties for Nesterov's acceleration}\label{appendix_nesterov}

Nesterov iterates \eqref{nesterov_algo} can be written as
\begin{align*}
\hat{w}_{t+1}&=\;\hat{w}_t+\beta_t\left(\hat{w}_t-\hat{w}_{t-1}\right)-\alpha \x^*\left(\x\left( \hat{w}_t+\beta_t\left(\hat{w}_t-\hat{w}_{t-1}\right)\right) -\y\right) =\\
&=
\left[\left( \beta_t+1\right) \left( \I-\alpha\hat{\Sigma}\right) \right] \hat{w}_t+
\left[-\beta_t \left( \I-\alpha\hat{\Sigma}\right) \right] \hat{w}_{t-1}+\alpha\x^*\y
\end{align*}
and since $\hat{w}_t=g_t\left( \hat{\Sigma}\right)\x^*\y$ it can be easily proved that the polynomials $g_t$ and the residual $r_t$ satisfy the following recursions
\begin{align}
g_{t+1}(\sigma)&=
(1-\alpha\sigma)
\left[g_t(\sigma)+\beta_t\left(g_t(\sigma)-g_{t-1}(\sigma)\right)\right]
+\alpha\\
\label{nesterov_residual}
r_{t+1}(\sigma)&=
(1-\alpha\sigma)
\left[r_t(\sigma)+\beta_t\left(r_t(\sigma)-r_{t-1}(\sigma)\right)\right]
\end{align}
for every $t\in\N$ with initialization $g_{-1}=g_0=0$ and $r_{-1}=r_0=1$.\\
Moreover, we can rewrite \eqref{nesterov_residual} as

\begin{equation}\label{nesterov_residual_2}
r_{t+1}(\sigma)=
(1-\alpha\sigma)
\left[ (1-\theta_t)r_t(\sigma)+\theta_t
\left( r_{t-1}(\sigma)+\frac{1}{\theta_{t-1}}(r_t(\sigma)-r_{t-1}(\sigma))\right) \right] 
\end{equation}
where for every $t\in\N$ $\theta_t$ is defined such that
$$
\beta_t=\frac{\theta_t}{\theta_{t-1}} (1-\theta_{t-1})\;,
$$ 
in particular, the choice \eqref{beta_t} implies 
\begin{equation}\label{beta_choice}
\theta_t=\frac{\beta}{t+\beta}\;.
\end{equation}

With these choices we can state a first proposition about the properties of the residual polynomials of the Nesterov's accelerated method.
\begin{proposition}\mbox{}\\
	Let $r_t$ satisfy the recursion \eqref{nesterov_residual_2} where the step-size $\alpha$ is chosen such that $\alpha\kappa^2<1$ and $\theta_t$ defined in \eqref{beta_choice}, the for all $r\in[0,1/2]$
	\begin{equation}
	\sigma^r |r_t(\sigma)|\leq \left( \frac{\beta^{2}}{\alpha}\right)^rt^{-2r}
	\label{thesis_nesterov}
	\end{equation}
	for all $\sigma\in[0,\kappa^2]$.
\end{proposition}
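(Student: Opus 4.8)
The plan is to fix $\sigma\in[0,\kappa^2]$, treat $t\mapsto r_t(\sigma)$ as a scalar sequence, and reduce \eqref{thesis_nesterov} to its two endpoints: (i) $|r_t(\sigma)|\le 1$, and (ii) $\sqrt{\sigma}\,|r_t(\sigma)|\le \beta/(\sqrt{\alpha}\,t)$. Granting these, the bound \eqref{thesis_nesterov} for an arbitrary $r\in(0,1/2)$ follows by interpolation: since $\sigma^{r}|r_t|=\big(\sqrt{\sigma}\,|r_t|\big)^{2r}\,|r_t|^{1-2r}$, inserting (i) and (ii) gives $\sigma^{r}|r_t|\le \big(\beta/(\sqrt{\alpha}\,t)\big)^{2r}\cdot 1^{1-2r}=(\beta^{2}/\alpha)^{r}t^{-2r}$, which is exactly the claim (the cases $r=0$ and $r=1/2$ being (i) and (ii) themselves).

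For the two endpoint estimates I would exploit the observation that the recursion \eqref{nesterov_residual_2} is precisely Nesterov's accelerated gradient iteration applied to the one-dimensional quadratic $q(w)=\tfrac{\sigma}{2}w^{2}$, with constant step-size $\alpha$ and momentum parameters $\beta_t=(t-1)/(t+\beta)$ as in \eqref{beta_t}: the iterate started at $w_0=1$ with zero initial velocity satisfies $w_t=r_t(\sigma)$, and the minimizer is $w^{*}=0$. The hypothesis $\alpha\kappa^{2}<1$ together with $\sigma\le\kappa^{2}$ forces $\alpha\sigma<1$, i.e.\ $\alpha$ is an admissible step-size for $q$ (whose gradient is $\sigma$-Lipschitz); this is where the smallness of $\alpha$ enters. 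Estimate (i) should come from a discrete mechanical-energy Lyapunov function of the form $\tfrac{\sigma}{2}r_t^{2}+\tfrac{1}{2\alpha}v_t^{2}$, with $v_t$ the discrete velocity read off from \eqref{nesterov_residual_2}: the time-varying friction $\beta_t\ge 0$ makes this energy non-increasing up to a discretisation defect that is non-positive exactly when $\alpha\sigma\le 1$, hence it stays $\le$ its initial value $\tfrac{\sigma}{2}r_0^{2}=\tfrac{\sigma}{2}$, and $r_t^{2}\le 1$ follows. Estimate (ii) should come from a discrete analogue of the Su--Boyd--Candès energy $\mathcal V(s)=\tfrac{\sigma}{2}s^{2}r(s)^{2}+2\big|r(s)+\tfrac{s}{2}\dot r(s)\big|^{2}$ for the limiting ODE $\ddot r+\tfrac{\beta+1}{s}\dot r+\sigma r=0$; discretising with $s$ of order $\sqrt{\alpha}\,t$ and using the momentum variable $u_t=r_{t-1}+\theta_{t-1}^{-1}(r_t-r_{t-1})$ already present in \eqref{nesterov_residual_2}, with $\theta_t=\beta/(t+\beta)$ from \eqref{beta_choice} (equivalently, running the estimate-sequence argument with these $\theta_t$), one obtains a quantity of the form $\sigma t^{2}r_t^{2}+(\text{momentum term})^{2}$ that is non-increasing; bounding it by its value at the first step and tracking constants yields (ii) with the constant $\beta^{2}/\alpha$ exactly.

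The main obstacle is the discrete bookkeeping: neither Lyapunov function is exactly monotone along \eqref{nesterov_residual_2}, so one must verify that every discretisation defect has the correct sign --- this is the sharp use of $\alpha\sigma<1$ --- while keeping the constants clean enough that the scaled energy produces precisely $(\beta^{2}/\alpha)^{r}$ rather than a larger multiple. Matching the $t$-dependent coefficients $\beta_t$ and $\theta_t=\beta/(t+\beta)$ against the coefficients of the energy, and in particular checking that the cross terms generated by the momentum step telescope, is the delicate computation. I would organise it as a single lemma proving the monotonicity of each energy by induction on $t$, and then read off (i), (ii), and finally \eqref{thesis_nesterov} as corollaries.
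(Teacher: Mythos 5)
Your plan is essentially the paper's own argument: the proof there also reduces \eqref{thesis_nesterov} to the endpoints $|r_t(\sigma)|\le 1$ and $\sigma r_t^2(\sigma)\le \theta_{t-1}^2/\alpha \le \beta^2/(\alpha t^2)$ followed by interpolation, and obtains both from a single discrete Lyapunov quantity $\frac{\alpha\sigma}{\theta_t^2}r_{t+1}^2+(1-\alpha\sigma)R_{t+1}^2$ built on exactly the momentum variable $R_t=r_{t-1}+\theta_{t-1}^{-1}(r_t-r_{t-1})$ you identify, with the cross terms cancelling by the choice of weights and the defect controlled via $\frac{(1-\theta_t)^2}{\theta_t^2}\le\frac{1}{\theta_{t-1}^2}$ and $\alpha\sigma<1$. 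The only cosmetic difference is that the paper does not need your separate "mechanical energy" for the $r=0$ endpoint: it reads $|R_t|\le 1$ off the same energy and then gets $|r_t|\le 1$ by induction from the convex-combination structure of \eqref{nesterov_residual_2}.
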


\begin{proof}\mbox{}\\
	Let $\sigma\in[0,\kappa^2]$, following \cite{neubauer2017nesterov} we can see the right hand of \eqref{nesterov_residual_2} as a convex combination between $r_t$
	and 
	$$
	R_t(\sigma)= r_{t-1}(\sigma)+\frac{1}{\theta_{t-1}}(r_t(\sigma)-r_{t-1}(\sigma))
	\;.
	$$
	We can observe that polynomials $r_t$ and $R_t$ satisfy the following recursions
	\begin{align*}
	r_{t+1}(\sigma)&=(1-\alpha\sigma)(1-\theta_t)r_t(\sigma)+\theta_t(1-\alpha\sigma)R_t(\sigma)\\
	R_{t+1}(\sigma)&=-\frac{\alpha\sigma}{\theta_t}(1-\theta_t) r_t(\sigma)+(1-\alpha\sigma)R_t(\sigma)
	\end{align*}
	By computing the square of the polynomials and rescaling them in order to get the two mixed term to be opposite, we obtain that
	$$
	\frac{\alpha\sigma}{\theta^2_{t}} r_{t+1}^2(\sigma)+(1-\alpha\sigma)R^2_{t+1}=
	(1-\alpha\sigma)
	\left[ 
	\frac{(1-\theta_t)^2\alpha\sigma}{\theta^2_t}r^2_t(\sigma)+
	(1-\alpha\sigma)R^2_t(\sigma)
	\right]
	$$ 
	We can observe that parameters $\theta_t$ satisfy the following
	$$
	1\geq\theta_t\geq \frac{\theta_{t-1}}{1+\theta_{t-1}}
	$$
	which implies
	$$
	\frac{(1-\theta_t^2)}{\theta_t^2}\leq \frac{1}{\theta_{t-1}^2}\;.
	$$
	Hence we get that
	\begin{align*}
	\frac{\alpha\sigma}{\theta^2_{t}} r_{t+1}^2(\sigma)+(1-\alpha\sigma)R^2_{t+1}\leq&
	(1-\alpha\sigma)
	\left[ 
	\frac{\alpha\sigma}{\theta^2_{t-1}}r^2_t(\sigma)+
	(1-\alpha\sigma)R^2_t(\sigma)
	\right]\\
	\leq &
	(1-\alpha\sigma)^{t}
	\left[ 
	\frac{\alpha\sigma}{\theta^2_{0}}r^2_1(\sigma)+
	(1-\alpha\sigma)R^2_1(\sigma)
	\right]
	\end{align*}
	where the second inequality follows by induction.\\
	Finally, using that $\theta_0=1$ and $R_0=1$, yields that
	$$
	\frac{\alpha\sigma}{\theta^2_{t-1}} r_{t}^2(\sigma)+(1-\alpha\sigma)R^2_{t}\leq
	(1-\alpha\sigma)^{t+1}\;.
	$$
	This inequality implies that both the terms in the sum are smaller that $ (1-\alpha\sigma)^{t+1}$, hence
	\begin{align}
	&|R_t(\sigma)|\leq 1\label{R_bound}
	\\
	&\sigma r_{t}^2(\sigma)\leq 
	\frac{\theta^2_{t-1}}{\alpha} (1-\alpha\sigma)^{t+1}\label{lemma}
	\end{align}
	By induction it follows from \eqref{R_bound} that \eqref{thesis_nesterov} holds for $r=0$:
	$$
	|r_t(\sigma)|\leq 1
	$$
	because $r_{t+1}$ is a convex combination of $r_t$ and $R_t$ multiplied by $(1-\alpha\sigma)$.\\
	While \eqref{lemma} implies \eqref{thesis_nesterov} for $r=1/2$.
	The remaining cases $r\in(0,1/2)$ follow by interpolation.\\
\end{proof}
\mbox{}\\
By a scaled version of Lemma \ref{lemma_engl} it holds that 
$$
|g_t(\sigma)|\leq 2\alpha t^2 \qquad \forall\sigma\in[0,\kappa^2]\;.
$$

\textit{\large Proof of Proposition \ref{prop_nesterov_filter}}
\begin{proof}
	The proof follows immediately by the above results.
\end{proof}

\section{Appendix: generalization bound via spectral/regularized algorithm}

\subsection{Learning with kernels}\label{appendix_kernel}
The setting in this paper recover non-parametric regression over a RKHS as a special case. Let $\Xi\times\R$ be a probability space with distribution $\mu$, the goal is to minimize the risk 
$$
\mathcal{E}(f)=\int_{\Xi\times\R} (y-f(\xi))^2\;d\mu(\xi,y).
$$
A common way to build an estimator is to consider a symmetric kernel $K:\Xi\times\Xi\to\R$ which is positive definite, which means that for every $m\in\N$ and $\xi_1,\dots,\xi_n\in\Xi$ the matrix with the entries $K(\xi_i,\xi_j)$ for $i,j=1,\dots,m$. This kernel defines a unique Hilbert space of function $\h_K$ with the inner product $\scal{\cdot,\cdot}_K$ and such that for all $\xi\in\Xi$, $K_\xi(\cdot)=K(\xi,\cdot)\in\h_K$ and the following reproducing property holds for all $f\in\h_K$, $f(\xi)=\scal{f,K_\xi}_K$. By introducing the feature map $\Psi:\Xi\to\h_K$ defined by $\Psi(\xi)=K_\xi$, and we further consider $\overline{\Psi}:\Xi\times\R\to\h_K\times\R$, where $\overline{\Psi}(\xi,y)=(K_\xi,y)$, which provide $\h_K\times\R$ the probability distribution $\mu_{\overline{\Psi}}$. Denoting $\X=\h_K$ and $\rho=\mu_{\overline{\Psi}}$ we come back to our previous setting, in fact by the change of variable $(K_\xi,y)=(x,y)$ we have 
$$
\int_{\Xi\times\R} (y-f(\xi))^2\;d\mu(\xi,y)
=
\int_{\Xi\times\R} (y-\scal{f,K_\xi}_K)^2\;d\mu(\xi,y)
=
\int_{\X\times\R} (y-\scal{f,x}_K)^2\;d\rho(x,y)\;.
$$

\subsection{Mathematical setting}\label{math_setting}
Let's consider the hypothesis space 
$$
\h=\Big\{f:\X\to\R \;|\; 
\exists\,w\in\X \text{ with } 
f(x)=\scal{w,x} \; \rho_\X\text{-almost surely}
\Big\}\,,
$$
which under assumptio \ref{assump_1} is a subspace of the Hilbert space of the square integral functions from $\X$ to $\R$ with respect to the measure $\rho_\X$ 
$$
L^2_{\rho_\X}=
\left\{
f:\X\to\R\;|\; 
\norm{f}^2_{\rho_\X}=\scal{f,f}_{\rho_\X}:=\int_{\X} f(x)^2 \,d\rho_\X(x)<+\infty
\right\}\;.
$$
The function that minimizes the expected risk over all possible measurable functions is the regression function \cite{steinwart2008support}.
\begin{gather*}
f_\rho=\argmin_{f :\X\to\R}\, 
\mathcal{E}(f),
\quad
\mathcal{E}(f)=\int_{\X\times\R} (f(x)-y)^2 \,d\rho(x,y)
\\
f_\rho(x)=\int_\R y\;d\rho(y|x) \qquad
\forall x\in\X,\; \rho_\X\text{-almost surely}.
\end{gather*}
which under assumption \ref{assump_1} the regression function $f_\rho$ belongs to $L^2_{\rho_\X}$.\\
Assuming \eqref{assump_1} implies that a solution $f_\h$ for the problem 
$$
\inf_\h \mathcal{E},
$$
which is equivalent to \ref{probl},
is the projection of the regression function $f_\rho$ into the closure of $\h$ in $L^2_{\rho_\X}$. In fact a standard result (see e.g. \cite{steinwart2008support}) show that for all $f$ in $L^2_{\rho_\X}$
\begin{equation}\label{error_l2}
\mathcal{E}(f)=\norm{f-f_\rho}^2_{\rho_\X}+\mathcal{E}(f_\rho)\;.
\end{equation}
We now introduce some useful operators.
Let $S:\X\to L^2_{\rho_\X}$ be the linear map defined by $Sw=\scal{w,\cdot}$ which is bounded by $k$ for  \eqref{assump_1}, in fact for the Cauchy–Schwarz inequality
$$
\norm{Sw}^2_{\rho_\X}=\int_\X \scal{w,x}^2\,d\rho_\X(x)
\leq \kappa^2 \norm{w}^2\;.
$$
Furthermore, we consider the the adjoint operator $S^*:L^2_{\rho_\X}\to\X$ (i.e. the operator which satisfy $\scal{Sw,f}_{\rho_\X}=\scal{w,S^*f}$), the covariance operator $\Sigma:\X\to\X$ given by $\Sigma=S^*S$ and the operator $L:L^2_{\rho_\X}\to L^2_{\rho_\X}$ defined by $L=SS^*$.
It's easy to observe that these operators are defined as follows
\begin{align*}
S^* f=\int_\X xf(x)\,d\rho_\X(x), \qquad
\Sigma w=\int_\X \scal{w,x}x\,d\rho_\X(x),\qquad
L f=\int_\X f(x)\scal{x,\cdot}\,d\rho_\X(x)\;
\end{align*}
and that the operators $\Sigma$ and $L$ are linear positive-definite trace class operators bounded by $\kappa^2$. Moreover, for any $w\in\X$ it holds the following isometry property \cite{steinwart2008support}
$$
\norm{Sw}_{\rho_\X}=\norm{\sqrt{\Sigma}w}\;.
$$
Similarly we define the sampling operator $\x:\X\to\R^n$ by $(\x w)_i=\scal{w,x_i}$ for $i=1\dots,n$ where the norm $\norm{\cdot}_n$ in $\R^n$ is the Euclidean norm multiplied by $1/\sqrt{n}$, it's adjoint operator $\x^*:\R^n\to\X$ and the empirical covariance operator $\hat{\Sigma}=\x^*\x$, that are defined as
\begin{align*}
\x^* y=\frac{1}{n} \sum_{i=1}^n x_iy_i, \qquad
\hat{\Sigma} w=\frac{1}{n} \sum_{i=1}^n \scal{w,x_i}x_i\;.
\end{align*}
Similarly to the previous case $\x$ and $\hat{\Sigma}$ are bounded by $\kappa$ and $\kappa^2$ respectively.

From \eqref{error_l2} it's easy to see that problem \eqref{probl} can be rewritten as 
$$
\inf_{w\in\X} \norm{Sw-f_\rho}^2_{\rho_\X}\;.
$$
Moreover, for the projection theorem it holds true that
$$
S^*f_\rho=S^*f_\h\;,
$$
which implies that problem \ref{probl} can be rewritten as 
\begin{equation}\label{pb_ideal}
\inf_{w\in\X}\;\norm{Sw-f_\h}^2_{\rho_\X}\;.
\end{equation}
A regularization approach applied to the empirical risk minimization problem
\begin{equation}\label{pb_approximated}
\inf_{w\in\X}\;\norm{\x w-\y}^2_{n}\;.
\end{equation}
leads to an estimated solution of the form
\begin{equation}\label{reg_empirical_sol}
\hat{w}_\lambda=g_\lambda(\Tx)\x^*\y\;,
\end{equation}
where $g_\lambda$ is a regularization function satisfying Definition \ref{reg_functions} with qualification $q$ (Definition \ref{def_qualification}).\\
Differently from the inverse problem setting we are trying to approximate a solution to the ideal problem \eqref{pb_ideal} with a solution of the empirical problem \eqref{pb_approximated} where $\x,\,\y$ are not only approximation of the ideal version $S,\,f_\h$ but are defined in different space.\\
Using the same regularization approach to the ideal problem we can define the unknown regularized solution as
\begin{equation}\label{reg_ideal_sol}
w_\lambda=g_\lambda(\Sigma)S^*f_\h\;.
\end{equation}
The performance of regularization algorithms $\hat{w}_\lambda$ can be measured in terms of the excess risk $\norm{S\hat{w}_\lambda-f_\h}^2_{\rho_\X}$. Assuming that $f_\h\in\h$, which implies that there exists some $w^*$ such that $f_\h=Sw^*$, it can be measured in terms of $\X$-norm $\norm{\hat{w}_\lambda-w^*}$ which is closely related to $\norm{L^{-1/2}S\left( \hat{w}_\lambda-w^*\right)}_{\rho_\X}= \norm{L^{-1/2}\left( S\hat{w}_\lambda-f_\h\right)}_{\rho_\X}$ since for all $w\in\X$
$$
\norm{L^{-1/2}Sw}_{\rho_\X}\leq \norm{w}\;.
$$
In what follows, we will measure the performance of algorithms in terms of a broader class of norms, $\norm{L^{-a}\left(S\hat{w}_\lambda-f_\h\right)}_{\rho_\X}$, where $a\in[0,1/2]$ is such that $L^{-a}f_\h$ is well defined.\\
Differently from the Assumption \ref{assumption_source} here we consider a more general assumption on the target function and we don't assume any condition on the effective dimension $\mathcal{N}(\lambda)$.
\begin{assumption}\mbox{}\\ \label{assumption_source_2}
		There exist $g_0\in L^2_{\rho_\X}$ such that 
		$$
		f_\h=\Phi(L) g_0\,,\quad \text{ with } \norm{g_0}_{\rho_\X}\leq R,
		$$
		where $\Phi:[0,\kappa^2]\to[0,+\infty)$ is a non-decreasing, operator monotone index function such that $\Phi(0)=0$ and $\Phi(\kappa^2)<+\infty$. Moreover, for some $\zeta\in[0,q]$, the function $\Phi(u)u^{-\zeta}$ is non-decreasing, and the qualification $q$ of $g_\lambda$ covers the index function $\Phi$, which means that there exist a constant $c>0$ such that for all $0<\lambda<\kappa^2$,
		$$
		c\frac{\lambda^q}{\Phi(\lambda)}\leq
		\inf_{\lambda\leq u\leq \kappa^2} 
		\frac{u^q}{\Phi(u)}\;.
		$$
\end{assumption}

We are ready to state our general result.
\begin{theorem}\mbox{}\\ \label{learning_bound}
	Under Assumption \ref{assumptio_moment}, \ref{assumption_f_h}, \ref{assumption_source_2}, let $\hat{w}_\lambda$  defined in \eqref{reg_empirical_sol}, $a\in[0,1/2]$, $\delta\in(0,1/2)$ and $\lambda\in(0,1)$. Assume the qualification $q$ of the chosen method to be larger than $1$ and the sample-size $n$ satisfy the following condition
	$$
	n\geq
	\frac{32\kappa^2\beta}{4\lambda}\;, \quad
	\beta=\log\frac{4\kappa^2\left(\mathcal{N}(\lambda)+1 \right) }{\delta \norm{\Sigma}}\;,
	$$
	then with probability at least $1-\delta$ it holds true that
	$$
	\norm{L^{-a}\left(S\hat{w}_\lambda-f_\h\right)}_{\rho_\X}
	\leq
	\lambda^{-a}
	\left(
	\frac{\tilde{C}_1}{n\lambda^{\frac{1}{2}\vee(1-\zeta)}}
	+\left(\tilde{C}_2+ \frac{\tilde{C}_3}{\sqrt{n\lambda}}\right) \Phi(\lambda)
	+\tilde{C}_4\sqrt{\frac{\mathcal{N}(\lambda)}{n}}
	\right)
	\log^2\frac{2}{\delta}
	$$
	where the constants $\tilde{C}_1,\tilde{C}_2,\tilde{C}_3,\tilde{C}_4$ does not depend on $n,\lambda,\delta$.\\
	In the follow we denote with $C$ a positive constant which does not depend on $n,\delta,\lambda$ and can be different every times it appears.\\
	In particular, assuming $\lambda=O\left(n^{-\theta}\right)$, and $n$ to be large enough, then  with probability at least $1-\delta$
	\begin{equation}\label{bound}
	\norm{L^{-a}\left(S\hat{w}_\lambda-f_\h\right)}_{\rho_\X}^2
	\leq
	C \lambda^{-2a}
	\left( 
	\Phi(\lambda)^2+\frac{\mathcal{N}(\lambda)}{n}
	\right) 
	\log^2\frac{2}{\delta}\;.
	\end{equation}
	Moreover assuming Holder source condition $\Phi(u)=u^r$ and that there exist $\gamma\geq 1, c_\gamma>0$ such that  $\mathcal{N}(\lambda)\leq c_\gamma \lambda^{-\frac{1}{\gamma}}$  then with probability at least $1-\delta$ inequality \ref{bound} can be rewritten as
	\begin{equation}\label{bound_2}
	\norm{L^{-a}\left(S\hat{w}_\lambda-f_\h\right)}_{\rho_\X}^2
	\leq
	C \lambda^{-2a}
	\left( 
	\lambda^{2r}+\frac{\lambda^{-\frac{1}{\gamma}}}{n}
	\right) 
	\log^2\frac{2}{\delta}\;,
	\end{equation}
	where if we choose $a=0,\lambda=O(n^{-\frac{\gamma}{2\gamma r+1}}) $ we obtain the convergence result
\begin{equation}\label{bound_3}
	\norm{S\hat{w}_\lambda-f_\h}_{\rho_\X}^2
	\leq
	C
	\left( 
	n^{\frac{-2r\gamma}{2r\gamma+1}}
	\right) 
	\log^2\frac{2}{\delta}\;.
\end{equation}

\end{theorem}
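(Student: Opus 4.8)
\emph{Proof strategy.} The plan is to follow the by-now classical analysis of spectral filtering estimators for statistical inverse learning, using the unobservable population regularized solution $w_\lambda=g_\lambda(\Sigma)S^*f_\h$ from \eqref{reg_ideal_sol} as a pivot and writing
$$
L^{-a}\big(S\hat w_\lambda-f_\h\big)=L^{-a}\big(Sw_\lambda-f_\h\big)+L^{-a}\big(S\hat w_\lambda-Sw_\lambda\big),
$$
so that the first summand is a deterministic approximation (bias) term and the second a stochastic sample (variance) term, to be bounded separately and recombined at the end. Throughout I would pass between the $\rho_\X$-norm and $\X$-norms via the isometry $\norm{Sw}_{\rho_\X}=\norm{\Sigma^{1/2}w}$ and the intertwining identities $Sg_\lambda(\Sigma)=g_\lambda(L)S$ and $L^{-a}S=S\Sigma^{-a}$, thereby reducing the whole argument to spectral-calculus estimates for the scalar functions $g_\lambda,r_\lambda$ on the common spectral range $(0,\kappa^2]$ of $\Sigma$ and $\hat\Sigma$, inserting powers of the regularized operators $\Sigma+\lambda\I$ and $\hat\Sigma+\lambda\I$ as needed.

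\emph{Bias term.} Here $Sw_\lambda-f_\h=-r_\lambda(L)f_\h$, and inserting the source condition $f_\h=\Phi(L)g_0$ of Assumption \ref{assumption_source_2} reduces the estimate to the scalar bound $\sup_{\sigma\in(0,\kappa^2]}\sigma^{-a}\,|r_\lambda(\sigma)|\,\Phi(\sigma)\le C\lambda^{-a}\Phi(\lambda)$. I would prove this by splitting the supremum at $\sigma=\lambda$: on $\{\sigma\ge\lambda\}$ use $\sigma^{-a}\le\lambda^{-a}$ together with the hypothesis that the qualification $q$ of $g_\lambda$ covers $\Phi$ (which gives $|r_\lambda(\sigma)|\Phi(\sigma)\le cF_q\Phi(\lambda)$); on $\{\sigma<\lambda\}$ use $|r_\lambda|\le F_0$ from \eqref{reg_functions_2} together with the monotonicity of $\Phi(u)u^{-\zeta}$, which controls $\sigma^{-a}\Phi(\sigma)$ precisely in the regime where $L^{-a}f_\h$ is well defined. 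This yields $\norm{L^{-a}(Sw_\lambda-f_\h)}_{\rho_\X}\le CR\,\lambda^{-a}\Phi(\lambda)$, the $\tilde C_2\Phi(\lambda)$ contribution.

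\emph{Variance term.} It suffices to bound $\norm{\Sigma^{1/2-a}(\hat w_\lambda-w_\lambda)}$. I would split $\hat w_\lambda-w_\lambda=g_\lambda(\hat\Sigma)(\x^*\y-S^*f_\h)+(g_\lambda(\hat\Sigma)-g_\lambda(\Sigma))S^*f_\h$, using $\E[\x^*\y]=S^*f_\rho=S^*f_\h$. For the first (noise) summand one inserts $(\hat\Sigma+\lambda\I)^{\pm1/2}$ and bounds the operator factors $\norm{\Sigma^{1/2-a}(\hat\Sigma+\lambda\I)^{a-1/2}}$ and $\norm{(\hat\Sigma+\lambda\I)^{1-a}g_\lambda(\hat\Sigma)}$ deterministically (the latter via \eqref{reg_functions_2} and $\sup_\sigma|(\sigma+\lambda)g_\lambda(\sigma)|\le E+F_0$), leaving the scalar noise factor $\norm{(\hat\Sigma+\lambda\I)^{-1/2}(\x^*\y-S^*f_\h)}$, which a Bernstein inequality controls by splitting off a pure label-noise part (estimated with Assumption \ref{assumptio_moment}, yielding the $M$ and $Q$ contributions and the $\sqrt{\mathcal N(\lambda)/n}$ factor together with $\beta$) and a design part carrying $f_\h-f_\rho$ (estimated with Assumption \ref{assumption_f_h}, yielding the $B$ contribution). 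For the second (operator-perturbation) summand I would use $S^*f_\h=\Phi(\Sigma)S^*g_0$, a second-resolvent Lipschitz estimate relating $g_\lambda(\hat\Sigma)$ to $g_\lambda(\Sigma)$, and the qualification Definition \ref{def_qualification} to extract a factor $\Phi(\lambda)$, yielding the $\Phi(\lambda)/\sqrt{n\lambda}$ contribution. All exchanges between $\Sigma+\lambda\I$ and $\hat\Sigma+\lambda\I$ rest on the operator Bernstein bound $\norm{(\Sigma+\lambda\I)^{-1/2}(\Sigma-\hat\Sigma)(\Sigma+\lambda\I)^{-1/2}}\le\tfrac12$, which holds with high probability under the stated condition $n\ge 32\kappa^2\beta/(4\lambda)$ and gives $\norm{(\Sigma+\lambda\I)^{1/2}(\hat\Sigma+\lambda\I)^{-1/2}}\le2$; the hypothesis $q>1$ provides the qualification margin that absorbs the extra half-powers generated by these exchanges.

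\emph{Assembly and consequences.} Combining the bias and variance bounds, factoring out $\lambda^{-a}$ and collecting constants yields the first displayed inequality of the theorem. The rest is elementary: for $\lambda=O(n^{-\theta})$ and $n$ large the terms $1/(n\lambda^{1/2\vee(1-\zeta)})$ and $\Phi(\lambda)/\sqrt{n\lambda}$ are dominated by $\Phi(\lambda)+\sqrt{\mathcal N(\lambda)/n}$, giving \eqref{bound}; substituting $\Phi(u)=u^r$ and $\mathcal N(\lambda)\le c_\gamma\lambda^{-1/\gamma}$ gives \eqref{bound_2}; and with $a=0$, minimizing $\lambda^{2r}+\lambda^{-1/\gamma}/n$ over $\lambda$ — the two terms balance at $\lambda\asymp n^{-\gamma/(2\gamma r+1)}$ — gives \eqref{bound_3}. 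The main obstacle is the variance term: carrying the $L^{-a}$ weight and the several resolvent half-powers through the comparison of the empirical filter $g_\lambda(\hat\Sigma)$ with the population filter $g_\lambda(\Sigma)$ while keeping every operator norm finite in the infinite-dimensional setting, and organising the concentration events so that the powers of $\lambda$ and $n$ come out exactly as stated (this is also where the technical requirement $q>1$ enters).
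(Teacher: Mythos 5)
Your overall architecture (pivot on $w_\lambda=g_\lambda(\Sigma)S^*f_\h$, bias/variance split, deterministic bias bound via the source condition and the qualification covering $\Phi$, operator-Bernstein events to exchange $\Sigma_\lambda$ and $\Txl$) matches the paper, and your bias term is handled exactly as in Lemma \ref{deterministic_estimate}. The genuine divergence — and the gap — is in how you decompose the variance term. You write
$$
\hat w_\lambda-w_\lambda=g_\lambda(\Tx)\left(\x^*\y-S^*f_\h\right)+\left(g_\lambda(\Tx)-g_\lambda(\Sigma)\right)S^*f_\h
$$
and propose to control the second summand by ``a second-resolvent Lipschitz estimate relating $g_\lambda(\hat\Sigma)$ to $g_\lambda(\Sigma)$.'' No such estimate is available here: the second resolvent identity $g_\lambda(\Tx)-g_\lambda(\Sigma)=g_\lambda(\Tx)(\Sigma-\Tx)g_\lambda(\Sigma)$ is special to the Tikhonov filter $g_\lambda(u)=(u+\lambda)^{-1}$, while the filters in this paper are the polynomials generated by gradient descent, heavy-ball and Nesterov iterations, for which Definition \ref{reg_functions} gives only sup-norm bounds and no Lipschitz control in the operator argument (a naive bound via $\sup|g_\lambda'|$ blows up with the iteration count). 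This is exactly the obstruction the paper's proof is engineered to avoid: it never compares $g_\lambda(\Tx)$ with $g_\lambda(\Sigma)$. Instead it inserts the mixed term $g_\lambda(\Tx)\Tx w_\lambda$, so that
$$
\hat w_\lambda-w_\lambda=g_\lambda(\Tx)\left(\x^*\y-\Tx w_\lambda\right)-r_\lambda(\Tx)w_\lambda\;,
$$
where the first piece is handled by concentrating the i.i.d. variables $\Sigma_\lambda^{-1/2}(\scal{w_\lambda,x_i}-y_i)x_i$ around their mean $\Sigma_\lambda^{-1/2}(\Sigma w_\lambda-S^*f_\h)$ (Lemma \ref{bound_concentration}; the ``noise'' $y_i-\scal{w_\lambda,x_i}$ already contains the approximation error, which is where the $\Phi(\lambda)/\sqrt{n\lambda}$ term comes from, together with the $B$ and $Q$ contributions), and the second piece only requires the scalar bound $\sup_u|(u+\lambda)^{1-a}r_\lambda(u)|\le 2F_q'\lambda^{1-a}$ — this is precisely where the hypothesis that the qualification is at least $1$ enters, not in absorbing half-powers from the norm exchanges. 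To repair your argument you should replace your variance split by this one; the rest of your outline (the bias estimate, the Bernstein events, and the elementary derivation of \eqref{bound}, \eqref{bound_2}, \eqref{bound_3} by dropping lower-order terms and balancing $\lambda^{2r}$ against $\lambda^{-1/\gamma}/n$) then goes through as in the paper.
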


\subsection{Lemmas for Theorem \ref{learning_bound}}

We firstly observe that Definition \ref{reg_functions} and \ref{def_qualification} of regularization function with qualification $q$ are equivalent to the following:
\begin{align}\label{def_reg_function_2}
&\sup_{\alpha\in[0,1]}
\sup_{\lambda\in(0,1]} 
\sup_{u\in(0,\kappa^2]} 
|u^\alpha g_\lambda(u)|\leq E' \lambda^{1-\alpha} \\
&\sup_{\alpha\in[0,q]}
\sup_{\lambda\in(0,1]} 
\sup_{u\in(0,\kappa^2]]}
|r_\lambda(u)u^\alpha\lambda^{-\alpha}|\leq F_q'\;.\nonumber
\end{align}
where $E'=\max(E,F_0+1)$ and $F_q'=\max(F_0,F_q)$.

In this section we give some lemmas which are at the base of the proof of the learning bound.

\textit{\large Deterministic estimates}\\
The deterministic estimates concern the convergence term in the error bound
$$
Sw_\lambda-f_\h=S g_\lambda(\Sigma)S^*f_\h-f_\h=
(g_\lambda(L)L-\I)f_\h=
-r_\lambda(L) f_\h
$$
and it holds true the following lemma from \cite{lin2018optimal_spectral}.
\begin{lemma}\mbox{}\\ \label{deterministic_estimate}
	Under Assumption \ref{assumption_source}, let $w_\lambda$ given by \eqref{reg_ideal_sol}, we have for all $a\in[0,\zeta]$,
	$$
	\norm{
		L^{-a}\left( Sw_\lambda-f_\h \right) 
	}_\rho
	\leq
	c_g R \Phi(\lambda)\lambda^{-a},
	\quad
	c_g=\frac{F'_q}{c\wedge1},
	$$
	and
	$$
	\norm{w_\lambda}\leq
	E' \Phi(\kappa^2) \kappa^{-(2\zeta\wedge1)} \lambda^{-(\frac{1}{2}-\zeta)_+}\;.
	$$
\end{lemma}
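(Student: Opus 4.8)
The plan is to treat both estimates as purely deterministic facts about spectral calculus on the population operators, with no appeal to the sample. The starting point is the identity recorded just before the lemma, $Sw_\lambda-f_\h=-r_\lambda(L)f_\h$, together with the general source condition $f_\h=\Phi(L)g_0$ of Assumption~\ref{assumption_source_2}. Substituting the latter and applying the spectral theorem to the self-adjoint operator $L$ on $L^2_{\rho_\X}$, each quantity reduces to the product of $\norm{g_0}_\rho\le R$ and a scalar supremum of a function of the eigenvalue $u$ over the spectrum $(0,\kappa^2]$. Concretely, the first estimate reduces to bounding $\sup_{u}\bigl|u^{-a}r_\lambda(u)\Phi(u)\bigr|$ and the second to $\sup_u\bigl|g_\lambda(u)\Phi(u)u^{1/2}\bigr|$. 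The whole proof is then the analysis of these two one-dimensional suprema, carried out by splitting the spectrum at the scale $u=\lambda$.

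For the first bound I would split $(0,\kappa^2]$ into the low part $u\le\lambda$ and the high part $\lambda<u\le\kappa^2$. On the low part I use the uniform residual bound (the $\alpha=0$ case of the qualification estimate in \eqref{def_reg_function_2}), $\bigl|r_\lambda(u)\bigr|\le F_q'$, and the fact that $\Phi(u)u^{-\zeta}$ is non-decreasing with $a\le\zeta$: writing $u^{-a}\Phi(u)=\bigl(\Phi(u)u^{-\zeta}\bigr)u^{\zeta-a}$ and using $u\le\lambda$ in both factors gives $u^{-a}\Phi(u)\le\Phi(\lambda)\lambda^{-a}$. On the high part I instead take $\alpha=q$ in the qualification estimate, $\bigl|r_\lambda(u)\bigr|\le F_q'\,\lambda^q u^{-q}$, bound $u^{-a}\le\lambda^{-a}$, and then invoke the hypothesis that the qualification covers $\Phi$: the inequality $c\,\lambda^q/\Phi(\lambda)\le u^q/\Phi(u)$ for $u\in[\lambda,\kappa^2]$ rearranges to $\lambda^q\Phi(u)/u^q\le c^{-1}\Phi(\lambda)$, which is exactly what converts the residual decay into the rate $\Phi(\lambda)$. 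Both regimes then yield a bound proportional to $\Phi(\lambda)\lambda^{-a}$, and taking the worse constant $\max(F_q',F_q'/c)=F_q'/(c\wedge1)=c_g$ gives the claim after multiplying by $R$.

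For the second bound I would first move $S^*$ through $\Phi$ using the intertwining relations $S^*\Phi(L)=\Phi(\Sigma)S^*$ and, after squaring, $S\,h(\Sigma)S^*=h(L)L$ (equivalently a polar decomposition $S^*=\Sigma^{1/2}U^*$ with $U^*$ a contraction). This turns $\norm{w_\lambda}^2=\norm{g_\lambda(\Sigma)\Phi(\Sigma)S^*g_0}^2$ into $\scal{g_0,\,g_\lambda(L)^2\Phi(L)^2L\,g_0}_\rho$, whence $\norm{w_\lambda}\le R\,\sup_u\bigl|g_\lambda(u)\Phi(u)u^{1/2}\bigr|$. Monotonicity of $\Phi(u)u^{-\zeta}$ on $(0,\kappa^2]$ gives $\Phi(u)\le\Phi(\kappa^2)\kappa^{-2\zeta}u^{\zeta}$, reducing the task to $\sup_u\bigl|g_\lambda(u)\bigr|u^{\zeta+1/2}$, which I control with the $g_\lambda$-bound in \eqref{def_reg_function_2}. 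Here the case split on $\zeta$ is essential: when $\zeta\le\tfrac12$ the admissible exponent $\zeta+\tfrac12\le1$ can be used directly and produces a factor $\lambda^{-(1/2-\zeta)}$ coming from evaluating the filter at scale $u\approx\lambda$, whereas when $\zeta\ge\tfrac12$ one must stop at exponent $1$ and absorb the surplus power of $u$ into $\kappa^{2\zeta-1}$, giving a $\lambda$-free bound. Collecting the $\kappa$-powers into $\kappa^{-(2\zeta\wedge1)}$ and the $\lambda$-powers into $\lambda^{-(1/2-\zeta)_+}$ reproduces the stated estimate.

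I expect the main obstacle to be the high-spectrum regime of the first bound, where the general index function $\Phi$ must be turned into the clean rate $\Phi(\lambda)$: this is precisely the role of the ``qualification covers $\Phi$'' hypothesis, and getting the direction of that inequality right (so that $\lambda^q\Phi(u)/u^q$ is controlled by $\Phi(\lambda)$ rather than the reverse) is the delicate point. A secondary subtlety is the case analysis $\zeta\le\tfrac12$ versus $\zeta\ge\tfrac12$ in the second bound, which is not cosmetic: it is exactly the threshold at which $f_\h$ fails to lie in the range of $S$, so that $\norm{w_\lambda}$ transitions from bounded to divergent as $\lambda\to0$, and the filtering exponent must be chosen accordingly. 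The operator intertwining identities, though standard consequences of the spectral theorem, also need to be stated carefully since $\Sigma$ and $L$ act on different spaces.
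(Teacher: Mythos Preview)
Your argument is correct. Note, however, that the paper does not actually prove this lemma: it simply quotes it from \cite{lin2018optimal_spectral}, giving only the preparatory identity $Sw_\lambda-f_\h=-r_\lambda(L)f_\h$. So there is no ``paper's proof'' to compare against; what you have written is exactly the standard spectral-calculus argument that underlies the cited result, and all the steps --- the split at $u=\lambda$, the use of $\Phi(u)u^{-\zeta}$ non-decreasing on the low part, the ``qualification covers $\Phi$'' inequality on the high part, the intertwining $S^*\Phi(L)=\Phi(\Sigma)S^*$, and the case split $\zeta\lessgtr\tfrac12$ for the $g_\lambda$-exponent --- are the right ones.

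Two minor remarks. First, your bound for $\norm{w_\lambda}$ carries a factor $R$ from $\norm{g_0}_\rho\le R$; the paper's statement omits this factor, which appears to be a typo there rather than an error on your side. Second, when you invoke \eqref{def_reg_function_2} for $g_\lambda$, be aware that the exponent in the paper is written as $\lambda^{1-\alpha}$, which is evidently a misprint for $\lambda^{\alpha-1}$ (check $\alpha=0$ against Definition~\ref{reg_functions}(i)); your computation implicitly uses the corrected form, as it should.
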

\mbox{}\vspace{0.5cm}\\
\textit{\large Probabilistic estimates}\\
Next lemma concern the concentration of the empirical mean of random variable in a Hilbert space.
\begin{lemma}\mbox{}\\ \label{concentration}
	Let $w_1,\dots,w_m$ be i.i.d. random variables in a Hilbert space with norm $\norm{\cdot}$ and assume there exist two positive constants $B$ and $\sigma^2$ such that 
	$$
	\E\left[\norm{w_1-\E\left[ w_1\right] }^l \right]
	\leq 
	\frac{1}{2} l!B^{l-2}\sigma^2\,, \quad \forall\, l\geq 2\;.
	$$
	Then for any $0<\delta<1/2$, the following holds with probability at least $1-\delta$,
	$$
	\norm{\frac{1}{m}\sum_{i=1}^m w_i -\E\left[w_i\right]}
	\leq
	2\left( \frac{B}{m}+\frac{\sigma}{\sqrt{m}}\right)
	\log\frac{2}{\delta}\;. 
	$$
\end{lemma}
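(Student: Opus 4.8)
The plan is to prove this via the classical Bernstein inequality for Hilbert-space-valued random variables (sometimes attributed to Pinelis–Sakhanenko), which is the natural tool whenever one has a factorial-moment (Bernstein-type) bound on a sum of i.i.d. vectors. First I would set $\bar w_i = w_i - \E[w_i]$, so that the $\bar w_i$ are centered i.i.d.\ random elements of the Hilbert space, and by hypothesis they satisfy the moment bound $\E\norm{\bar w_1}^l \le \tfrac12 l!\, B^{l-2}\sigma^2$ for all integers $l\ge 2$. The quantity to control is $\norm{\tfrac1m\sum_{i=1}^m \bar w_i}$.

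The key step is to invoke the Bernstein bound for sums of independent centered Hilbert-space random variables: under exactly the moment condition above, one has for every $\tau>0$
\begin{equation*}
\p\!\left(\norm{\tfrac1m\sum_{i=1}^m \bar w_i} \ge \tau\right)
\le 2\exp\!\left(-\frac{m\tau^2}{2(\sigma^2 + B\tau)}\right).
\end{equation*}
I would then set the right-hand side equal to $\delta$ and solve (or rather, over-estimate) for $\tau$. Writing $\eta = \log(2/\delta)$, the inequality $\tfrac{m\tau^2}{2(\sigma^2+B\tau)} \ge \eta$ is implied by the two simpler conditions $\tfrac{m\tau^2}{4\sigma^2}\ge \eta$ and $\tfrac{m\tau}{4B}\ge \eta$, i.e.\ by $\tau \ge 2\sigma\sqrt{\eta/m}$ and $\tau \ge 4B\eta/m$. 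Hence choosing
\begin{equation*}
\tau = 2\left(\frac{B}{m} + \frac{\sigma}{\sqrt m}\right)\eta
\ge \frac{2B\eta}{m} + \frac{2\sigma\sqrt{\eta}}{\sqrt m} \ge \max\!\left(\frac{4B\eta}{m},\, \frac{2\sigma\sqrt\eta}{\sqrt m}\right)
\end{equation*}
(using $\eta = \log(2/\delta) > \log 4 > 1$ for $\delta < 1/2$, so that $2B\eta/m \ge 4B\eta/m \cdot \tfrac12$ — more carefully, $2(B/m+\sigma/\sqrt m)\eta \ge (4B\eta/m)$ requires $B/m\cdot\eta \ge B/m \cdot 2$, true since $\eta>2$; and $\ge 2\sigma\sqrt\eta/\sqrt m$ is immediate since $\eta \ge \sqrt\eta$) gives the claimed bound with probability at least $1-\delta$.

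The only mildly delicate point is the bookkeeping in the last display: one must check that $\eta = \log(2/\delta) > 2$ whenever $\delta < 1/2$, which holds since $\log(2/\delta) > \log 4 = 2\log 2 \approx 1.386$ — actually this gives only $\eta > 1.386$, not $\eta>2$, so the cleaner route is simply to note $\eta \ge \sqrt\eta$ (as $\eta \ge 1$) and verify directly that $\tau = 2(B/m+\sigma/\sqrt m)\eta$ satisfies both $m\tau^2/(4\sigma^2) \ge \eta$ and $m\tau/(4B)\ge \eta$: the first follows from $\tau \ge 2\sigma\eta/\sqrt m \ge 2\sigma\sqrt\eta/\sqrt m$, and the second from $\tau \ge 2B\eta/m$, which is weaker than needed, so one instead absorbs the factor by using that $\tau \ge 2(B/m)\eta$ already forces $m\tau/(2B) \ge \eta$, and $2(\sigma^2 + B\tau) \le 4\max(\sigma^2, B\tau)$. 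Thus the main obstacle is not conceptual but purely a matter of arranging constants so that the final bound reads exactly $2(B/m + \sigma/\sqrt m)\log(2/\delta)$; the substantive content is entirely contained in the cited Bernstein inequality, so I would state that inequality precisely (with reference) and then carry out this elementary reduction.
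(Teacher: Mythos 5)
Your route is the intended one: the paper states this lemma without proof, as the classical Bernstein (Pinelis--Sakhanenko) inequality for sums of i.i.d.\ Hilbert-space-valued random variables, and your reduction from that inequality to the stated tail bound is exactly the standard derivation. The only loose end is the constant bookkeeping you flag yourself: bounding $2(\sigma^2+B\tau)\leq 4\max(\sigma^2,B\tau)$ gives $m\tau/(4B)\geq \eta/2$ rather than $\geq\eta$, so the max-based argument as written falls short by a factor of $2$. This is repaired by expanding directly: with $\tau=\frac{2B\eta}{m}+\frac{2\sigma\eta}{\sqrt m}$ one checks $m\tau^2\geq \frac{4B^2\eta^2}{m}+\frac{8B\sigma\eta^2}{\sqrt m}+4\sigma^2\eta^2\geq 2\eta B\tau+2\eta\sigma^2$ using only $2\eta\geq 1$, which gives $\frac{m\tau^2}{2(\sigma^2+B\tau)}\geq\eta$ and hence the claim.
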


In the following two lemmas we control in probability the approximation of the covariance operator $\Sigma$ with the empirical covariance $\Tx$. 
\begin{lemma}\mbox{}\\\label{bound_difference}
	Let $0<\delta<1/2$, it holds with probability at least $1-\delta$:
	$$
	\norm{\Sigma-\Tx}_{op}\leq\norm{\Sigma-\Tx}_{HS}\leq \frac{6\kappa^2}{\sqrt{n}}\log \frac{2}{\delta}\;, 
	$$
	where $\norm{\cdot}_{HS}$ denotes the Hilbert-Schmidt norm.
\end{lemma}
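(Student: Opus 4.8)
The plan is to view $\Sigma-\Tx$ as the deviation of an empirical mean of i.i.d.\ random operators from its expectation, and to apply the Bernstein-type bound of Lemma~\ref{concentration} in the separable Hilbert space of Hilbert--Schmidt operators on $\X$. I would set $\xi_i=x_i\otimes x_i$, so that $\Tx=\frac1n\sum_{i=1}^n\xi_i$ and $\Sigma=\E[\xi_1]$ by the definitions recalled in Appendix~\ref{math_setting}; the $\xi_i$ are i.i.d.\ and $\Sigma-\Tx=\E[\xi_1]-\frac1n\sum_{i=1}^n\xi_i$.

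The first step is to record the almost sure bound $\norm{\xi_1}_{HS}=\norm{x_1\otimes x_1}_{HS}=\norm{x_1}^2\le\kappa^2$, which follows from \eqref{assump_1} with $x=x'=x_1$. This gives $\norm{\xi_1-\E[\xi_1]}_{HS}\le\norm{\xi_1}_{HS}+\norm{\E[\xi_1]}_{HS}\le2\kappa^2$ $\rho_\X$-almost surely, and $\E\norm{\xi_1-\E[\xi_1]}_{HS}^2\le\E\norm{\xi_1}_{HS}^2\le\kappa^4$. Hence, for every integer $l\ge2$,
\[
\E\norm{\xi_1-\E[\xi_1]}_{HS}^l\le(2\kappa^2)^{l-2}\,\E\norm{\xi_1-\E[\xi_1]}_{HS}^2\le\tfrac12\,l!\,(2\kappa^2)^{l-2}\,\kappa^4,
\]
using $\tfrac12 l!\ge1$; this is exactly the hypothesis of Lemma~\ref{concentration} with $B=2\kappa^2$ and $\sigma^2=\kappa^4$.

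Applying Lemma~\ref{concentration} with $m=n$ then yields, with probability at least $1-\delta$,
\[
\norm{\Sigma-\Tx}_{HS}\le2\Bigl(\frac{2\kappa^2}{n}+\frac{\kappa^2}{\sqrt n}\Bigr)\log\frac2\delta\le\frac{6\kappa^2}{\sqrt n}\log\frac2\delta,
\]
where the last step uses $n\ge1$, so $1/n\le1/\sqrt n$. The bound on $\norm{\Sigma-\Tx}_{op}$ is then immediate from $\norm{\cdot}_{op}\le\norm{\cdot}_{HS}$, giving the full chain of inequalities.

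I do not expect any genuine obstacle here: the only point requiring care is verifying the Bernstein moment condition and tracking the constants so as to land exactly on the factor $6\kappa^2$, and both are routine once the boundedness assumption \eqref{assump_1} is used to make the $\xi_i$ bounded in Hilbert--Schmidt norm.
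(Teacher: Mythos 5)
Your proof is correct and is exactly the route the paper intends: the paper states that Lemma~\ref{bound_difference} is a consequence of the Bernstein-type concentration Lemma~\ref{concentration} (citing \cite{Blanchard2018}), and you carry this out by applying that lemma to the Hilbert--Schmidt-valued variables $x_i\otimes x_i$ with $B=2\kappa^2$, $\sigma=\kappa^2$, landing precisely on the constant $6\kappa^2$.
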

This lemma is a consequence of the lemma above, (see e.g. \cite{Blanchard2018} for a proof).

\begin{lemma}\mbox{}\\\label{aux}
	Let $\delta\in(0,1)$ and $\lambda>0$. With probability at least $1-\delta$ the following holds:
	$$
	\norm{
		\Sigma_\lambda^{-1/2} 
		\left( \Sigma-\Tx\right) 
		\Txl^{-1/2} 
	}_{op}\leq
	\frac{4\kappa^2\beta}{3n\lambda}+
	\sqrt{\frac{2\kappa^2\beta}{n\lambda}}\;,
	\quad
	\beta=\log\frac{4\kappa^2\left(\mathcal{N}(\lambda)+1 \right) }{\delta \norm{\Sigma}_{op}}
	$$
	where we denote $\Sigma_\lambda:=\Sigma+\lambda\I$ and $\Txl:=\Tx+\lambda\I$.
\end{lemma}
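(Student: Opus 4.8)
The plan is to read the stated quantity as a Bernstein-type fluctuation of an empirical average of i.i.d.\ self-adjoint operators, proved by a dimension-free operator concentration inequality. First I would strip off the random right factor $\Txl^{-1/2}$ through the factorization
\[
\Sigma_\lambda^{-1/2}\left(\Sigma-\Tx\right)\Txl^{-1/2}
=\Big[\Sigma_\lambda^{-1/2}\left(\Sigma-\Tx\right)\Sigma_\lambda^{-1/2}\Big]\,\Sigma_\lambda^{1/2}\Txl^{-1/2},
\]
so that everything reduces to the symmetric, centred operator $T:=\Sigma_\lambda^{-1/2}(\Sigma-\Tx)\Sigma_\lambda^{-1/2}$, whose size is what the claimed right-hand side measures. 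Since $\Txl=\Sigma_\lambda-(\Sigma-\Tx)$ gives $\Sigma_\lambda^{-1/2}\Txl\Sigma_\lambda^{-1/2}=\I-T$, the remaining factor obeys $\norm{\Sigma_\lambda^{1/2}\Txl^{-1/2}}_{op}^2=\norm{(\I-T)^{-1}}_{op}$, which is close to $1$ on the event where $\norm{T}_{op}$ is small; hence the whole problem is to bound $\norm{T}_{op}$ by a Bernstein estimate.

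For the core estimate I would write $\Sigma=\E[x\otimes x]$ and $\Tx=\frac1n\sum_{i=1}^n x_i\otimes x_i$, and introduce the i.i.d.\ self-adjoint operators $\zeta_i=\Sigma_\lambda^{-1/2}(x_i\otimes x_i)\Sigma_\lambda^{-1/2}$, so that $T=\E[\zeta_1]-\frac1n\sum_i\zeta_i$. Two ingredients feed the bound. The range is controlled by \eqref{assump_1}: $\norm{\zeta_i}_{op}=\norm{\Sigma_\lambda^{-1/2}x_i}^2=\scal{x_i,\Sigma_\lambda^{-1}x_i}\le \kappa^2/\lambda$, so each centred summand is bounded in operator norm by $2\kappa^2/\lambda$. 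The variance is controlled using $\zeta_i^2=\norm{\Sigma_\lambda^{-1/2}x_i}^2\zeta_i\preceq(\kappa^2/\lambda)\zeta_i$, whence
\[
\E[\zeta_i^2]\preceq\frac{\kappa^2}{\lambda}\,\Sigma_\lambda^{-1/2}\Sigma\Sigma_\lambda^{-1/2},
\qquad
\norm{\E[\zeta_i^2]}_{op}\le\frac{\kappa^2}{\lambda},
\qquad
\Tr\big(\E[\zeta_i^2]\big)\le\frac{\kappa^2}{\lambda}\,\mathcal{N}(\lambda),
\]
the last inequality because $\Tr(\Sigma\Sigma_\lambda^{-1})=\Tr(L(L+\lambda\I)^{-1})=\mathcal{N}(\lambda)$.

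With these quantities I would invoke a dimension-free, intrinsic-dimension operator Bernstein inequality of the Tropp/Minsker type, i.e.\ the trace-class analogue of the vector bound in Lemma~\ref{concentration} that retains the operator norm on the left. This yields, with probability at least $1-\delta$,
\[
\norm{T}_{op}\le\frac{2}{3}\,\frac{2\kappa^2}{\lambda}\,\frac{\beta}{n}+\sqrt{\frac{2\kappa^2}{\lambda}\,\frac{\beta}{n}}
=\frac{4\kappa^2\beta}{3n\lambda}+\sqrt{\frac{2\kappa^2\beta}{n\lambda}},
\]
with $\beta=\log\frac{4\kappa^2(\mathcal{N}(\lambda)+1)}{\delta\norm{\Sigma}_{op}}$: the effective dimension enters only through this logarithm, precisely because the intrinsic dimension of $\E[\zeta_1^2]$ is of order $\Tr(\E[\zeta_1^2])/\norm{\E[\zeta_1^2]}_{op}\lesssim\mathcal{N}(\lambda)$, while its operator norm keeps the square-root term free of $\mathcal{N}(\lambda)$. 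Combining this with the control of $\Sigma_\lambda^{1/2}\Txl^{-1/2}$ from the first paragraph gives the statement.

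I expect two steps to carry the real difficulty. The main obstacle is the dimension-free concentration itself: a naive application of Lemma~\ref{concentration} in Hilbert--Schmidt norm would put $\Tr(\E[\zeta_1^2])\sim\kappa^2\mathcal{N}(\lambda)/\lambda$ under the square root, producing a spurious factor $\sqrt{\mathcal{N}(\lambda)}$, and avoiding this requires the intrinsic-dimension refinement so that $\mathcal{N}(\lambda)$ appears only logarithmically, which is the whole point of the bound. The secondary and more delicate point is the asymmetry created by the random factor $\Txl^{-1/2}$: because $\Txl$ depends on the whole sample one cannot apply Bernstein directly to $\Sigma_\lambda^{-1/2}(\Sigma-\Tx)\Txl^{-1/2}$, so the factorization above must be combined with the largeness of $n$ to keep $\norm{(\I-T)^{-1}}_{op}$ near $1$ and absorb it into the constants.
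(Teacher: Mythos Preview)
The paper does not give its own proof of this lemma: it simply refers to \cite{lin2018optimal}. Your core argument---writing the symmetric operator $T=\Sigma_\lambda^{-1/2}(\Sigma-\Tx)\Sigma_\lambda^{-1/2}$ as a centred average of the i.i.d.\ rank-one operators $\zeta_i=\Sigma_\lambda^{-1/2}(x_i\otimes x_i)\Sigma_\lambda^{-1/2}$, bounding the range by $\kappa^2/\lambda$ and the variance via $\E[\zeta_i^2]\preceq(\kappa^2/\lambda)\,\Sigma_\lambda^{-1/2}\Sigma\Sigma_\lambda^{-1/2}$, and applying an intrinsic-dimension (Tropp/Minsker) operator Bernstein inequality---is exactly the argument in the cited reference, and your identification of $\mathcal{N}(\lambda)$ entering only through the logarithm is the crux of that proof.

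One point to flag: your factorization $\Sigma_\lambda^{-1/2}(\Sigma-\Tx)\Txl^{-1/2}=T\cdot\Sigma_\lambda^{1/2}\Txl^{-1/2}$, followed by absorbing $\norm{\Sigma_\lambda^{1/2}\Txl^{-1/2}}_{op}$ into the constants, would not reproduce the \emph{exact} right-hand side stated, since that extra factor is strictly larger than $1$ on the relevant event. In fact, if you look at how the lemma is actually consumed in the proof of Lemma~\ref{bound_product}, the quantity that is used is $\norm{\Sigma_\lambda^{-1/2}(\Sigma-\Tx)\Sigma_\lambda^{-1/2}}_{op}$ (the computation $\norm{\Sigma_\lambda^{-1/2}\Txl^{1/2}}_{op}^2=\norm{\I-T}_{op}$ and the Neumann-series step both hinge on the symmetric version), and this symmetric form is what the cited sources establish directly. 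So the $\Txl^{-1/2}$ on the right of the stated inequality is most likely a typographical slip for $\Sigma_\lambda^{-1/2}$; your Bernstein bound on $T$ already \emph{is} the result, and the factorization/absorption step you worry about in your last paragraph is not needed.
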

A proof of this result can be found in \cite{lin2018optimal}.

\begin{lemma}\mbox{}\\ \label{bound_product}
	Let $c,\delta\in(0,1)$ and $\lambda>0$. Assume
	$$
	n\geq
	\frac{32\kappa^2\beta}{(\sqrt{9+24c}-3)^2\lambda}\;, \quad
	\beta=\log\frac{4\kappa^2\left(\mathcal{N}(\lambda)+1 \right) }{\delta \norm{\Sigma}_{op}}
	$$
	then it holds with probability at least $1-\delta$
	\begin{align*}
	&\norm{
		\Sigma_\lambda^{-1/2}
		\Txl^{1/2}
	}_{op}^2\leq 1+c\\
	&\norm{
		\Sigma_\lambda^{1/2}
		\Txl^{-1/2}
	}_{op}^2\leq (1-c)^{-1}
	\end{align*}
	In particular we will choose $c=2/3$.
\end{lemma}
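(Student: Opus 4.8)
The plan is to show that, on an event of probability at least $1-\delta$, the regularised empirical covariance is sandwiched between multiples of the regularised population covariance,
$$
(1-c)\,\Sigma_\lambda \;\preceq\; \Txl \;\preceq\; (1+c)\,\Sigma_\lambda ,
$$
after which the lemma is immediate: $\norm{\Sigma_\lambda^{-1/2}\Txl^{1/2}}_{op}^2 = \norm{\Sigma_\lambda^{-1/2}\Txl\,\Sigma_\lambda^{-1/2}}_{op}$ and $\norm{\Sigma_\lambda^{1/2}\Txl^{-1/2}}_{op}^2 = \norm{(\Sigma_\lambda^{-1/2}\Txl\,\Sigma_\lambda^{-1/2})^{-1}}_{op}$, and $\Sigma_\lambda^{-1/2}\Txl\,\Sigma_\lambda^{-1/2}$ is a positive invertible operator whose spectrum lies in $[1-c,\,1+c]$ exactly when the displayed sandwich holds, giving $\norm{\Sigma_\lambda^{-1/2}\Txl^{1/2}}_{op}^2 \le 1+c$ and $\norm{\Sigma_\lambda^{1/2}\Txl^{-1/2}}_{op}^2 \le (1-c)^{-1}$.

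The first step is to invoke Lemma~\ref{aux} (which holds for every $n$): with probability at least $1-\delta$,
$$
\norm{\Sigma_\lambda^{-1/2}(\Sigma-\Tx)\Txl^{-1/2}}_{op}
\;\le\; \frac{4\kappa^2\beta}{3n\lambda} + \sqrt{\frac{2\kappa^2\beta}{n\lambda}} .
$$
I then feed in the hypothesis on $n$. Writing $w := \kappa^2\beta/(n\lambda)$ and $s := \sqrt{9+24c}-3$, the assumption on $n$ is precisely $w \le s^2/32$, and since $s$ is chosen so that $s^2 + 6s = 24c$, one obtains
$$
\frac{4}{3}w + \sqrt{2w} \;\le\; \frac{4}{3}\cdot\frac{s^2}{32} + \sqrt{\tfrac{s^2}{16}} \;=\; \frac{s^2}{24} + \frac{s}{4} \;=\; \frac{s^2+6s}{24} \;=\; c ,
$$
so on this event $\norm{\Sigma_\lambda^{-1/2}(\Sigma-\Tx)\Txl^{-1/2}}_{op} \le c$ (and $c=2/3$ gives the concrete constants claimed).

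The remaining step converts this estimate into the sandwich. Using $\Txl-\Sigma_\lambda = \Tx-\Sigma$, expand $\Txl^{-1/2}\Sigma_\lambda\Txl^{-1/2} = \I + \Txl^{-1/2}(\Sigma-\Tx)\Txl^{-1/2}$ and factor the perturbation as $\bigl[\Txl^{-1/2}(\Sigma-\Tx)\Sigma_\lambda^{-1/2}\bigr]\bigl[\Sigma_\lambda^{1/2}\Txl^{-1/2}\bigr]$, whose operator norm is at most $c\,\norm{\Sigma_\lambda^{1/2}\Txl^{-1/2}}_{op}$ (taking adjoints inside the first bracket and using the previous bound); with $v := \norm{\Sigma_\lambda^{1/2}\Txl^{-1/2}}_{op}$ this yields the self‑referential inequality $v^2 \le 1 + c\,v$, which forces $v^2 \le (1-c)^{-1}$ and hence $(1-c)\Sigma_\lambda \preceq \Txl$. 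An entirely analogous expansion of $\Sigma_\lambda^{-1/2}\Txl\,\Sigma_\lambda^{-1/2} = \I + \Sigma_\lambda^{-1/2}(\Tx-\Sigma)\Sigma_\lambda^{-1/2}$, once the symmetric perturbation $\Sigma_\lambda^{-1/2}(\Tx-\Sigma)\Sigma_\lambda^{-1/2}$ is shown to have norm at most $c$, gives $\Txl \preceq (1+c)\Sigma_\lambda$ and the bound $\norm{\Sigma_\lambda^{-1/2}\Txl^{1/2}}_{op}^2 \le 1+c$.

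The step I expect to be the real obstacle is this last conversion. Lemma~\ref{aux} only controls the \emph{asymmetric} quantity $\norm{\Sigma_\lambda^{-1/2}(\Sigma-\Tx)\Txl^{-1/2}}_{op}$, and the self‑referential argument above naturally produces the bound $(1-c)^{-1}$ for both operator norms; squeezing out the sharper value $1+c$ for $\norm{\Sigma_\lambda^{-1/2}\Txl^{1/2}}_{op}^2$ really wants the symmetric estimate $\norm{\Sigma_\lambda^{-1/2}(\Sigma-\Tx)\Sigma_\lambda^{-1/2}}_{op}\le c$, which one must either extract by iterating the factorisation carefully or invoke directly (it is available under the same $\mathcal N(\lambda)/\lambda$ scaling of $n$). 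Making sure that the hypothesis on $n$ delivers exactly the constant $c$, and not something larger, is where the bookkeeping has to be done with care; the rest is routine spectral calculus for positive self‑adjoint operators.
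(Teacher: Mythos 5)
Your handling of the sample-size condition matches the paper's: the paper simply asserts that $n\geq 32\kappa^2\beta/((\sqrt{9+24c}-3)^2\lambda)$ is equivalent to $\frac{4\kappa^2\beta}{3n\lambda}+\sqrt{2\kappa^2\beta/(n\lambda)}\leq c$, and your computation via $s^2+6s=24c$ makes this explicit and is correct. Where you genuinely diverge is in converting $\norm{\Sigma_\lambda^{-1/2}(\Sigma-\Tx)\Txl^{-1/2}}_{op}\leq c$ into the two operator bounds. For the second bound the paper uses the identity $\norm{\Sigma_\lambda^{1/2}\Txl^{-1/2}}_{op}^2=\norm{\bigl(\I-\Sigma_\lambda^{-1/2}(\Sigma-\Tx)\Sigma_\lambda^{-1/2}\bigr)^{-1}}_{op}$ and a Neumann-series estimate (following Caponnetto--De Vito), which needs the \emph{symmetric} perturbation to have norm at most $c$. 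Your bootstrap $v^2\leq 1+cv$ works directly from the asymmetric quantity that Lemma~\ref{aux} literally states, and the resulting bound $v^2\leq\frac{2+c^2+c\sqrt{4+c^2}}{2}$ is indeed at most $(1-c)^{-1}$ for all $c\in(0,1)$, so that part of your argument is complete and, if anything, more faithful to the stated hypotheses than the paper's.

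The one genuine gap is the first bound, and you have diagnosed it correctly: the asymmetric estimate alone cannot deliver $1+c$, since the same bootstrap gives only $u^2\leq 1+cu$, i.e.\ $u^2\leq\frac{2+c^2+c\sqrt{4+c^2}}{2}$, which is strictly larger than $1+c$ for every $c>0$ (it reduces to $c+\sqrt{4+c^2}>2$). What is needed is $\norm{\Sigma_\lambda^{-1/2}(\Sigma-\Tx)\Sigma_\lambda^{-1/2}}_{op}\leq c$, after which $\norm{\Sigma_\lambda^{-1/2}\Txl^{1/2}}_{op}^2=\norm{\I+\Sigma_\lambda^{-1/2}(\Tx-\Sigma)\Sigma_\lambda^{-1/2}}_{op}\leq 1+c$ is immediate. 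This is in fact what the paper's own displayed computation does: its proof writes $\Txl^{-1/2}$ where $\Sigma_\lambda^{-1/2}$ is meant (as printed, $\Sigma_\lambda^{-1/2}\Txl\Txl^{-1/2}$ is just $\Sigma_\lambda^{-1/2}\Txl^{1/2}$ again and the chain of equalities fails), so the paper implicitly reads Lemma~\ref{aux} as the symmetric operator-Bernstein inequality, whose standard form carries exactly the right-hand side quoted there. You flag this but leave it as an assertion; to be self-contained you must either state and use that symmetric concentration bound, or accept the weaker constant your bootstrap produces for the first inequality.
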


\begin{proof}\mbox{}\\
	The condition
	$$
	\frac{4\kappa^2\beta}{3n\lambda}+
	\sqrt{\frac{2\kappa^2\beta}{n\lambda}}
	\leq c
	$$
	can be seen as a second degree inequality, and it's equivalent to the assumption
	$$
	n\geq
	\frac{32\kappa^2\beta}{(\sqrt{9+24c}-3)^2\lambda}\;, \quad
	\beta=\log\frac{4\kappa^2\left(\mathcal{N}(\lambda)+1 \right) }{\delta \norm{\Sigma}}\;.
	$$
	Applying Lemma \ref{aux} it holds true that
	$$
	\norm{
		\Sigma_\lambda^{-1/2} 
		\left( \Sigma-\Tx\right) 
		\Txl^{-1/2} 
	}_{op}\leq c\;.
	$$
	Now, we can prove that
	\begin{align*}
	\norm{\Sigma_\lambda^{-1/2} \Txl^{1/2}}_{op}^2=&
	\norm{\Sigma_\lambda^{-1/2} 
		\Txl
		\Txl^{-1/2} }_{op}=
	\norm{\Sigma_\lambda^{-1/2} 
		\left( \Sigma-\Tx\right) 
		\Txl^{-1/2} +\I}_{op}\\
	\leq&
	\norm{\Sigma_\lambda^{-1/2} 
		\left( \Sigma-\Tx\right) 
		\Txl^{-1/2} }_{op}+\norm{\I}_{op}\leq c+1\;
	\end{align*}
	which proves the first part of the thesis.\\
	From \cite{caponnetto2007optimal} we get
	$$
	\norm{\Sigma_\lambda^{1/2}\Txl^{-1/2}}^2_{op}=
	\norm{\Sigma_\lambda^{1/2}\Txl^{-1}\Sigma_\lambda^{1/2}}_{op}=
	\norm{\left( \I-\Sigma_\lambda^{-1/2} 
		\left( \Sigma-\Tx\right) 
		\Txl^{-1/2}\right)^{-1} }_{op}\leq (1-c)^{-1}
	$$
	which completes the proof.
\end{proof}

The last important lemma regards the concentration of the empirical quantities $\Tx w_\lambda,\x^*\y$ around the ideal ones $\Sigma w_\lambda,S^*f_\h$.

\begin{lemma}\mbox{}\\ \label{bound_concentration}
	Under Assumptions \ref{assumptio_moment}, \ref{assumption_f_h}, \ref{assumption_source}, let $\delta\in(0,1/2)$ and $w_\lambda$ given by \eqref{reg_ideal_sol}, then the following holds with probability at least $1-\delta$:
	$$
	\norm{\Sigma_\lambda^{-1/2}
		\Big[ 
		\left(\Tx w_\lambda-\x^*\y \right)-
		\left(\Sigma w_\lambda-S^*f_\h\right)
		\Big] 
	}
	\leq
	\left(
	\frac{C_1}{n\lambda^{\frac{1}{2}\vee(1-\zeta)}}
	+
	\sqrt{
		\frac{C_2\Phi(\lambda)^2}{n\lambda}
		+
		\frac{C_3\mathcal{N}(\lambda)}{n}
	}
	\right)
	\log\frac{2}{\delta}
	$$
	where
	\begin{align*}
	C_1=&8\left( \kappa M+\kappa^2 E' \Phi(\kappa^2) \kappa^{-(2\zeta\wedge1)} \right)\\
	C_2=&96c_g^2 R^2 \kappa^2\\
	C_3=&32(3B^2+4Q^2)\;.
	\end{align*}
	
\end{lemma}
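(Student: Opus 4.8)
The plan is to recognize the quantity inside the norm as the deviation of an empirical average of i.i.d.\ $\X$-valued random variables from its expectation, and to estimate it with the Bernstein-type inequality of Lemma~\ref{concentration}. Writing $w_\lambda=g_\lambda(\Sigma)S^*f_\h$ and $\hat\Sigma=\x^*\x$, we have $\hat\Sigma w_\lambda-\x^*\y=\frac1n\sum_{i=1}^n x_i(\scal{w_\lambda,x_i}-y_i)$, while, since $S^*f_\h=S^*f_\rho=\int_\X x\,f_\rho(x)\,d\rho_\X(x)=\E[xy]$ and $\Sigma w_\lambda=\E[\scal{w_\lambda,x}x]$, also $\Sigma w_\lambda-S^*f_\h=\E[x(\scal{w_\lambda,x}-y)]$. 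Hence the vector whose weighted norm we must control equals $\frac1n\sum_{i=1}^n\xi_i-\E[\xi_1]$ with $\xi_i=\Sigma_\lambda^{-1/2}x_i(\scal{w_\lambda,x_i}-y_i)$, where $\Sigma_\lambda=\Sigma+\lambda\I$.

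To produce the bias/variance split of the statement I would decompose $\xi_i=\xi_i^{\mathrm a}+\xi_i^{\mathrm b}$ along the regression function, with $\xi_i^{\mathrm a}=\Sigma_\lambda^{-1/2}x_i(\scal{w_\lambda,x_i}-f_\rho(x_i))$ and $\xi_i^{\mathrm b}=\Sigma_\lambda^{-1/2}x_i(f_\rho(x_i)-y_i)$. The second is conditionally centered ($\E[y\mid x]=f_\rho(x)$), so $\E[\xi_i^{\mathrm b}]=0$, whereas $\E[\xi_i^{\mathrm a}]=\Sigma_\lambda^{-1/2}(\Sigma w_\lambda-S^*f_\h)$; thus $\frac1n\sum_i\xi_i-\E[\xi_1]=(\frac1n\sum_i\xi_i^{\mathrm a}-\E[\xi_1^{\mathrm a}])+\frac1n\sum_i\xi_i^{\mathrm b}$, and I would bound the two summands separately, each with confidence $1-\delta/2$. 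For each I verify the hypothesis $\E\|\cdot\|^l\le\tfrac12 l!\,B^{l-2}\sigma^2$ of Lemma~\ref{concentration} using the two elementary facts $\|\Sigma_\lambda^{-1/2}x\|\le\kappa/\sqrt\lambda$ $\rho_\X$-a.s.\ and $\E_x\|\Sigma_\lambda^{-1/2}x\|^2=\Tr(\Sigma_\lambda^{-1}\Sigma)=\mathcal{N}(\lambda)$, combined with: (i) for $\xi^{\mathrm b}$, the moment bound on the noise $y-f_\rho(x)$ coming from Assumption~\ref{assumptio_moment} (whose $l=2$ instance also gives $|f_\rho(x)|\le Q$), which yields a Bernstein pair $B_{\mathrm b}\sim\kappa M/\sqrt\lambda$, $\sigma_{\mathrm b}^2\sim Q^2\mathcal{N}(\lambda)$; (ii) for $\xi^{\mathrm a}$, the a.s.\ pointwise bound $|\scal{w_\lambda,x}-f_\rho(x)|\le\kappa\|w_\lambda\|+Q$ together with the estimate $\|w_\lambda\|\le E'\Phi(\kappa^2)\kappa^{-(2\zeta\wedge1)}\lambda^{-(1/2-\zeta)_+}$ of Lemma~\ref{deterministic_estimate}, giving $B_{\mathrm a}\sim\kappa^2E'\Phi(\kappa^2)\kappa^{-(2\zeta\wedge1)}\lambda^{-(1/2\vee(1-\zeta))}$ (using $(1/2-\zeta)_++\tfrac12=1/2\vee(1-\zeta)$), while for the variance I split $\scal{w_\lambda,x}-f_\rho(x)=(Sw_\lambda-f_\h)(x)+(f_\h-f_\rho)(x)$ and bound the first contribution by $\tfrac{\kappa^2}{\lambda}\|Sw_\lambda-f_\h\|_{\rho_\X}^2\le\tfrac{\kappa^2}{\lambda}c_g^2R^2\Phi(\lambda)^2$ (Lemma~\ref{deterministic_estimate} with $a=0$) and the second by $\Tr(\Sigma_\lambda^{-1}\,\E_x[(f_\h-f_\rho)^2\,x\otimes x])\le B^2\Tr(\Sigma_\lambda^{-1}\Sigma)=B^2\mathcal{N}(\lambda)$ via Assumption~\ref{assumption_f_h}, so $\sigma_{\mathrm a}^2\lesssim\kappa^2c_g^2R^2\Phi(\lambda)^2/\lambda+B^2\mathcal{N}(\lambda)$; since $\xi^{\mathrm a}$ is a.s.\ bounded, centering it costs only a factor $2$ in $B_{\mathrm a}$.

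Finally I would apply Lemma~\ref{concentration} to each summand, take a union bound and the triangle inequality, merge the variance terms via $\sqrt x+\sqrt y\le\sqrt{2(x+y)}$ into $\sqrt{C_2\Phi(\lambda)^2/(n\lambda)+C_3\mathcal{N}(\lambda)/n}$, and collect all numerical factors — the $2$'s from Lemma~\ref{concentration}, from centering and from the decomposition, plus $\log(4/\delta)\le2\log(2/\delta)$ for $\delta<1/2$ — into constants $C_1,C_2,C_3$ of the claimed shape. The delicate point, the one where real work beyond bookkeeping is needed, is item (ii): obtaining the a.s.\ bound on $\xi^{\mathrm a}$ with exactly the exponent $1/2\vee(1-\zeta)$ and the constant $\kappa^2E'\Phi(\kappa^2)\kappa^{-(2\zeta\wedge1)}$ requires combining the filtering bound on $g_\lambda$ with the source-condition structure $f_\h=\Phi(L)g_0$, monotonicity of $\Phi(u)u^{-\zeta}$, and the hypothesis that the qualification covers $\Phi$; this is precisely what Lemma~\ref{deterministic_estimate} packages, so once that is invoked the remainder is the routine but lengthy evaluation of the variance traces and of the Bernstein moment constants.
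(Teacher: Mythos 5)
Your proposal is correct and follows essentially the same route as the paper: identify the quantity as a centered empirical mean of $\xi_i=\Sigma_\lambda^{-1/2}(\scal{w_\lambda,x_i}-y_i)x_i$, verify the Bernstein moment condition of Lemma~\ref{concentration} using $\norm{\Sigma_\lambda^{-1/2}x}\leq\kappa/\sqrt{\lambda}$, the trace identity $\E\norm{\Sigma_\lambda^{-1/2}x}^2=\mathcal{N}(\lambda)$, and the split of $\scal{w_\lambda,x}-y$ into the $Sw_\lambda-f_\h$, $f_\h-f_\rho$, and noise contributions controlled by Lemma~\ref{deterministic_estimate}, Assumption~\ref{assumption_f_h}, and Assumption~\ref{assumptio_moment} respectively. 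The only (harmless, cosmetic) difference is that you apply the concentration lemma twice with a union bound, separating the conditionally centered noise part from the a.s.\ bounded signal part, whereas the paper applies it once to the full variable and performs the same split inside the $l$-th moment estimate.
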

\begin{proof}\mbox{}\\
	Let $\xi_i=\Sigma_\lambda^{-1/2}\left(\scal{w,x_i}-y_i \right)x_i $ for every $i\in{1,\dots,n}$, for the sake of simplicity we consider the random variable $\xi=\Sigma_\lambda^{-1/2}\left(\scal{w,x}-y \right)x $ independent and identically distributed to $\xi_i$ for every $i\in\{1,\dots,n\}$. Now a simple calculation shows that
	$$
	\E[\xi]=\Sigma_\lambda^{-1/2}\left( \Sigma w_\lambda-S^* f_\h\right) 
	\;.
	$$
	In order to apply Lemma \ref{concentration}, we bound $\E \norm{\xi-\E\left[\xi \right]}^l$ for any $\N\ni l\geq 2$, in fact by using Holder inequality we get
	\begin{equation}\label{moment_l}
	\E \norm{\xi-\E\left[\xi\right]}^l\leq
	\E \left[ \norm{\xi}-\E\norm{\xi}\right]^l\leq
	2^{l-1} \left( \E\norm{\xi}^l+\left(\E\norm{\xi} \right)^l \right)
	\leq
	2^l \E\norm{\xi}^l\;. 
	\end{equation}	
	We can proceed bounding 
	\begin{align*}
	\E\norm{\xi}^l=&
	\E\left[ \norm{\Sigma_\lambda^{-1/2}\left(y-\scal{w_\lambda,x} \right)x }^l\right] \\
	=&
	\E\left[ \norm{\Sigma_\lambda^{-1/2}x}^l\left|y-\scal{w_\lambda,x} \right|^l\right]\\
	\leq & 
	2^{l-1}\E\left[ \norm{\Sigma_\lambda^{-1/2}x}^l\left( |y|^l+|\scal{w_\lambda,x}|^l \right)\right]\;.
	\end{align*}
	Now, thanks to  \eqref{assump_1} and Cauchy-Schwarz inequality, it holds that
	\begin{gather}
	\norm{\Sigma_\lambda^{-1/2}x}\leq \frac{\kappa}{\sqrt{\lambda}}
	\label{bound_tlx}
	\\
	|\scal{w_\lambda,x}|\leq \norm{w_\lambda}\norm{x}\leq \kappa \norm{w_\lambda}\;.
	\end{gather}
	Thus we get, using again Cauchy-Schwarz inequality
	\begin{equation}\label{term}
	\E\norm{\xi}^l\leq 2^{l-1} \left( \frac{\kappa}{\sqrt{\lambda}}\right)^{l-2} 
	\E\left[ \norm{\Sigma_\lambda^{-1/2}x}^2\left( |y|^l+
	\left(\kappa\norm{w_\lambda} \right)^{l-2} |\scal{w_\lambda,x}|^2 \right)\right]\;.
	\end{equation}
	Regarding the first term of the sum, by Assumption \ref{assumptio_moment},
	\begin{align*}
	\E\left[ \norm{\Sigma_\lambda^{-1/2}x}^2|y|^l\right]=&
	\int_\X \norm{\Sigma_\lambda^{-1/2}x}^2 
	\int_\R |y|^l  \,d\rho(y|x)  \,d\rho_\X(x)\\
	\leq&
	\frac{1}{2}l!M^{l-2}Q^2 
	\int_{\X} \norm{\Sigma_\lambda^{-1/2}x}^2\,d\rho_\X(x)\;.
	\end{align*}
	Observing that $\norm{w}=\Tr\left(w\otimes w \right)$ it holds that
	\begin{equation}\label{variance_tlx}
	\int_{\X} \norm{\Sigma_\lambda^{-1/2}x}^2\,d\rho_\X(x)=
	\int_{\X} \Tr\left( \Sigma_\lambda^{-1/2}x\otimes x\Sigma_\lambda^{-1/2}\right) \,d\rho_\X(x)=
	\Tr\left( \Sigma_\lambda^{-1/2} \Sigma\Sigma_\lambda^{-1/2}\right) =
	\mathcal{N}(\lambda)\;,
	\end{equation}
	we get
	\begin{equation}\label{first_term}
	\E\left[ \norm{\Sigma_\lambda^{-1/2}x}^2|y|^l\right]\leq
	\frac{1}{2}l!M^{l-2}Q^2 \mathcal{N}(\lambda)\;.
	\end{equation}
	Besides, Cauchy-Schwarz inequality implies that
	$$
	\E\left[ \norm{\Sigma_\lambda^{-1/2}x}^2
	|\scal{w_\lambda,x}|^2 \right]
	\leq
	3 \E\left[ \norm{\Sigma_\lambda^{-1/2}x}^2
	\left( 
	|\scal{w_\lambda,x}-f_\h(x)|^2 + |f_\h(x)-f_\rho(x)|^2+|f_\rho(x)|^2
	\right) \right]\;.
	$$
	For the first term, Lemma \ref{deterministic_estimate} and inequality \eqref{bound_tlx} implies that
	\begin{align*}
	\E\left[ \norm{\Sigma_\lambda^{-1/2}x}^2
	|\scal{w_\lambda,x}-f_\h(x)|^2\right]\leq&
	\frac{\kappa^2}{\lambda}\E\left[|\scal{w_\lambda,x}-f_\h(x)^2| \right] \\
	=&
	\frac{\kappa^2}{\lambda}\norm{Sw_\lambda-f_\h}_\rho^2 \\
	\leq&
	c_g^2 R^2 \kappa^2 \frac{\Phi(\lambda)^2}{\lambda}\;.
	\end{align*}
	The second term can be controlled using Assumption \ref{assumption_f_h},
	\begin{align*}
	\E\left[ \norm{\Sigma_\lambda^{-1/2}x}^2
	|f_\h(x)-f_\rho(x)|^2\right]
	=&
	\E\left[\Tr\left( \Sigma_\lambda^{-1/2}x\otimes x\Sigma_\lambda^{-1/2}\right) 
	(f_\h(x)-f_\rho(x))^2\right]\\
	=&
	\E\left[\Tr\left( \Sigma_\lambda^{-1} (f_\h(x)-f_\rho(x))^2 x\otimes x\right) 
	\right]\\
	=&
	\Tr\left( \Sigma_\lambda^{-1} \E\left[(f_\h(x)-f_\rho(x))^2 x\otimes x\right]\right) \\
	\leq&
	B^2 \Tr\left(\Sigma_\lambda^{-1} \Sigma \right)= B^2 \mathcal{N}(\lambda) \;.
	\end{align*}
	For the last term, by \eqref{f_rho_bounded} and \eqref{variance_tlx} we obtain
	$$
	\E\left[ \norm{\Sigma_\lambda^{-1/2}x}^2
	|f_\rho(x)|^2\right]
	\leq
	Q^2
	\E\left[ \norm{\Sigma_\lambda^{-1/2}x}^2\right]
	=Q^2 \mathcal{N}(\lambda)\;.
	$$
	Therefore we obtain
	$$
	\E\left[ \norm{\Sigma_\lambda^{-1/2}x}^2
	|\scal{w_\lambda,x}|^2 \right]
	\leq
	3 \left( 
	c_g^2 R^2 \kappa^2 \frac{\Phi(\lambda)^2}{\lambda}+
	\left( B^2+Q^2\right)  \mathcal{N}(\lambda
	\right) \;.
	$$
	Now, putting this together with \eqref{first_term} in \eqref{term} we get
	\begin{align*}
	\E\norm{\xi}^l\leq&
	2^{l-1} \left( \frac{\kappa}{\sqrt{\lambda}}\right)^{l-2} 
	\left[ 
	\frac{1}{2}l!M^{l-2}Q^2 \mathcal{N}(\lambda)+
	3\left(\kappa\norm{w_\lambda} \right)^{l-2}
	\left( 
	c_g^2 R^2 \kappa^2 \frac{\Phi(\lambda)^2}{\lambda}+
	\left( B^2+Q^2\right)  \mathcal{N}(\lambda)
	\right) 
	\right] \\
	\leq &
	2^{l-1}
	\frac{1}{2}l!
	\left( \frac{\kappa M+\kappa^2\norm{w_\lambda}}
	{\sqrt{\lambda}}\right)^{l-2} 
	\left(
	Q^2 \mathcal{N}(\lambda)+
	3\left( 
	c_g^2 R^2 \kappa^2 \frac{\Phi(\lambda)^2}{\lambda}+
	\left( B^2+Q^2\right)  \mathcal{N}(\lambda)
	\right) 
	\right) \\
	\leq &
	2^{l-1}
	\frac{1}{2}l!
	\left( \frac{\kappa M+\kappa^2\norm{w_\lambda}}
	{\sqrt{\lambda}}\right)^{l-2} 
	\left(
	3 
	c_g^2 R^2 \kappa^2 \frac{\Phi(\lambda)^2}{\lambda}+
	\left( 3B^2+4Q^2\right)  \mathcal{N}(\lambda)
	\right)\;.
	\end{align*}
	Now by inequality \eqref{moment_l} and Lemma \ref{deterministic_estimate} we obtain
	$$
	\E \norm{\xi-\E\left[\xi \right]}^l\leq
	\frac{1}{2}l!
	\left(  \frac{4\left( \kappa M+\kappa^2 E' \Phi(\kappa^2) \kappa^{-(2\zeta\wedge1)} \right)}
	{\lambda^{\frac{1}{2}\vee(1-\zeta)}}\right)^{l-2} 
	8
	\left(
	3 
	c_g^2 R^2 \kappa^2 \frac{\Phi(\lambda)^2}{\lambda}+
	\left( 3B^2+4Q^2\right)  \mathcal{N}(\lambda)
	\right)\;.
	$$
	The proof follows by applying Lemma \ref{concentration}.
\end{proof}

\mbox{}\\
\textit{\large Operator inequalities}\\

\begin{lemma}[\cite{fujii1993norm}, Cordes inequalities]\mbox{}\\
	Let $A,B$ be two positive bounded linear operators on a separable Hilbert space, then for all $s\in[0,1]$
	$$
	\norm{A^sB^s}_{op}\leq \norm{AB}_{op}^s\;.
	$$
\end{lemma}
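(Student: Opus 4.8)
The plan is to reduce the claim to a normalized, invertible situation and then exploit the operator monotonicity of $x \mapsto x^s$ on $[0,\infty)$ for $s\in[0,1]$ (the Löwner–Heinz inequality). First, replacing $A,B$ by $A+\epsilon\I$ and $B+\epsilon\I$ makes both operators strictly positive and invertible; since $T\mapsto T^s$ is norm-continuous on positive operators with uniformly bounded spectrum, both sides of the claimed inequality converge to their unperturbed values as $\epsilon\to0$, so it suffices to treat invertible $A,B$. For such operators $AB$ is invertible, hence $\norm{AB}_{op}>0$, and the inequality is invariant under the rescaling $A\mapsto tA$ (both sides acquire a factor $t^s$); choosing $t=1/\norm{AB}_{op}$ I may further assume $\norm{AB}_{op}=1$, reducing the claim to $\norm{A^sB^s}_{op}\le1$.

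The main computation is then short. Because $A,B\ge0$ are self-adjoint, $(AB)^*=BA$ and $(A^sB^s)^*=B^sA^s$, so
$$\norm{AB}_{op}^2=\norm{(AB)^*(AB)}_{op}=\norm{BA^2B}_{op},\qquad \norm{A^sB^s}_{op}^2=\norm{B^sA^{2s}B^s}_{op}.$$
The normalization $\norm{AB}_{op}=1$ together with $BA^2B\ge0$ gives $BA^2B\preceq\I$. Conjugating this operator inequality by the self-adjoint invertible operator $B^{-1}$ — using that $X\preceq Y$ implies $C^*XC\preceq C^*YC$ — yields $A^2\preceq B^{-2}$.

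Now I apply the Löwner–Heinz inequality: since $x\mapsto x^s$ is operator monotone for $s\in[0,1]$ and $A^2\preceq B^{-2}$, the spectral mappings $(A^2)^s=A^{2s}$ and $(B^{-2})^s=B^{-2s}$ give $A^{2s}\preceq B^{-2s}$. Conjugating by $B^s$ then produces $B^sA^{2s}B^s\preceq B^sB^{-2s}B^s=\I$, whence $\norm{A^sB^s}_{op}^2=\norm{B^sA^{2s}B^s}_{op}\le1=\norm{AB}_{op}^{2s}$, i.e. $\norm{A^sB^s}_{op}\le\norm{AB}_{op}^s$. Finally I undo the normalization by rescaling and remove the perturbation by letting $\epsilon\to0$. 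The only genuinely nontrivial input is the Löwner–Heinz operator monotonicity, which is the expected main obstacle; the secondary point requiring care is justifying the $\epsilon\to0$ limit (norm-continuity of the functional calculus $T\mapsto T^s$), while the remaining manipulations of positive operators are elementary.
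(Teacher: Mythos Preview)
Your proof is correct and follows the standard argument (essentially Furuta's proof of the Cordes inequality): reduce to the invertible, normalized case, pass from $\norm{AB}_{op}=1$ to the operator inequality $A^2\preceq B^{-2}$, apply the L\"owner--Heinz operator monotonicity of $x\mapsto x^s$, and conjugate back. The reductions (perturbation to invertible operators, rescaling) and the justification of the limit via norm-continuity of the functional calculus are all sound.

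As for the comparison: the paper does not actually prove this lemma --- it is stated with a citation to \cite{fujii1993norm} and used as a black box in the subsequent operator estimates. So your proposal goes further than the paper by supplying a complete argument where the paper only records the result.
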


\begin{lemma}[\cite{mathe2006regularization,mathe2002moduli}]
	\mbox{}\\ \label{monotone}
	Let $\psi$ be an operator monotone index function on $[0,b]$, with $b>1$. Then there is a constant $c_\psi<+\infty$ depending on $b-a$, such that for any pair $B_1,B_2$ such that $\norm{B_1}_{op},\norm{B_2}_{op}\leq a$, of non-negative self-adjoint operators on some Hilbert space, it holds,
	$$
	\norm{\psi(B_1)-\psi(B_2)}\leq
	c_\psi \psi\left( \norm{B_1-B_2}_{op}\right) \;.
	$$
	Moreover, there is $c_\psi'>0$ such that 
	$$
	c_\psi'\frac{\lambda}{\psi(\lambda)}
	\leq
	\frac{u}{\psi(u)}
	$$
	whenever $0<\lambda<u \leq a\leq b $.
\end{lemma}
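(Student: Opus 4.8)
The plan is to derive both assertions from the Löwner (Pick--Nevanlinna) integral representation of operator monotone functions, reduced to uniform estimates for a one-parameter family of elementary kernels. Since $\psi$ is operator monotone on $[0,b]$, it extends to a Pick function holomorphic across $(0,b)$, so its representing measure charges only $\R\setminus(0,b)$; together with $\psi(0)=0$ this gives, for $t\in[0,b)$,
$$
\psi(t)=\beta t+\int_{(-\infty,0)\cup[b,\infty)}\Big(\frac{1}{s-t}-\frac{1}{s}\Big)\,d\mu(s),\qquad \beta\ge 0,\ \mu\ge 0,
$$
where the compensator constants cancel precisely because $\psi(0)=0$. Writing $k_s(t)=\frac{1}{s-t}-\frac1s=\frac{t}{s(s-t)}$, each kernel is an increasing index function on $[0,a]$, \emph{concave} when $s<0$ and \emph{convex} when $s\ge b$, while $\beta t$ is the linear part. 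Everything then follows from uniform-in-$s$ estimates for the $k_s$, integrated back against $\mu$.

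For the modulus inequality I would expand, via the resolvent identity,
$$
\psi(B_1)-\psi(B_2)=\beta(B_1-B_2)+\int\big[(s-B_1)^{-1}-(s-B_2)^{-1}\big]\,d\mu(s),
$$
and estimate each integrand using $(s-B_1)^{-1}-(s-B_2)^{-1}=(s-B_1)^{-1}(B_1-B_2)(s-B_2)^{-1}$. With $\delta:=\norm{B_1-B_2}_{op}$ and spectra in $[0,a]$, one has $\norm{(s-B_i)^{-1}}_{op}\le 1/\mathrm{dist}(s,[0,a])$, giving the resolvent bound $\norm{k_s(B_1)-k_s(B_2)}_{op}\le \delta/\mathrm{dist}(s,[0,a])^2$, complemented by the saturation bound $\norm{k_s(B_1)-k_s(B_2)}_{op}\le 2k_s(a)$. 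A two-regime comparison then yields $\norm{k_s(B_1)-k_s(B_2)}_{op}\le c_0\,k_s(\delta)$ uniformly in $s$, with $c_0=\max\!\big(4,(b/(b-a))^2\big)$: for $s\ge b$ the distance is bounded below by $b-a$, so the resolvent bound alone gives the factor $(b/(b-a))^2$; for $s<0$ one uses the resolvent bound when $|s|\ge\delta$ (factor $\le 2$) and the saturation bound when $|s|<\delta$ (factor $\le 4$). Integrating against $\mu$ and adding the exact linear term reconstructs $\psi(\delta)=\beta\delta+\int k_s(\delta)\,d\mu(s)$, hence $\norm{\psi(B_1)-\psi(B_2)}\le c_\psi\,\psi(\delta)$ with $c_\psi=c_0$, visibly depending on $b-a$.

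For the ratio statement I would study $h(t):=\psi(t)/t=\beta+\int k_s(t)/t\,d\mu(s)$, where $k_s(t)/t=1/(s(s-t))$. For $s<0$ and for the constant $\beta$ the pieces of $h$ are non-increasing in $t$, whereas for $s\ge b$ the piece $1/(s(s-t))$ is increasing and satisfies $\frac{1/(s(s-u))}{1/(s(s-\lambda))}=\frac{s-\lambda}{s-u}\le \frac{s}{s-a}\le \frac{b}{b-a}$ for $0<\lambda<u\le a$. Since $h$ is a nonnegative combination of these components, $h(u)\le \frac{b}{b-a}\,h(\lambda)$, which rearranges to $c_\psi'\,\lambda/\psi(\lambda)\le u/\psi(u)$ with $c_\psi'=(b-a)/b$.

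The main obstacle is the uniform-in-$s$ per-kernel estimate near the left endpoint, i.e. the concave kernels with $|s|\to 0^+$, where the resolvent bound degenerates and the saturation bound must take over; obtaining a single constant $c_0$ valid across both regimes and both kernel families is the crux. Two supporting points also require care: verifying that holomorphic continuation across $(0,b)$ forces $\mu((0,b))=0$, so that no convexity-violating kernels enter (this is what makes the decomposition into concave/linear/convex pieces legitimate), and recording the domain condition $\norm{B_1-B_2}_{op}\le b$ needed for $\psi(\delta)$ to be defined. The remaining steps are elementary interchanges of the positive integral with the norm and scalar inequalities.
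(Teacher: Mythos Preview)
The paper does not supply a proof of this lemma; it is quoted verbatim from the cited references \cite{mathe2006regularization,mathe2002moduli} and used as a black box. Your argument via the L\"owner--Nevanlinna integral representation is precisely the route taken in those original sources: decompose $\psi$ into the linear part $\beta t$ and the one-parameter family of extremal kernels $k_s(t)=t/(s(s-t))$ with $s\notin(0,b)$, prove the two inequalities uniformly in $s$ for each kernel, and integrate back. The per-kernel estimates you sketch (resolvent bound away from the spectral gap, saturation bound for small $|s|$ in the concave regime, and the monotonicity of $s\mapsto s/(s-a)$ for $s\ge b$) are correct and give the explicit constants $c_\psi=\max\!\big(4,(b/(b-a))^2\big)$ and $c_\psi'=(b-a)/b$. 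The only point to tighten is the domain check: $\delta=\norm{B_1-B_2}_{op}\le 2a$, so one needs $2a<b$ (or a separate easy argument for large $\delta$) to ensure $\psi(\delta)$ is defined; this is implicit in the original references and in the paper's applications, where $a=\kappa^2$ and $b$ is taken slightly larger.
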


\subsection{Proof of Theorem \ref{learning_bound}}

\begin{proof}\mbox{}\\
	It is a standard approach to decompose the error in the following way
	\begin{align}
	\norm{L^{-a}\left(S\hat{w}_\lambda-f_\h\right)}_{\rho_\X}&=
	\norm{L^{-a}\big[S\left(\hat{w}_\lambda-w_\lambda\right)+\left(S w_\lambda-f_\h\right)\big]}_{\rho_\X} 
	\nonumber\\
	&\leq \label{error_decomposition}
	\underbrace{\norm{L^{-a}S\left(\hat{w}_\lambda-w_\lambda\right)}_{\rho_\X}}_{stability}+
	\underbrace{\norm{L^{-a}\left(S w_\lambda-f_\h\right)}_{\rho_\X}}_{convergence}\;.
	\end{align}
	With this error decomposition the first term of the sum depends on how much the empirical and ideal problem are related, while the second term depends on the convergence properties of the regularization method used.

	\mbox{}\\
	\textit{\large Convergence}\\
	 Lemma \ref{deterministic_estimate} implies that the convergence term can be controlled with
	\begin{equation}\label{estimate_convergence}
	\norm{
		L^{-a}\left( Sw_\lambda-f_\h \right) 
	}_{\rho_\X}
	\leq
	c_g R \Phi(\lambda)\lambda^{-a}\;.
	\end{equation}
	
	\mbox{}\\
	\textit{\large Stability}\\
	Regarding the stability term we first observe that by Lemma \ref{bound_difference}, \ref{bound_product} (with $c=2/3$) and \ref{bound_concentration} and assuming 
	$$
	n\geq
	\frac{32\kappa^2\beta}{4\lambda}\;, \quad
	\beta=\log\frac{4\kappa^2\left(\mathcal{N}(\lambda)+1 \right) }{\delta \norm{\Sigma}}
	$$
	then with probability at least $1-\delta$ it holds true that
	\begin{align*}
	&\norm{\Sigma_\lambda^{-1/2}\Txl^{1/2}}_{op}^2
	\vee 
	\norm{\Sigma_\lambda^{1/2}\Txl^{-1/2}}_{op}^2
	\leq 
	\Delta_1\\
	&\norm{\Sigma_\lambda^{-1/2}
		\Big[ 
		\left(\Tx w_\lambda-\x^*\y \right)-
		\left(\Sigma w_\lambda-S^*f_\h\right)
		\Big] 
	}
	\leq 
	\Delta_2\\
	&\norm{\Sigma-\Tx}_{op}\leq \norm{\Sigma-\Tx}_{HS}
	\leq
	\Delta_3
	\end{align*}
	where 
	
	\begin{align*}
	&\Delta_1=3\\
	&\Delta_2= \left(
	\frac{C_1}{n\lambda^{\frac{1}{2}\vee(1-\zeta)}}
	+
	\sqrt{
		\frac{C_2\Phi(\lambda)^2}{n\lambda}
		+
		\frac{C_3\mathcal{N}(\lambda)}{n}
	}
	\right)
	\log\frac{2}{\delta}\\
	&\Delta_3=\frac{6\kappa^2}{\sqrt{n}}\log \frac{2}{\delta}\;.
	\end{align*}
	We now begin with the following inequality
	$$
	\norm{L^{-a}S\left(\hat{w}_\lambda-w_\lambda\right)}_{\rho_\X}
	\leq
	\norm{L^{-a}S \Sigma_\lambda^{a-\frac{1}{2}}}_{op}
	\norm{\Sigma_\lambda^{\frac{1}{2}-a}\Txl^{a-\frac{1}{2}}}_{op}
	\norm{\Txl^{\frac{1}{2}-a}(\hat{w}_\lambda-w_\lambda)}\,
	$$
	where, thanks to spectral theorem and Cordes inequality, the first two terms can be controlled as follows:
	\begin{align*}
	&\norm{L^{-a}S \Sigma_\lambda^{a-1/2}}_{op}\leq
	\norm{L^{-a}S \Sigma^{a-\frac{1}{2}}}_{op} \leq 1\\
	&\norm{\Sigma_\lambda^{\frac{1}{2}-a}\Txl^{a-1/2}}_{op}=
	\norm{\Sigma_\lambda^{\frac{1}{2}(1-2a)}\Txl^{-\frac{1}{2}(1-2a)}}_{op}\leq
	\Delta_1^{\frac{1}{2}-a}\;.
	\end{align*}
	Now adding and subtracting the mixed-term $\Txl^{\frac{1}{2}-a}g_\lambda(\Tx)\Tx w_\lambda$ and using triangular inequality we obtain 
	\begin{align}
	\norm{L^{-a}S\left(\hat{w}_\lambda-w_\lambda\right)}_{\rho_\X}
	\leq&
	\Delta_1^{\frac{1}{2}-a} 
	\left( 
	\norm{\Txl^{\frac{1}{2}-a} \left(\hat{w}_\lambda- g_\lambda(\Tx)\Tx w_\lambda\right) }
	+
	\norm{\Txl^{\frac{1}{2}-a} r_\lambda(\Tx)w_\lambda}
	\right)\nonumber \\
	=&\label{estimate_stability_0}
	\Delta_1^{\frac{1}{2}-a} 
	\left( 
	\norm{\Txl^{\frac{1}{2}-a} g_\lambda(\Tx)\left(\x^*\y-\Tx w_\lambda\right) }
	+
	\norm{\Txl^{\frac{1}{2}-a} r_\lambda(\Tx)w_\lambda}
	\right) \;.
	\end{align}
	\textbf{\large Estimating $\norm{\Txl^{\frac{1}{2}-a} g_\lambda(\Tx)\left(\x^*\y-\Tx w_\lambda\right) }$ :}\\
	We first have 
	$$
	\norm{\Txl^{\frac{1}{2}-a} g_\lambda(\Tx)\left(\x^*\y-\Tx w_\lambda\right) }\leq
	\norm{\Txl^{\frac{1}{2}-a} g_\lambda(\Tx) \Txl^{\frac{1}{2}}}_{op}
	\norm{\Txl^{-\frac{1}{2}}\Sigma_\lambda^{\frac{1}{2}}}_{op}
	\norm{\Sigma_\lambda^{-\frac{1}{2}}\left(\x^*\y-\Tx w_\lambda\right) }\;.
	$$
	Now, thanks to the definition of regularization function $g_\lambda$ and since $\Tx$ is bounded by $\kappa^2$
	\begin{align*}
	\norm{\Txl^{\frac{1}{2}-a} g_\lambda(\Tx) \Txl^{\frac{1}{2}}}_{op}
	\leq&
	\sup_{u\in[0,\kappa^2]} |(u+\lambda)^{1-a}g_\lambda(u)|\\
	\leq&
	\sup_{u\in[0,\kappa^2]} |(u^{1-a}+\lambda^{1-a})g_\lambda(u)|\\
	\leq&
	2E'\lambda^{-a}\;.
	\end{align*}
	Thus we obtain 
	$$
	\norm{\Txl^{\frac{1}{2}-a} g_\lambda(\Tx)\left(\x^*\y-\Tx w_\lambda\right) }\leq
	2E'\lambda^{-a}\Delta_1^{\frac{1}{2}}
	\norm{\Sigma_\lambda^{-\frac{1}{2}}\left(\Tx w_\lambda-\x^*\y\right) }\;.
	$$
	Now, adding and subtracting $\Sigma_\lambda^{-\frac{1}{2}}\left(\Sigma w_\lambda-S^*f_\h\right) $ we obtain 
	\begin{align*}
	\norm{\Sigma_\lambda^{-\frac{1}{2}}\left(\Tx w_\lambda-\x^*\y\right) }
	\leq&
	\norm{\Sigma_\lambda^{-\frac{1}{2}}
		\left[ 
		\left(\Tx w_\lambda-\x^*\y\right) 
		-
		\left(\Sigma w_\lambda-S^*f_\h\right)
		\right] }
	+
	\norm{\Sigma_\lambda^{-\frac{1}{2}}\left(\Sigma w_\lambda-S^*f_\h\right)}\\
	\leq&
	\norm{\Sigma_\lambda^{-\frac{1}{2}}
		\left[ 
		\left(\Tx w_\lambda-\x^*\y\right) 
		-
		\left(\Sigma w_\lambda-S^*f_\h\right)
		\right] }
	+
	\norm{\Sigma_\lambda^{-\frac{1}{2}}S^*}_{op}
	\norm{S w_\lambda-f_\h}\\
	\leq&
	\Delta_2+c_gR\Phi(\lambda)\;,
	\end{align*}
	where in the last inequality we use Lemma \ref{deterministic_estimate} and that $\norm{\Sigma_\lambda^{-\frac{1}{2}}S^*}\leq 1$.
	We thus obtain that
	\begin{equation}\label{estimate_stability_1}
	\norm{\Txl^{\frac{1}{2}-a} g_\lambda(\Tx)\left(\x^*\y-\Tx w_\lambda\right) }
	\leq
	2E'\lambda^{-a}\Delta_1^{\frac{1}{2}}
	\left(\Delta_2+c_gR\Phi(\lambda) \right) \;.
	\end{equation}
	\textbf{\large Estimating $\norm{\Txl^{\frac{1}{2}-a} r_\lambda(\Tx)w_\lambda}$ :}\\
	Note that from the definition of $w_\lambda$ it holds that
	$$
	w_\lambda=g_\lambda(\Sigma)S^*\Phi(L)g_0=
	g_\lambda(\Sigma)\Phi(\Sigma)S^* g_0\;,
	$$
	and thus,
	$$
	\norm{\Txl^{\frac{1}{2}-a} r_\lambda(\Tx)w_\lambda}
	\leq
	R
	\norm{\Txl^{\frac{1}{2}-a} r_\lambda(\Tx)g_\lambda(\Sigma)\Phi(\Sigma)S^*}_{op}
	=
	R
	\norm{\Txl^{\frac{1}{2}-a} r_\lambda(\Tx)g_\lambda(\Sigma)\Phi(\Sigma)\Sigma^{\frac{1}{2}}}_{op}\;.
	$$
	Now we have
	\begin{align*}
	\norm{\Txl^{\frac{1}{2}-a} r_\lambda(\Tx)g_\lambda(\Sigma)\Phi(\Sigma)\Sigma^{\frac{1}{2}}}_{op}
	\leq&
	\norm{\Txl^{\frac{1}{2}-a} r_\lambda(\Tx)\Txl^{\frac{1}{2}}}_{op}
	\norm{\Txl^{-\frac{1}{2}}\Sigma_\lambda^{\frac{1}{2}}}_{op}
	\norm{\Sigma_\lambda^{-\frac{1}{2}}\Sigma^{\frac{1}{2}}}_{op}
	\norm{g_\lambda(\Sigma)\Phi(\Sigma)}_{op}\\
	\leq&
	\Delta_1^{\frac{1}{2}}
	\norm{\Txl^{1-a} r_\lambda(\Tx)}_{op}
	\norm{g_\lambda(\Sigma)\Phi(\Sigma)}_{op}\;.
	\end{align*}
	For the first term we get
	$$
	\norm{\Txl^{1-a} r_\lambda(\Tx)}
	\leq
	\sup_{u\in[0,\kappa^2]}
	|(u+\lambda)^{1-a}r_\lambda(u)|
	\leq
	2F'_q\lambda^{1-a}\;,
	$$
	while for the second term we have
	$$
	\norm{g_\lambda(\Sigma)\Phi(\Sigma)}_{op}
	\leq
	\sup_{u\in[0,\kappa^2]}
	|g_\lambda(u)\Phi(u)|\;.
	$$
	Now if $0<u\leq\lambda$, as $\Phi(u)$ is non-decreasing, $\Phi(u)\leq\Phi(\lambda)$, hence by \eqref{reg_functions_1} we obtain
	$$
	g_\lambda(u)\Phi(u)\leq E'\Phi(\lambda)\lambda^{-1}\;.
	$$ 
	When $\lambda\leq u\leq \kappa^2$, following from Lemma \ref{monotone}, there is a constant $c'_\Phi\geq 1$ such that 
	$$
	\Phi(u)u^{-1}\leq c'_\Phi \Phi(\lambda)\lambda^{-1}\;,
	$$
	thus, by \eqref{reg_functions_1}, we get
	$$
	g_\lambda(u)\Phi(u)=g_\lambda(u)u\Phi(u)u^{-1}
	\leq
	E'c'_\Phi \Phi(\lambda)\lambda^{-1}\;.
	$$
	Therefore for all $0<u\leq\kappa^2$, $g_\lambda(u)\Phi(u)\leq E'c'_\Phi \Phi(\lambda)\lambda^{-1}$ and we can conclude that
	\begin{equation} \label{estimate_stability_2}
	\norm{\Txl^{\frac{1}{2}-a} r_\lambda(\Tx)w_\lambda}_{op}
	\leq
	2R\Delta_1^{\frac{1}{2}}F'_q
	E'c'_\Phi \Phi(\lambda)\lambda^{-a}\;.
	\end{equation}

	\mbox{}\\
	\textit{\large Learning bounds}\\
	We are now ready to state the learning bound related to the regularized solution $\hat{w}_\lambda=g_\lambda(\Tx)\x^*\y$. By combining \eqref{error_decomposition}, \eqref{estimate_convergence}, \eqref{estimate_stability_0}, \eqref{estimate_stability_1}, \eqref{estimate_stability_2} we obtain that with probability at least $1-\delta$
	\begin{align*}
	\norm{L^{-a}\left(S\hat{w}_\lambda-f_\h\right)}_{\rho_\X}
	\leq&\quad
	c_g R \Phi(\lambda)\lambda^{-a}\\
	&+
	\Delta_1^{1-a}
	2E'\left(\Delta_2+c_gR\Phi(\lambda) \right)\lambda^{-a}\\
	&+
	\Delta_1^{1-a}
	2RF'_q
	E'c'_\Phi \Phi(\lambda)\lambda^{-a}\;.
	\end{align*}
	Since
	$$
	\Delta_2
	\leq
	\left(
	\frac{C_1}{n\lambda^{\frac{1}{2}\vee(1-\zeta)}}
	+
	\sqrt{
		\frac{C_2\Phi(\lambda)^2}{n\lambda}}
	+
	\sqrt{
		\frac{C_3\mathcal{N}(\lambda)}{n}
	}
	\right)
	\log\frac{2}{\delta}
	$$
	and $\log\frac{2}{\delta}\geq 1$ for all $\delta\in(0,1/2)$, we can rewrite the above inequality as
	$$
	\norm{L^{-a}\left(S\hat{w}_\lambda-f_\h\right)}_{\rho_\X}
	\leq
	\lambda^{-a}
	\left(
	\frac{\tilde{C}_1}{n\lambda^{\frac{1}{2}\vee(1-\zeta)}}
	+\left(\tilde{C}_2+ \frac{\tilde{C}_3}{\sqrt{n\lambda}}\right) \Phi(\lambda)
	+\tilde{C}_4\sqrt{\frac{\mathcal{N}(\lambda)}{n}}
	\right)
	\log\frac{2}{\delta}
	$$
	where 
	\begin{align*}
	\tilde{C}_1=&
	2\Delta_1^{1-a}E'C_1
	=
	16\;3^{1-a}E' \left( \kappa M+\kappa^2 E' \Phi(\kappa^2) \kappa^{-(2\zeta\wedge1)} \right) \\
	\tilde{C}_2=&
	c_gR+2\Delta_1^{1-a}E'R\left( c_g+c'_\Phi F'_q\right) \\
	\tilde{C}_3=&
	2\sqrt{C_2}\Delta_1^{1-a} E'
	=
	2\sqrt{96c_g^2 R^2 \kappa^2}3^{1-a} E'\\
	\tilde{C}_4=&
	2\sqrt{C_3}\Delta_1^{1-a} E'
	=
	2\sqrt{32(3B^2+4Q^2)}3^{1-a} E'\;.
	\end{align*}
	which complete the proof of the first part of the thesis.\\
	Computing the square of the previous we have
	$$
	\norm{L^{-a}\left(S\hat{w}_\lambda-f_\h\right)}_{\rho_\X}^2
	\leq
	3 \lambda^{-2a}
	\left[ 
	\left(
	\frac{\tilde{C}_1}{n\lambda^{\frac{1}{2}\vee(1-\zeta)}}
	\right)^2
	+\left(\tilde{C}_2+ \frac{\tilde{C}_3}{\sqrt{n\lambda}}\right)^2 \Phi(\lambda)^2
	+
	\left( 
	\tilde{C}_4\sqrt{\frac{\mathcal{N}(\lambda)}{n}}
	\right)^2
	\right] 
	\log^2\frac{2}{\delta}\;.
	$$
	Assuming $\lambda$ to be of the order $O\left( n^{-\theta}\right)$, for some $\theta\in(0,1)$, then
	$$
	\lim_{n\to\infty} \frac{1}{n\lambda} = 0
	\quad , \quad
	\lim_{n\to\infty} \Phi(\lambda) = 0\;,
	$$
	thus, assuming $n$ to be large enough, we can ignore the second order terms, hence we have that for some positive constant $C$ which does not depend on $n,\lambda,\delta$, it holds true that
	\begin{equation} \label{bound_4}
	\norm{L^{-a}\left(S\hat{w}_\lambda-f_\h\right)}_{\rho_\X}^2
	\leq
	C
	\lambda^{-2a}
	\left( 
	\Phi(\lambda)^2+\frac{\mathcal{N}(\lambda)}{n}
	\right) 
	\log^2\frac{2}{\delta}\;.
	\end{equation}	
	Now if we assume Holder condition $\Phi(u)=u^r$ and $\mathcal{N}(\lambda)\leq c_\gamma \lambda^{-\frac{1}{\gamma}}$ then \eqref{bound_4} implies
	$$
	\norm{L^{-a}\left(S\hat{w}_\lambda-f_\h\right)}_{\rho_\X}^2
	\leq
	C \lambda^{-2a}
	\left( 
	\lambda^{2r}+\frac{\lambda^{-\frac{1}{\gamma}}}{n}
	\right) 
	\log^2\frac{2}{\delta}\;.
	$$
	By balancing the two terms 
	$$
	\lambda^{2r}=\frac{\lambda^{-\frac{1}{\gamma}}}{n}
	$$
	we get the choice for the regularization parameter 
	$$
	\lambda=O(n^{-\frac{\gamma}{2\gamma r+1}}) 
	$$
	which in the case $a=0$ directly implies \eqref{bound_3}.\\
\end{proof}

\subsection{Proof of Theorem \ref{basic_result} and \ref{main_result}}

\begin{proof}\mbox{}\\
	Both Theorem \ref{basic_result} and \ref{main_result} follow from Theorem \ref{learning_bound} by choosing $\lambda=\frac{1}{t}$ for gradient descent and $\lambda=\frac{1}{t^2}$ for accelerated methods.\\
	However in the estimation of the term $\norm{\Txl^{\frac{1}{2}-a} r_\lambda(\Tx)w_\lambda}$ of the stability \eqref{estimate_stability_0} we use the the fact that the qualification of the method is at least $1$.
	We can obtain the same result for Nesterov method by assuming furthermore that the parameter $r$ of the source condition to be larger than $1/2$.  We have
	$$
	\norm{\Txl^{\frac{1}{2}} r_\lambda(\Tx)w_\lambda}
	\leq
	\norm{\Txl^{\frac{1}{2}} r_\lambda(\Tx)}_{op} \norm{w_\lambda}
	=
	\norm{ \left(\Tx +\lambda\I\right)^{\frac{1}{2}} r_\lambda(\Tx)}_{op} \norm{w_\lambda}\;.
	$$
	Thanks to the source condition we have that the norm $w_\lambda$ is bounded
	$$
	\norm{w_\lambda}=\norm{g_\lambda(\Sigma)\Sigma^{r}S^* g_0}
	\leq
	\norm{g_\lambda(\Sigma)\Sigma^{r+\frac{1}{2}}}_{op}
	\norm{g_0}
	\leq 
	\kappa^{2r-1} E' \norm{g_0}\;.
	$$
	On the other hand 
	$$
	\norm{\Txl^{\frac{1}{2}} r_\lambda(\Tx)}_{op}
	\leq
	\sup_{u\in[0,\kappa^2]} \left|\left(\sqrt{ u+\lambda}\right) r_\lambda(u)\right|
	\leq 
	2 F'_q \lambda^{1/2}\;.
	$$
	This complete the proof.\\
\end{proof}

\end{document}